\newcites{Supp}{REFERENCES}
\titlespacing*{\section}{0pt}{*2}{*1} 
\titleformat{\section}[block]{\large\bfseries}{\thesection}{1em}{}
\newtheorem{proposition}{Proposition}
\newtheorem{theorem}{Theorem}
\newtheorem{lemma}{Lemma}
\newtheorem{remark}{Remark}
\newtheorem{corollary}{Corollary}
\newcommand{\given}{\middle\vert }
\newcommand{\bs}[1]{\boldsymbol{#1}}
\newcommand{\R}{{\mathbb R}}
\renewcommand{\bf}[1]{\mathbf{#1}}
\DeclareMathOperator*{\argmax}{arg\,max}
\DeclareMathOperator*{\argmin}{arg\,min}
\newcommand{\appropto}{\mathrel{\vcenter{
  \offinterlineskip\halign{\hfil$##$\cr
    \propto\cr\noalign{\kern2pt}\sim\cr\noalign{\kern-2pt}}}}}
\newcommand{\showrevisions}{0}
\newif\ifshowrevisions
\newcommand{\revised}[1]{%
    \if1\showrevisions%
        \textcolor{blue}{#1}%
    \else%
        #1%
    \fi%
}
\begin{document}

\def\spacingset#1{\renewcommand{\baselinestretch}%
{#1}\small\normalsize} \spacingset{1}

\title{
NeuroPMD: Neural Fields for Density Estimation on Product Manifolds}
\author{
\parbox{\textwidth}{
    \centering
    William Consagra\\
  Department of Statistics, University of South Carolina \\
  Zhiling Gu \\
  Department of Biostatistics, Yale University \\
  Zhengwu Zhang \\
  Department of Statistics and Operations Research, University of North Carolina at Chapel Hill
}
}
\maketitle
\date{}

\bigskip
\begin{abstract}
We propose a novel deep neural network methodology for density estimation on product Riemannian manifold domains. In our approach, the network directly parameterizes the unknown density function and is trained using a penalized maximum likelihood framework, with a penalty term formed using manifold differential operators. The network architecture and estimation algorithm are carefully designed to handle the challenges of high-dimensional product manifold domains, effectively mitigating the \revised{computational} curse of dimensionality that limits traditional kernel and basis expansion estimators, as well as overcoming the convergence issues encountered by non-specialized neural network methods. Extensive simulations and a real-world application to brain structural connectivity data highlight the clear advantages of our method over the competing alternatives. 
\end{abstract}

\noindent%
{\it Keywords:}  Deep Neural Network, Density Estimation, Functional Data Analysis, Manifold Domain, Neural Field
\vfill
\thispagestyle{empty}

\newpage
\spacingset{1.5} 
\pagenumbering{arabic} 

\section{Introduction}\label{sec:intro}
This work considers the problem of density (and intensity) estimation on the $D$-product manifold, denoted $\Omega := \bigtimes_{d=1}^D\mathcal{M}_{d}$, where each marginal domain $\mathcal{M}_{d}$ is a closed Riemannian manifold of dimension $p_{d}\ge 1$. 
\revised{Our particular focus is on the big data (large sample size
$n$) and high-dimensional product domain $(D\ge 2$ with $\sum_{d=1}^Dp_{d}\ge 4$) setting.}
\par 
Product manifold-based point set data arise in a wide range of fields, including neuroscience \citep{moyer2017}, genomics/proteomics \citep{zoubouloglou2023}, and climate science \citep{Begu2024}. Our work is specifically motivated by an application in the emerging field of \textit{structural connectomics}, a subfield of neuroscience that studies the physical pattern of neural connections formed by white matter fibers across the cerebral cortex \citep{chung2021}.
These connections can be inferred using modern high-resolution neuroimaging techniques, which map the endpoints of the fibers directly onto the cortical surface \citep{STONGE2018524} (see Figure \ref{fig:structural_connectivity_data} panel A). 
The resulting endpoint connectivity data can be considered a point set on $\Omega = \bigtimes_{d=1}^2\mathcal{M}_{d}$, where $\mathcal{M}_{d}\subset\mathbb{R}^3$ denotes the cortical surfaces. These connections are visualized as paired red and blue points in Figure \ref{fig:structural_connectivity_data} panel B. The goal is to estimate the density function of the connectivity on $\Omega$, which is commonly referred to as the continuous structural connectivity in the literature \citep{moyer2017,consagra2023continuous}. However, this high-dimensional, big data setting, with $D = 2,p_{1}=p_{2}=2$, and $n > 10^6$, poses significant computational and statistical challenges for classical density estimators. Consequently, many 
existing connectivity analysis methods coarsen the connectivity data using a low-resolution brain atlas, masking high-resolution connectivity information that could offer critical insights into key neuroscience questions. 
\begin{figure}[!ht]
    \centering
    \includegraphics[width=0.9\textwidth]{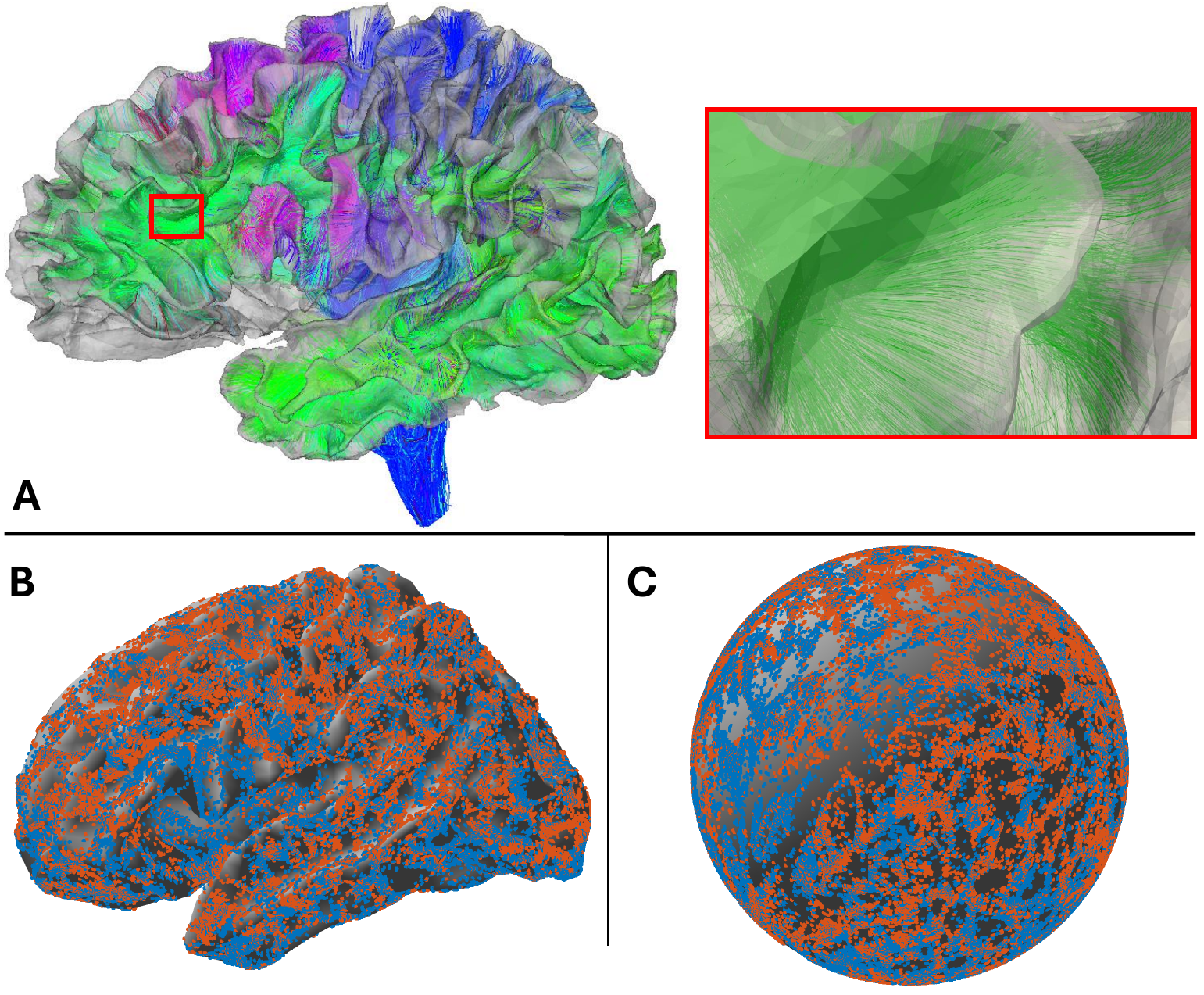}
    \caption{Product manifold point process data from a neuroscience connectomics application. A) Cortical surface mesh of the left hemisphere with representative white matter fiber connections. The red zoomed-in region highlights fibers terminating on the surface. B) Observed endpoints on the surface. Each red point corresponds with a single blue point, which together form the surface coordinates of a connection. C) Observed endpoints represented under spherical parameterization.}
    \label{fig:structural_connectivity_data}
\end{figure}
\subsection{Related Work}\label{ssec:background}
A substantial body of work exists for nonparametric kernel density estimation (KDE) on Riemannian manifolds \citep{pelletier2005,kim2013,bates2014,berry2017,cleanthous2020,ward2023}. These methods can naturally extend to the product space $\Omega$ by forming a $D$-product of the marginal kernels. However, KDE becomes computational\revised{ly} problematic for large $n$ and \revised{$\sum_{d=1}^D p_d$}, as naive point-wise estimation requires at least $O(n\sum_{d=1}^D p_d)$ operations. Computation is further complicated on the manifold domain, as many methods require repeated calculation of geodesic distances or exponential/log map, which can incur additional computational overhead. While approximation techniques for accelerating computation exist for Euclidean spaces \citep{Bentley1975,Karppa2022}, adapting these methods to high-dimensional manifold domains is complicated and cumbersome.  Additionally, bandwidth selection for KDE-based methods is notoriously challenging for high-dimensional domains, often resulting in locally excessive and/or insufficient smoothing due to the varying smoothness of the underlying function across the domain \citep{Wang2019}.
\par
An alternative line of research models the density (or intensity) function using parametric approaches based on linear basis expansion \revised{\citep{goodd1971nonparametric,ferraccioli2021,arnone2022,Begu2024}}. These methods alleviate the computational burden of the non-parametric approaches in large $n$ settings, as computational costs generally scale with the number of basis functions rather than the number of observations. \revised{Regularization is typically imposed via differential operators, with common choices including Laplacian-based penalties in Euclidean domains \citep{Simpson2016,ferraccioli2021} or, on manifolds, its extension, the Laplace-Beltrami operator \citep{arnone2022}.} Additionally, many basis systems serve as universal approximators, capable of representing most functions of interest by selecting a sufficiently large rank. However, these methods are typically limited to relatively low-dimensional domains due to the computational curse of dimensionality, as the \revised{number of parameters (basis coefficients)} often grows exponentially with $D$ to maintain approximation accuracy.
Data-driven approaches have been proposed to construct more parameter-efficient functional basis for representation 
\citep{Chen2017,wang2020,wang2022fda,consagra2023fda}.
However, these methods often rely on the availability of replicated functions and assume a direct observation model (signal plus noise framework), neither of which hold in our setting. 
\par 

Recent developments in the machine learning community have introduced flexible density/intensity estimators based on neural networks \citep{NIPS2017_f45a1078,NIPS2017_6463c884,NEURIPS2023_9d30c2de}, though these approaches primarily focus on the Euclidean setting (temporal and spatiotemporal domains). A notable exception is the work by \cite{tsuchida2024exact}, who extended the squared neural density estimation framework of \citep{tsuchida2023} to some simple product manifolds. However, their method relies on custom-designed activation functions that facilitate a closed-form expression of the normalization integral under the Poisson process likelihood, which limits flexibility. Moreover, their network estimator is shallow, consisting of only two layers, and deep extensions are non-trivial due to the aforementioned closed-form integrability requirement. Although deep networks 
are not strictly necessary for universal approximation in the infinite-width limit \citep{hornik1989}, empirical 
\citep{krizhevsky2012} 
and theoretical 
\citep{telgarsky2015representation,eldan2016} 
findings suggest that deep architectures can be significantly more parameter-efficient than wider ones. Thus, the restriction to shallow architectures is a major limitation, as deep networks typically require far fewer parameters to approximate functions of equivalent complexity.

\subsection{Our Contribution}
In this work, we develop a general deep neural network-based density estimator for product Riemannian manifolds. By leveraging the parameter efficiency of deep, fully connected neural networks, our approach avoids the computational curse of dimensionality that plagues classical basis function estimators. Once trained, point-wise inference requires only a forward pass through the network, which can be efficiently executed on a GPU, offering significant improvements in computational scalability compared to KDE.
\par 
While the parameter efficiency and inference speed of deep neural network based function representations are highly desirable in our setting, their estimation is complicated by the so-called \textit{spectral bias}, the tendency of the network to learn overly smooth function approximations and exhibit slow convergence to high frequency components \citep{rahaman2019}. While this phenomenon has been linked to strong generalization properties in certain tasks \citep{Cao2019TowardsUT}, it is typically undesirable when using neural networks to parameterize continuous functions, as it can make learning important high-frequency functional features difficult. To combat this issue, we propose a first-layer basis expansion inspired by Fourier features \citep{Rahimi2007,tancik2020}. Specifically, our approach constructs these basis functions, termed \textit{encoding functions}, by sampling random subsets from the full tensor product set of marginal Laplace-Beltrami eigenfunctions. This design enables the efficient modeling of rich, high-frequency functional features while simultaneously avoiding exponential parameter growth that would result from using a complete tensor product basis, i.e., the computational curse of dimensionality. We demonstrate this property empirically and, for the special case of the $D$-dimensional torus, theoretically, showing that our approach
generates a representation space where elements are linear combinations of a large set of sinusoidal basis functions whose number grows rapidly with network depth, underscoring \revised{how layer composition expands our model's representational capacity.}
\par 
Our estimation framework employs a penalized maximum likelihood objective optimized via a custom stochastic gradient ascent based algorithm.  Explicit computation of high-dimensional integrals in the objective function is avoided by forming unbiased estimates of their gradients via Monte-Carlo approximations, thereby avoiding any re-introduction of the curse of dimensionality during network training. To control the smoothness of the estimator, we incorporate a penalty term derived from the Laplace-Beltrami operator, offering computational forms in both intrinsic and extrinsic coordinates for convenience. \revised{We evaluate our method on both challenging simulated and real-world datasets, demonstrating clear improvements over classical approaches, including KDE and linear basis expansion, as well as neural network methods, which we show suffer from spectral bias, an issue our customized architecture and training procedure effectively overcomes.}
\par 
The rest of this paper is organized as follows: Section \ref{sec:models_and_formulation} provides relevant background and formalizes our estimation problem. Section~\ref{ssec:deep_estimator} details the proposed network architecture and provides some theoretical analysis of its \revised{representational capacity}. Section~\ref{sec:estim_algo_comp} proposes an estimation algorithm for learning network parameters along with supporting implementation details including hyperparameter selection. Section \ref{sec:experiments} presents simulation studies and a real data application in brain structural network modeling. Concluding remarks are provided in Section \ref{SEC:conclusion}. Proofs for all propositions and theorems can be found in Supplemental Section S1.

\section{Background and Models}
\label{sec:models_and_formulation}
\subsection{Geometric Preliminaries}
Let $(\mathcal{M}_{d},g_{d})$ be a closed $p_{d}$-dimensional Riemannian manifold with $\mathcal{M}_{d}\subset\mathbb{R}^{m_{d}}$ and induced metric $g_{d}$. We consider the product space $\Omega = \bigtimes_{d=1}^D(\mathcal{M}_d, g_d)$, formed by the Cartesian product of the manifolds $\mathcal{M}_{d}$. This space has a total dimension of $\sum_{d=1}^Dp_{d}$ and inherits a way to measure distance and volumes from the individual metrics $g_d$ of the constituent spaces $\mathcal{M}_d$ via the metric $\sum_{d=1}^Dg_{d}$. Denote $d\omega$ as the product volume form on $\Omega$, analogous to the Lebesgue measure, constructed from the measures on the marginal manifolds $\mathcal{M}_d$ using the metrics $g_d$.
\par 
Analogous to the Laplacian on $\mathbb{R}^{m_{d}}$, a smooth Riemannian manifold $\mathcal{M}_{d}$ is equipped with a linear operator $\Delta_{\mathcal{M}_{d}}:C^{\infty}(\mathcal{M}_{d}) \mapsto C^{\infty}(\mathcal{M}_{d})$, known as the \textit{Laplace-Beltrami operator} (LBO), where $C^{\infty}(\mathcal{M}_{d})$ is the space of smooth functions on $\mathcal{M}_{d}$. To define the LBO, we must consider the differential geometry of $\mathcal{M}_{d}$. For any point $x_{d}\in\mathcal{M}_{d}$, there exists a homeomorphic local parameterization $l_{d}:\mathcal{O}\subset\mathbb{R}^{p_{d}}\mapsto \mathcal{V}\bigcap \mathcal{M}_{d}
\subset\mathbb{R}^{m_{d}}$, where $\mathcal{O}$ is an open set of $\mathbb{R}^{p_{d}}$, and $\mathcal{V}$ is an open set of $\mathbb{R}^{m_{d}}$. 
We denote the local coordinates corresponding to $x_{d}$ by $\gamma^{(d)} = l_d^{-1}(x_d) \in \mathcal{O}$. Then the tangent space $T_{x_{d}}(\mathcal{M}_{d})$ at $x_{d}$ has a basis given by the vectors  
$\{\frac{\partial l_{d}}{\partial \gamma_{i}^{(d)}}(\gamma^{(d)})\}_{i=1}^{p_{d}}$, which are elements in $\mathbb{R}^{m_{d}}$. The induced metric $g_{d}$ at $x_{d}$ can be represented by a $p_{d}\times p_{d}$ matrix with elements  defined by
$[G^{(d)}(\gamma^{(d)})]_{ij} = \langle \frac{\partial l_{d}}{\partial \gamma_{i}^{(d)}}(\gamma^{(d)}), \frac{\partial l_{d}}{\partial \gamma_{j}^{(d)}}(\gamma^{(d)})\rangle$, such that for any $u_1,u_2\in T_{x_{d}}(\mathcal{M}_{d})$, $g_{d}(u_1,u_2) = u_1^{\intercal}G^{(d)}(\gamma^{(d)})u_2$. The LBO acting on a smooth function  $v_{d}:\mathcal{M}_{d}\mapsto\mathbb{R}$ is then defined as 
\begin{equation}\label{eqn:LBO_standard_def_marg}
    \Delta_{\mathcal{M}_{d}}[v_{d}](x_{d}) = \frac{1}{\sqrt{\det G^{(d)}(\gamma^{(d)})}} \sum_{i,j=1}^{p_{d}} \frac{\partial}{\partial \gamma_i^{(d)}} \left( \sqrt{\det G^{(d)}(\gamma^{(d)})} \left[G^{(d)}(\gamma^{(d)})\right]^{-1}_{ij} \frac{\partial v_{d}}{\partial \gamma_j^{(d)}} (l_d(\gamma^{(d)})) \right).
\end{equation}
Extending this definition to the manifold \(\Omega = \bigtimes_{d=1}^D \mathcal{M}_d\), we utilize the fact that the LBO on a product manifold is the sum of the LBOs of its component manifolds: $\Delta_{\Omega} = \sum_{d=1}^D\Delta_{\mathcal{M}_{d}}$ \citep{Canzani2013}.  
Thus, for a smooth function \(v: \Omega \to \mathbb{R}\), the LBO is given by

\begin{equation}\label{eqn:LBO_standard_def_joint}
\begin{aligned}
    \Delta_{\Omega}[v](x_{1}, \dots, x_{D}) & = \sum_{d=1}^D \frac{1}{\sqrt{\det G^{(d)}(\gamma^{(d)})}} \\ 
    & \times \sum_{i,j=1}^{p_{d}} \frac{\partial}{\partial \gamma_i^{(d)}} \left( \sqrt{\det G^{(d)}(\gamma^{(d)})} \left[G^{(d)}(\gamma^{(d)})\right]^{-1}_{ij}
  \frac{\partial v}{\partial \gamma_j^{(d)}} (x_{1}, \dots, l_d(\gamma^{(d)}), \dots, x_{D}) \right).
\end{aligned}
\end{equation}
\subsection{Statistical Model and Problem Formulation}
Let $o = \{{\bs x}_{1}, \revised{{\bs x}_{2}}, \ldots,{\bs x}_{n}\}\subset \Omega$, where ${\bs x}_i := (x_{1i},x_{2i}, \ldots, x_{Di}) \in \Omega$, be an iid set of observations from density function $f\in \mathcal{H}$, with $\mathcal{H}:= \{f:\Omega\mapsto\mathbb{R}^{+}:\int_{\Omega}fd\omega = 1\}$. In this work, we focus on likelihood-based estimation of $f$ from $o$. To handle the positivity constraint on $f$, we adopt a standard log transformation approach and target the log-density, denoted here as $v = \log f$, with the corresponding transformed function space $\tilde{\mathcal{H}} =\{v:\Omega\mapsto\mathbb{R}:\int\exp \left(v\right)d\omega=1\}$. As $v$ is infinite dimensional, its estimation from finite samples $o$ requires regularization to avoid undesirable behavior, e.g., pathological estimates converging to a mixture of Dirac functions centered on each observation. A commonly used regularizer for function approximation is the \textit{roughness penalty},
which controls smoothness by penalizing the $L^2$-norm of the Laplacian operator applied to the candidate function. Since the LBO generalizes the Laplacian to smooth manifold domains, it naturally serves as a basis for defining a roughness penalty in our context. Formally, assuming $v:\Omega\mapsto\mathbb{R}$ is smooth, we consider penalties of the form:
\begin{equation}\label{eqn:manifold_roughess}
    R_{\tau}(v) =  \tau\int_{\Omega}\left[\Delta_{\Omega} v\right]^2d\omega.
\end{equation}
\par 
Putting this all together, we aim to find the function $v$ that maximizes the penalized log-likelihood: 
\begin{equation}\label{eqn:map_pp}
\begin{aligned}
        \hat{v} = 
        & \argmax_{v\in\tilde{\mathcal{H}}} \log p(o|v) - R_{\tau}(v) =  \argmax_{v\in\tilde{\mathcal{H}}} \sum_{i=1}^n v({\bs x}_i) - R_{\tau}(v) \\ 
        = & \argmax_{v\in C^{\infty}(\Omega)} \sum_{i=1}^n v({\bs x}_i) - n\int_{\Omega}\exp(v)d\omega  - R_{\tau}(v), \\ 
\end{aligned}
\end{equation}
where $\tau>0$ is the regularization strength.
The equality in the second line of \eqref{eqn:map_pp} is a consequence of
Theorem 3.1 in \citep{Silverman1982}, which is deployed here in order to remove the integrability constraint in $\tilde{\mathcal{H}}$ from the  optimization problem. Notably, \eqref{eqn:map_pp} can be alternatively motivated from a Bayesian perspective as a MAP estimator under a function space smoothness prior proportional to $\exp\left(-R_{\tau}(v)\right)$ \citep{GASKINS1971}. However, as this work focuses solely on point estimation, the Bayesian framing is incidental.

\begin{remark}
    The optimization problem in \eqref{eqn:map_pp} provides an equivalent formulation for intensity function estimation. Specifically, assuming $o \sim O$ and ${O}$ is an inhomogeneous Poisson process, the penalized maximum likelihood intensity function estimate is simply $n\widehat{f}$. For more details on this relationship, we refer the reader to Appendix B of \cite{Begu2024}.
\end{remark}

\section{Deep Neural Field for Product Manifold Density Modeling} 
\label{ssec:deep_estimator}
In this section, we propose our model for the log-density and study its \revised{representational properties}. To avoid the severe computational issues of nonparametric estimators for functions over multidimensional domains, we restrict our attention to parametric models for $v$. That is, we seek solutions $\widehat{v}_{\bs\theta}$ to \eqref{eqn:map_pp}, parameterized by some finite dimensional vector $\bs\theta$. Traditional parametric approaches often lack flexibility (single index models, additive models) or scale poorly with domain dimensionality (tensor product basis). Alternatively, deep neural network-based parameterizations, commonly referred to as neural fields (NF) or implicit neural representations (INRs), have gained significant recent attention for their ability to provide high-fidelity, parameter-efficient representations of continuous functions \citep{Xie2022}. 
\par 
In this work, we develop a NF-based approach for modeling the log-density $v$. The core architecture is a fully connected multi-layer perceptron (MLP), defined by:
\begin{equation}\label{eqn:nf}
    \begin{aligned}
        &\mathbf h^{(0)} = \bs\eta(\boldsymbol{x}) \\
        &\mathbf h^{(l)} = \alpha^{(l)}(\mathbf W^{(l)}\mathbf h^{(l-1)} + \mathbf b^{(l)}),\quad l=1,\ldots,L-1 \\
        &v_{\bs \theta}(\bs x) := \mathbf W^{(L)}\mathbf h^{(L-1)}, 
    \end{aligned}
\end{equation}
where $\bs x:=(x_1, \ldots, x_D)\in\Omega$, $\bs\theta := (\text{vec}(\mathbf W^{(1)}), \ldots, \text{vec}(\mathbf W^{(L)}),\text{vec}(\mathbf b^{(1)}), \ldots, \text{vec}(\mathbf b^{(L)}))\in \Theta$ consists of the weights $\mathbf W^{(l)}\in\mathbb{R}^{H_{l}\times H_{l-1}}$ and biases $\mathbf b^{(l)}\in\mathbb{R}^{H_{l}}$ of the network, $\alpha^{(l)}$ are the activation functions, and $\bs\eta:\Omega\mapsto\mathbb{R}^{H_0}$ is \revised{a} multivariate function on 
domain $\Omega$. 
\par 
We complete the definition of our functional model \eqref{eqn:nf}
by formally specifying the encoding function $\bs\eta$ in Section~\ref{sssec:RLBO_encoding} and the activation functions $\alpha^{(l)}$, along with the weight initialization scheme, in  Section~\ref{sssec:activation_funcs}. \revised{In Section~\ref{ssec:approx_theory}, we analyze the effect of these custom design choices by first discussing the approximation properties for the shallow model ($L=1$) and then, for the special case of hypertoroidal domains, characterizing the representation capacity of deep architectures ($L\ge2$).}

\subsection{Random Laplace-Beltrami Eigenfunction Encoding}\label{sssec:RLBO_encoding}
We now outline the construction of the encoding function $\bs\eta$. To avoid computational issues associated with direct operations on the full product space, our design uses randomized basis functions formed from products of the eigenfunctions of the LBO on the marginal manifolds. While this provides a simple computationally efficient way to design a basis for a potentially high-dimensional product manifold, the encoding functions resulting from this approach are always separable, which may be undesirable. Therefore, we introduce an optional step that rotates the eigenbasis within each eigenspace, enabling the formation of non-separable eigenfunctions for a more flexible encoding, while maintaining the frequency of the original basis.
\par 
For a smooth Riemannian manifold $\mathcal{M}$, the eigenfunctions $\phi_k:\mathcal{M}\mapsto\mathbb{R}$ of the LBO are defined as the solutions to the operator equation 
\begin{equation}\label{eqn:lbo_eigenfunctions}
    \Delta_{\mathcal{M}} \phi_k  = -\lambda_{k}\phi_k, \quad k=1,2,\ldots.
\end{equation}
These eigenfunctions 
form an orthonormal basis for the space $L^2(\mathcal{M})$, with the eigenvalues $\lambda_k$ forming a non-decreasing sequence of non-negative scalars, analogous to the frequencies in Fourier analysis. Consequently, finite rank-$K$ sets $\{\phi_{k}\}_{k=1}^K$ are often used as basis systems for function representation. For instance, popular basis systems such as the Fourier series and the spherical harmonics are special cases when $\mathcal{M}:=\mathbb{S}^1$ and $\mathcal{M}:=\mathbb{S}^2$, respectively. For general manifolds, these eigenfunctions typically cannot be analytically derived, requiring the use of numerical methods to approximate the solutions to \eqref{eqn:lbo_eigenfunctions} \citep{reuter2009}. 
\par 

For a product manifold $\Omega = \bigtimes_{d=1}^D(\mathcal{M}_d, g_d)$ equipped with the product metric $\sum_{d=1}^Dg_{d}$, the LBO decomposes as $\Delta_{\Omega} = \sum_{d=1}^D\Delta_{\mathcal{M}_{d}}$. Denote the multi-index $\boldsymbol{i}=(i_1,\ldots,i_{D})$. The functions defined by the tensor product of the marginal eigenfunctions, $\psi_{\boldsymbol{i}}:=\prod_{d=1}^D\phi_{i_{d}}$, form a complete basis of eigenfunctions for $L^2(\Omega)$, with corresponding eigenvalues $\sum_{d=1}^D\lambda_{i_{d}}$, where $\lambda_{i_{d}}$ is the marginal eigenvalue for $\phi_{i_d}$ \citep{Canzani2013}. Drawing from classical function approximation theory, a seemingly natural way to define $\boldsymbol{\eta}$ in \eqref{eqn:nf} is to use the basis functions formed by a finite tensor product of marginal eigenfunctions. Specifically, denote $\lambda_{d,\text{max}}\in\mathbb{R}^{+}$ as the truncation of the spectrum along the $d$'th domain and define the multi-index set $\mathcal{I}_{\lambda} = \{\boldsymbol{i}: \sum_{d=1}^D\lambda_{i_{d}}=\lambda, \lambda_{i_{d}}\le\lambda_{d,\text{max}},\forall d \}$. Define the function
$\boldsymbol{\psi}_{\lambda} := \left(\psi_{\boldsymbol{i}}: \psi := \prod_{d=1}^D \phi_{i_{d}}, \boldsymbol{i}\in \mathcal{I}_{\lambda}\right):\Omega\mapsto \mathbb{R}^{s_{\lambda}}$, where $s_{\lambda}$ is the multiplicity of $\mathcal{I}_{\lambda}$. With slight abuse of notation, the collection of tensor product basis functions is given by
$\bigcup_{\lambda \le \lambda_{\text{max}}}\boldsymbol{\psi}_{\lambda}$, where $\lambda_{\text{max}}=\sum_{d=1}^D\lambda_{d,\text{max}}$. 
However, as the number of these basis functions grows exponentially with $D$, 
this choice causes the column space of the first-layer weight matrix $\boldsymbol{W}^{(1)}$ in \eqref{eqn:nf} to grow at the same exponential rate, resulting in significant computational challenges during network training. To avoid this manifestation of the computational curse of dimensionality, a different approach is necessary.

\par 

Inspired by random Fourier feature encodings for functional data in $\mathbb{R}^{D}$, we propose to randomly sample without replacement $K$ LBO eigenfunctions $\psi_{1}, \ldots, \psi_{K}$ from the tensor product set $\bigcup_{\lambda \le \lambda_{\text{max}}}\boldsymbol{\psi}_{\lambda}$ and take $\bs{\eta}(x_1, \ldots, x_{D}) := (\psi_{1}(x_1, \ldots, x_D), \ldots, \psi_{K}(x_1, \ldots, x_{D}))^{\intercal}$. Since $K$ is an architecture hyperparameter and can be chosen independently of $D$, this approach avoids the curse of dimensionality associated with using the full tensor product basis system.
\par

When using the raw tensor products, all basis functions in the encoding $\bs\eta$ are \textit{separable}, that is, they can be written as products over the marginal domains. To gain more flexibility, we introduce an optional basis rotation step. Specifically,  let $\boldsymbol{A}_{\lambda}\in \mathbb{R}^{s_{\lambda}\times s_{\lambda}}$ be an orthogonal matrix, then it is easy to show that $\tilde{\boldsymbol{\psi}}_{\lambda} = \boldsymbol{A}_{\lambda}\boldsymbol{\psi}_{\lambda}$ is a basis  for the same eigenspace: $\text{span}\left(\boldsymbol{\psi}_{\lambda}\right)$. When $\boldsymbol{A}_{\lambda}\neq \boldsymbol{I}$, the basis functions $\tilde{\boldsymbol{\psi}}_{\lambda}$ are \textit{non-separable}, meaning they cannot be written as a product of functions over the marginal domains $\mathcal{M}_{d}$. Although in theory functions $\boldsymbol{\psi}_{\lambda}$ and $\tilde{\boldsymbol{\psi}}_{\lambda}$ span the same space, we have empirically observed that the choice of basis can have a significant impact on the algorithmic convergence (the spectral bias), and hence can be tuned to accelerate the training of \eqref{eqn:nf}. For further discussion and empirical analysis of this effect, see supplemental Section S3.3 and S5.3.1. Putting this together, we define the first layer encoding function as:
\begin{equation}\label{eqn:random_LBO_encoding}
\begin{aligned}
    &\psi_{1}, \ldots, \psi_{K}\overset{u.w.o.r.}{\sim}\bigcup_{\lambda \le \lambda_{\text{max}}}\boldsymbol{A}_{\lambda}\boldsymbol{\psi}_{\lambda} \\
        &\bs{\eta}(x_1, \ldots, x_{D}) := (\psi_{1}(x_1, \ldots, x_D), \ldots, \psi_{K}(x_1, \ldots, x_{D}))^{\intercal},
\end{aligned}
\end{equation}
where \textit{u.w.o.r.} indicates uniform sampling w/o replacement. 
\par 
In summary, the first layer encoding function $\bs\eta$ consists of \revised{$K$} randomly sampled \revised{tensor product} LBO eigenfunctions \revised{(optionally rotated), thereby controlling the input width of \eqref{eqn:nf} with hyperparameter $K$ and avoiding exponential parameter growth in $D$.} 

\subsection{Hidden Layer Activations and Initialization}\label{sssec:activation_funcs}
To complete the definition of our deep intensity estimator \eqref{eqn:nf}, we define the activation functions $\alpha^{(l)}$ as sinusoidal functions for all hidden layers, i.e., $\alpha^{(l)}(\cdot) := \sin(\cdot)$, $\forall l$. Despite historical reticence in using non-monotonic, periodic activations due to observed pathological effects on the loss surface \citep{parascandolo2017taming} and perceived difficulty in training \citep{osti_5470451}, \cite{sitzmann2020} demonstrate that with a carefully designed weight initialization scheme--where the output distribution at initialization is independent of the network depth and the input to each hidden activation is a standard normal--deep networks with hidden sinusoidal activations can be trained stably. Empirical results indicate that sinusoids with proper initialization can substantially improve convergence speed compared to traditional activations \citep{Faroughi2024}, a finding also supported by our experiments.

\subsection{\revised{Some Properties} of the Neural Field Architecture}\label{ssec:approx_theory}
\revised{The ultra-high-dimensionality of the parameter space $\Theta$ and strong non-linearities make theoretic analysis of deep neural network estimators notoriously challenging. For the proposed architecture \eqref{eqn:nf}, this difficulty is compounded by our use of non-standard, non-monotonic sinusoidal activation functions and the random encoding $\boldsymbol{\eta}$. Indeed, understanding the approximation properties of neural networks with random feature encoding is still in its infancy, with existing results largely restricted to shallow architectures and requiring strong assumptions on the activation functions \citep{hsu2021approximation,li2023powerful}. In this context, and using the fact that a shallow $L=1$ layer NF of the form \eqref{eqn:nf} is equivalent to a linear combination of randomly sampled tensor-product LBOs, we can adapt results on the approximation properties of random Fourier Features \citep{bach2017equivalence} to show that such networks are universal approximators for sufficiently large $K$, and quantify how $K$ must scale with target accuracy. These results are presented in Supplemental Section S3.1.}
\par 
\revised{While approximation properties for $L=1$ serve as a useful baseline guarantee, it does not characterize the effect of depth ($L\ge 2$), which is a key driver of the recent empirical success of neural networks. To provide some insight into the role of depth, we consider the special case of $\mathcal{M}_{d}=\mathbb{S}^1$, i.e. $\Omega$ is a $D$-torus. In this setting, the (separable) tensor product LBO eigenfunctions have the analytic form:
$
\psi_{\boldsymbol{i}_k}(\bs x) = \pi^{-D/2}\prod_{d=1}^D\cos(i_{k,d}x_d -z_{k,d}\pi/2), z_{k,d}\in \{0,1\}. 
$
The following theorem provides
a compact interpretable form for the types of functions that can be represented by the
architecture~\eqref{eqn:nf} for the case of $\mathbb{T}^{D}$.}
\revised{
\begin{theorem}
\label{thm:representation_space_Td}
Assume all weights and biases are bounded, 
$0< |\mathbf b^{(l)}|<C_b<\infty, 0<|\mathbf W^{(l)}| <C_w<\infty$, for all $1\leq l\leq L-1$.
Let $ \Omega = \mathbb{T}^D = \bigtimes_{d=1}^D\mathbb{S}^1$, $v_{\bs{\theta}}: \mathbb{T}^D  \rightarrow \mathbb{R}$ be an NF of the form \eqref{eqn:nf},  with $\boldsymbol{\eta}:=(\psi_{\boldsymbol{i}_{1}},...,\psi_{\boldsymbol{i}_{K}})^{\intercal}:\mathbb{T}^{D}\mapsto\mathbb{R}^{K}$, where elements are samples $\boldsymbol{i}_{k}\overset{iid}{\sim}\text{Unif}(\underset{\lambda \le \lambda_{\text{max}}}{\bigcup}\mathcal{I}_{\lambda})$. Denote the frequency set $\mathcal W_{k}=\{({i_{k,1}}, \pm{i_{k,2}}, \ldots, \pm{i_{k,D}})\}$, for $k=1,\ldots,K$.
Then, 
$$
v_{\bs{\theta}}(\bs x) =\sum_{\bs{w}^{\prime} \in \mathcal{H}^{(L)}} \beta_{\bs{w}^{\prime}} \cos \left(\left\langle\bs{w}^{\prime}, \bs x\right\rangle+{b}_{\bs{w}^{\prime}}\right) + \nu,
\quad 
\mathcal{H}^{(L)}
= \left\{\tilde{\bs w}=\sum_{k =1}^K   c_k \bs w_{k}\given c_k \in \mathcal A^{(L)}, \bs w_{k}\in\mathcal W_{k}\right\},
$$
where $\mathcal A^{(L)} = \{c_k = \Pi_{l=1}^L c_k^{(l)} \mid c_k^{(l)}\in \mathcal A\}$, with $\mathcal A = \{c_k^{(l)}\in \{n_k, n_k-2, \ldots , -(n_k-2)\} \mid \sum_{k =1}^K  n_k \in \{1,\ldots, M_{\nu,L}\}, k=1,\ldots, K\}$, $\nu>0$ is a small user defined tolerance, $M_{\nu,L}\in \mathbb{N}$ that grows with increasing $L$ and decreasing $\nu$, and the $\beta_{\bs{w}^{\prime}}$ are non-linear 
functions of $\boldsymbol{\theta}$.
\end{theorem}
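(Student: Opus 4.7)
The plan is to prove Theorem~\ref{thm:representation_space_Td} by induction on the layer index, combining exact trigonometric identities for the encoding and for the Chebyshev/multinomial expansions at each hidden layer, with a controlled Taylor truncation to handle the sinusoidal nonlinearities.

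First I would handle the encoding layer. Each $\psi_{\boldsymbol{i}_k}(\bs x) = \pi^{-D/2}\prod_{d=1}^{D}\cos(i_{k,d}x_d - z_{k,d}\pi/2)$ is a product of $D$ univariate cosines, so iterated product-to-sum identities rewrite it as a linear combination of $\cos(\langle \bs w, \bs x\rangle + \phi)$ over all sign patterns $\bs w = (\pm i_{k,1}, \ldots, \pm i_{k,D})$. Using $\cos(-\theta) = \cos(\theta)$ to fix the sign of the first coordinate collapses this to exactly $\mathcal{W}_k$, so each $h_k^{(0)}$ becomes a finite sum $\sum_{\bs w \in \mathcal{W}_k} a_{k,\bs w}\cos(\langle \bs w, \bs x\rangle + \phi_{k,\bs w})$. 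This is the base case for the induction.

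Next I would establish the inductive step at depth $l$. Assume that after layer $l-1$, every hidden unit $h_j^{(l-1)}$ agrees, up to a uniform error $\nu_{l-1}$, with a finite combination of $\cos(\langle \tilde{\bs w}, \bs x\rangle + \tilde b)$ whose frequencies have the form $\sum_k \big(\prod_{l'<l}c_k^{(l')}\big)\bs w_k$ with $c_k^{(l')}\in \mathcal{A}$ and $\bs w_k \in \mathcal{W}_k$. The pre-activation $y_j^{(l)} = \mathbf W^{(l)}_{j\cdot}h^{(l-1)} + b_j^{(l)}$ is then again a finite sum of cosines plus a bias, and since weights and biases are bounded and $|h_k^{(l-1)}|\le 1$ for all $l\ge 1$, we have the uniform bound $\|y_j^{(l)}\|_\infty \le C_w K + C_b$. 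I would then apply the Taylor expansion $\sin(y_j^{(l)}) = \sum_{n\ge 0}(-1)^n (y_j^{(l)})^{2n+1}/(2n+1)!$, truncated at total degree $M_{\nu,l}$ chosen so that the remainder is at most $\nu/L$ uniformly in $\bs x$ and $j$. Expanding each truncated power by the multinomial theorem produces monomials of the form $\prod_\alpha \cos^{n_\alpha}(\langle\tilde{\bs w}_\alpha,\bs x\rangle + \tilde b_\alpha)$ with $\sum_\alpha n_\alpha \le M_{\nu,l}$, and the Chebyshev-type identity $\cos^{n}(\theta) = 2^{1-n}\sum_{j}\binom{n}{j}\cos((n-2j)\theta)$ together with pairwise product-to-sum collapses each such monomial into a linear combination of $\cos(\langle\sum_\alpha m_\alpha\tilde{\bs w}_\alpha, \bs x\rangle + \phi)$ with $m_\alpha \in \{n_\alpha, n_\alpha-2, \ldots, -n_\alpha\}$. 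Identifying each frequency with its negative by $\cos$-evenness eliminates the duplicate $m_\alpha = -n_\alpha$ and leaves the asymmetric set $\{n_\alpha, n_\alpha-2, \ldots, -(n_\alpha-2)\}$ appearing in $\mathcal{A}$. Consequently each original encoding frequency $\bs w_k$ acquires an additional multiplicative factor $c_k^{(l)}\in\mathcal{A}$ at layer $l$, and unrolling the induction across $L$ layers produces the $L$-fold product $c_k = \prod_{l=1}^L c_k^{(l)} \in \mathcal{A}^{(L)}$ that appears in $\mathcal{H}^{(L)}$. Because the final layer $v_{\bs\theta}(\bs x) = \mathbf W^{(L)}h^{(L-1)}$ is linear, grouping the resulting cosines by their frequency $\bs w' \in \mathcal{H}^{(L)}$ yields the claimed expansion, with amplitudes $\beta_{\bs w'}$ accumulating all products of network weights, Taylor coefficients, Chebyshev binomials, and multinomial coefficients, and the additive residual $\nu$ absorbing the $L$ truncation errors of size $\nu/L$.

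The main obstacle I expect is quantifying $M_{\nu,L}$ and controlling the propagation of truncation error through depth, since a naive bound would pick up both Lipschitz amplification from the $\mathbf W^{(l)}$ and combinatorial growth from repeated multinomial expansion. The key stabilizer is the uniform bound $|h_j^{(l)}|\le 1$ at every hidden layer, which pins $\|y_j^{(l)}\|_\infty$ to a constant independent of $l$; combined with the factorial decay $1/(2n+1)!$ this lets $M_{\nu,L}$ grow only polynomially in $L$ and logarithmically in $1/\nu$, matching the qualitative claim of the theorem. A secondary bookkeeping hurdle is consistently applying the sign convention on $\mathcal{W}_k$ across layers so that each element of $\mathcal{H}^{(L)}$ is represented uniquely, which I would resolve by canonically selecting the representative whose first nonzero coordinate is positive at every compositional step.
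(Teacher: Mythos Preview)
Your proposal is correct and follows essentially the same route as the paper: product-to-sum identities to rewrite each $\psi_{\boldsymbol{i}_k}$ over $\mathcal W_k$, Taylor truncation of the sinusoidal activations, and multinomial/Chebyshev power-reduction to propagate the frequency set through depth, yielding the $L$-fold multiplicative structure $\mathcal A^{(L)}$.

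The only organizational difference is that the paper first proves the exact representation for a generic polynomial activation $\alpha(v)=\sum_{m\le M}\beta_m v^m$ as a standalone result (their Theorem~\ref{thm:representation_space_Td_poly}), and then invokes it once with a single Taylor degree $M_{\nu,L}$ common to all layers, whereas you interleave a layer-dependent truncation into the induction. This matters for the error bookkeeping: the paper's recursion gives $|\varepsilon^{(l+1)}|\le C_w|\varepsilon^{(l)}|+(C_w+C_b)^{2M+3}/(2M+3)!$, so the accumulated error carries a factor $\sum_{t} C_w^{t}$ rather than simply summing $L$ copies of $\nu/L$; your ``$\nu/L$ per layer'' heuristic is therefore not quite right when $C_w>1$, though you correctly identify the Lipschitz amplification as the main obstacle. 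Adjusting your per-layer budget to absorb this geometric factor (or, equivalently, adopting the paper's single-$M$ formulation) closes the gap.
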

It is important to note that Theorem \ref{thm:representation_space_Td} is a representation result that characterizes the functional form of our network, rather than an approximation theorem that bounds the accuracy in a target function class. However, the theorem highlights the role of depth in rapidly expanding the initial encoder $\boldsymbol{\eta}$, as the random frequency sets $\{\mathcal W_{k}\}$ are combined by integer scalings from set $\mathcal{A}^{(L)}$, whose cardinality grows very quickly with respect to the number of hidden layers $L$. 
Thus, for the special case of $\mathbb{T}^{D}$, the network \eqref{eqn:nf} is approximately equivalent to an expansion over a large set of sinusoidal basis functions, with frequencies in $\mathcal{H}^{(L)}$ that are scaled versions of those used to form $\boldsymbol{\eta}$, and whose expansion coefficients are determined by the network parameters. We note that similar representation results appear for NF architectures on Euclidean domains \citep{fathony2021,yuce:2021}. Supplement Section S1.3 provides further details on $\nu$, $M_{\nu,L}$ and the approximate order of the cardinality of $\mathcal{A}^{(L)}$.}

\section{Estimation and Computational Details}\label{sec:estim_algo_comp}
\subsection{Scalable Stochastic Gradient Algorithm}\label{ssec:algo}
Under the deep network model outlined in Section~\ref{ssec:deep_estimator}, the parametric form of the optimization problem
\eqref{eqn:map_pp} is given by:
\begin{equation}\label{eqn:MAP_pp_reparam}
\begin{aligned}
    \widehat{\boldsymbol{\theta}} := \argmax_{\boldsymbol{\theta}\in\Theta}\mathcal{L}(o,\boldsymbol{\theta}) &=  \argmax_{\boldsymbol{\theta}\in\Theta}\Big(\frac{1}{n}\sum_{i=1}^n v_{\boldsymbol{\theta}}({\bs x}_{i}) - \int_{\Omega}\exp\left(v_{\boldsymbol{\theta}}\right)d\omega  - R_{\tau}(v_{\boldsymbol{\theta}})\Big).
    \end{aligned}
\end{equation}
In general, optimization problems for deep network parameters are both high dimensional and highly non-convex and, therefore, are not solved globally. Instead, local solutions are found using stochastic gradient-based procedures \citep{KingBa15}, with gradients computed via backpropagation. 
\par 
A potential complexity arises in our case because the normalization and penalty terms in~\eqref{eqn:MAP_pp_reparam} involve integration over $\Omega$, which in general is analytically intractable. Numerical approximation using quadrature rules, an effective approach in lower-dimensional settings \citep{ferraccioli2021}, must be avoided as it reintroduces the curse of dimensionality, with the total number of quadrature points being exponential in $D$. Instead, recall that stochastic gradient-based optimization procedures require only an unbiased estimate of the objective function's gradient \citep{Bottou2018}. Leveraging this fact, the following proposition establishes that the necessary gradients can be computed while avoiding the computational curse of dimensionality.
\begin{proposition}\label{prop:unbiased_grads}
Let $\text{Vol}(\Omega):=\int_{\Omega}d\omega$ and define
\begin{equation}\label{eqn:unbiased_gradient_estimates}
\begin{aligned}
    &\boldsymbol{a}^{b}(\boldsymbol{\theta}):=\frac{1}{b}\sum_{i=1}^b\frac{\partial}{\partial\boldsymbol{\theta}}v_{\boldsymbol{\theta}}(x_{1i}, \ldots,x_{Di}), \quad \boldsymbol{b}^{q_{1}}(\boldsymbol{\theta}):=\frac{\text{Vol}(\Omega)}{q_{1}}\sum_{j=1}^{q_{1}}\frac{\partial}{\partial\boldsymbol{\theta}}\exp(v_{\boldsymbol{\theta}}(x_{1j}, \ldots, x_{Dj})) \\
    &\boldsymbol{c}_{\tau}^{q_{2}}(\boldsymbol{\theta}):=\tau\frac{\text{Vol}(\Omega)}{q_{2}}\sum_{l=1}^{q_{2}} \frac{\partial}{\partial\boldsymbol{\theta}}\left[\Delta_{\Omega} v_{\boldsymbol{\theta}}(x_{1l},\ldots,x_{Dl})\right]^2,
\end{aligned}
\end{equation}
where $b\le n$ is the size of a uniformly sampled data batch from $o$, and $q_{1}, q_{2}$ are the Monte Carlo sample sizes with points sampled uniformly over $\Omega$. Then
$$
\mathbb{E}[\boldsymbol{a}^{b}(\boldsymbol{\theta}) - \boldsymbol{b}^{q_{1}}(\boldsymbol{\theta}) - \boldsymbol{c}_{\tau}^{q_{2}}(\boldsymbol{\theta})] = \frac{\partial}{\partial\boldsymbol{\theta}}\mathcal{L}(o,\boldsymbol{\theta}).
$$
\end{proposition}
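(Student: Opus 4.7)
The plan is to differentiate $\mathcal{L}(o,\boldsymbol{\theta})$ term by term, move the derivative inside each integral (justified by standard dominated convergence arguments for the smooth, bounded network), and then verify that each of the three stochastic objects $\boldsymbol{a}^b$, $\boldsymbol{b}^{q_1}$, $\boldsymbol{c}_\tau^{q_2}$ is an unbiased estimator of the corresponding summand/integral on the right-hand side. By linearity of expectation, this yields the claim.

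First, writing $\mathcal{L}$ explicitly from \eqref{eqn:MAP_pp_reparam} and applying the (deterministic) gradient, I obtain
\begin{equation*}
\frac{\partial}{\partial\boldsymbol{\theta}}\mathcal{L}(o,\boldsymbol{\theta}) = \frac{1}{n}\sum_{i=1}^{n}\frac{\partial v_{\boldsymbol{\theta}}}{\partial\boldsymbol{\theta}}(\boldsymbol{x}_i) \;-\; \int_{\Omega}\frac{\partial}{\partial\boldsymbol{\theta}}\exp(v_{\boldsymbol{\theta}})\,d\omega \;-\; \tau\int_{\Omega}\frac{\partial}{\partial\boldsymbol{\theta}}[\Delta_{\Omega}v_{\boldsymbol{\theta}}]^2\,d\omega.
\end{equation*}
Interchanging $\partial_{\boldsymbol{\theta}}$ and $\int_{\Omega}$ is the one analytic step that requires a word of justification: since $\Omega$ is compact, the NF \eqref{eqn:nf} is smooth in $(\boldsymbol{x},\boldsymbol{\theta})$, and $\Delta_{\Omega} v_{\boldsymbol{\theta}}$ is a smooth function of $\boldsymbol{\theta}$ uniformly on $\Omega$, the integrands and their $\boldsymbol{\theta}$-gradients are bounded on $\Omega$, so Leibniz's rule applies.

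Next I handle the three terms separately. For the data term, the indices of the minibatch are drawn uniformly (with or without replacement) from $\{1,\ldots,n\}$; letting $I$ denote such an index,
$$
\mathbb{E}\!\left[\frac{\partial v_{\boldsymbol{\theta}}}{\partial\boldsymbol{\theta}}(\boldsymbol{x}_{I})\right] = \frac{1}{n}\sum_{i=1}^{n}\frac{\partial v_{\boldsymbol{\theta}}}{\partial\boldsymbol{\theta}}(\boldsymbol{x}_i),
$$
and averaging $b$ such terms gives $\mathbb{E}[\boldsymbol{a}^b(\boldsymbol{\theta})] = \tfrac{1}{n}\sum_{i=1}^n \partial_{\boldsymbol{\theta}} v_{\boldsymbol{\theta}}(\boldsymbol{x}_i)$. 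For the normalization term, since the Monte Carlo points are drawn uniformly over $\Omega$, each has density $1/\mathrm{Vol}(\Omega)$ with respect to $d\omega$, hence for a single sample $X\sim\mathrm{Unif}(\Omega)$,
$$
\mathbb{E}\!\left[\frac{\partial}{\partial\boldsymbol{\theta}}\exp(v_{\boldsymbol{\theta}}(X))\right] = \frac{1}{\mathrm{Vol}(\Omega)}\int_{\Omega}\frac{\partial}{\partial\boldsymbol{\theta}}\exp(v_{\boldsymbol{\theta}})\,d\omega,
$$
which, multiplied by $\mathrm{Vol}(\Omega)/q_1$ and summed over the $q_1$ i.i.d.\ samples, gives $\mathbb{E}[\boldsymbol{b}^{q_1}(\boldsymbol{\theta})] = \int_{\Omega}\partial_{\boldsymbol{\theta}}\exp(v_{\boldsymbol{\theta}})\,d\omega$. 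The penalty term is handled identically with integrand $[\Delta_{\Omega} v_{\boldsymbol{\theta}}]^2$ in place of $\exp(v_{\boldsymbol{\theta}})$, yielding $\mathbb{E}[\boldsymbol{c}_\tau^{q_2}(\boldsymbol{\theta})] = \tau\int_{\Omega}\partial_{\boldsymbol{\theta}}[\Delta_{\Omega} v_{\boldsymbol{\theta}}]^2\,d\omega$. Summing the three identities with the appropriate signs and invoking linearity of expectation completes the proof.

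There is no serious obstacle here: the result is essentially a bookkeeping exercise combining minibatch sampling with standard uniform-sampling Monte Carlo, plus a routine Leibniz-rule interchange. The one point that deserves care in the write-up is articulating the mild regularity conditions on $v_{\boldsymbol{\theta}}$ and its Laplace--Beltrami image that justify swapping differentiation and integration; this is trivially satisfied for the smooth, bounded-parameter NF on the compact product manifold $\Omega$.
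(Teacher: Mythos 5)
Your proof is correct and follows essentially the same route as the paper's: compute the expectation of each of the three stochastic terms separately (uniform index sampling for $\boldsymbol{a}^b$, uniform Monte Carlo over $\Omega$ for $\boldsymbol{b}^{q_1}$ and $\boldsymbol{c}_\tau^{q_2}$) and combine by linearity. The only difference is that you explicitly flag the Leibniz-rule interchange of $\partial_{\boldsymbol{\theta}}$ and $\int_\Omega$, which the paper's proof takes for granted; this is a reasonable and harmless addition.
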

As a result of proposition~\ref{prop:unbiased_grads}, if the terms in Equation~\ref{eqn:unbiased_gradient_estimates} can be quickly computed, the solution to \eqref{eqn:MAP_pp_reparam} can be approximated using a batch stochastic gradient ascent procedure. The gradients required for $\boldsymbol{a}^{b}(\boldsymbol{\theta})$ and $\boldsymbol{b}^{q_{1}}(\boldsymbol{\theta})$ can be formed efficiently using backpropagation. However, forming $\boldsymbol{c}_{\tau}^{q_{2}}(\boldsymbol{\theta})$ is complicated by the presence of the manifold differential operator $\Delta_{\Omega}$. To handle this term, we consider two cases. 
\par 
For certain product manifolds, definition \eqref{eqn:LBO_standard_def_joint} simplifies considerably. For instance, in the case of the \(D\)-dimensional torus \(\mathbb{T}^{D} = \bigtimes_{d=1}^D \mathbb{S}^1\), the metric tensor components and their determinants become constants, and the LBO reduces to a sum of standard Laplacians on each circle component (see Supplemental Section S2.1). In such cases, the network \eqref{eqn:nf} can be defined directly on the intrinsic coordinates $(\gamma^{(1)}, ...,\gamma^{(D)})$, and $\Delta_{\Omega}[v]$ computed using automatic differentiation \citep{Baydin2018}.
\par 
Alternatively, in applications involving more complex manifolds with many local charts, forming \eqref{eqn:LBO_standard_def_joint} may become cumbersome and challenging. 
In such cases, it may be more convenient to work directly with the extrinsic (Euclidean) coordinates \citep{Harlim2023}. 
The following proposition provides an alternative computational form for point-wise evaluation of the LBO of a smooth function $v$ that avoids the explicit use of local coordinates.
\begin{proposition}\label{prop:LBO_via_ambient_projection}
    Let $v:\Omega\mapsto\mathbb{R}$ be a smooth function on the product manifold $\Omega$. Let $\{t_1^{(d)},\ldots,t_{p_{d}}^{(d)}\}$ be a set of orthogonal basis vectors in $\mathbb{R}^{m_{d}}$ that span $T_{x_{d}}(\mathcal{M}_d)$ (tangent space at $x_{d}\in\mathcal{M}_d$). Define the matrices $\boldsymbol{P}^{(d)}=\sum_{i=1}^{p_{d}}t^{(d)}_{i}{t^{(d)}_{i}}^{\intercal}\in\mathbb{R}^{m_{d}\times m_{d}}$ and the block diagonal matrix $\boldsymbol{P} = \text{BlockDiag}\left(\boldsymbol{P}^{(1)}, ..., \boldsymbol{P}^{(D)}\right)$.
    Then the Laplace-Beltrami operator of $v$ is given by
    \begin{equation}\label{eqn:projected_diff_operators}
    \begin{aligned}
                \Delta_{\Omega}[v](\bs x) = \sum_{i=1}^{M} \boldsymbol{P}_{i,\cdot}\boldsymbol{H}_{\mathbb{R}^{M}}[v](x_{1}, \ldots, x_{D})\boldsymbol{P}_{i,\cdot}^{\intercal},\\
        \end{aligned}
    \end{equation}
    where $M=\sum_{d=1}^Dm_{d}$, $\boldsymbol{H}_{\mathbb{R}^{M}}$ is the standard Hessian in $\mathbb{R}^{M}$.
\end{proposition}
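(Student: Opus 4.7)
The plan is to exploit the block-diagonal structure of $\bs P$ together with the additive decomposition $\Delta_{\Omega} = \sum_{d=1}^D \Delta_{\mathcal{M}_d}$ on the product manifold to reduce the claim to a single submanifold, and then to establish the marginal projected-Hessian identity by writing both sides in a common local chart.

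For the product-to-marginal reduction, I first rewrite $\sum_{i=1}^M \bs P_{i,\cdot}\, \bs H_{\mathbb{R}^M}[v]\, \bs P_{i,\cdot}^{\intercal} = \operatorname{tr}\bigl(\bs P\, \bs H_{\mathbb{R}^M}[v]\, \bs P\bigr)$. Because $\bs P$ is block diagonal, every mixed Hessian block $\partial_{\bs x_d}\partial_{\bs x_{d'}} v$ with $d\ne d'$ is annihilated, so the sum collapses to $\sum_{d=1}^D \operatorname{tr}\bigl(\bs P^{(d)}\, \bs H^{(d,d)}_{\mathbb{R}^{m_d}}[v]\, \bs P^{(d)}\bigr)$, where $\bs H^{(d,d)}$ denotes the $m_d\times m_d$ marginal Hessian block. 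Matching term-by-term against $\Delta_{\Omega} = \sum_d \Delta_{\mathcal{M}_d}$, the proposition reduces to proving the single-manifold identity $\Delta_{\mathcal{M}} v = \operatorname{tr}\bigl(\bs P\, \bs H_{\mathbb{R}^m}[v]\, \bs P\bigr)$ for any embedded $\mathcal{M}\subset\mathbb{R}^m$ of dimension $p$. Writing $\bs P = \sum_{j=1}^p t_j t_j^{\intercal}$ with $\{t_j\}$ orthonormal and using $\bs P^2 = \bs P$, this further simplifies to $\sum_{j=1}^p t_j^{\intercal} \bs H_{\mathbb{R}^m}[v] t_j$.

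For the single-manifold identity, I pick a local chart $l:\mathcal O \to \mathcal V\cap\mathcal{M}$, set $V := v\circ l$, and expand each $t_j = \sum_k A_{jk}\,\partial_k l$ with $A G A^{\intercal} = I$, so that $\sum_j A_{jk}A_{jl} = (G^{-1})_{kl}$. The chain rule yields $t_j^{\intercal} \bs H_{\mathbb{R}^m}[v] t_j = \sum_{k,l} A_{jk} A_{jl} \bigl[\partial_k \partial_l V - (\partial_k \partial_l l)^{\intercal} \nabla v\bigr]$, so summing over $j$ produces a leading intrinsic term $\sum_{k,l}(G^{-1})_{kl}\partial_k\partial_l V$ plus a correction involving $(\partial_k \partial_l l)^{\intercal}\nabla v$. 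Repeated use of the metric identities $\partial_k(G^{-1})_{lm} = -(G^{-1})_{la}(G^{-1})_{mb}\partial_k G_{ab}$ and $\partial_k \log\sqrt{\det G} = \tfrac12\operatorname{tr}(G^{-1}\partial_k G)$, together with the Christoffel-type expansion of $(\partial_k\partial_l l)^{\intercal}\partial_{k'} l$ in terms of $\partial G$, rearranges the whole expression into the divergence-form LBO in \eqref{eqn:LBO_standard_def_marg}.

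The main obstacle is precisely this last rearrangement. The ambient Hessian $\bs H_{\mathbb{R}^m}[v]$ captures second derivatives along straight lines of the embedding space, whereas $\Delta_{\mathcal{M}} v$ is built from covariant derivatives along the manifold; the discrepancy is governed by the second fundamental form of $\mathcal{M}\hookrightarrow\mathbb{R}^m$ contracted against $\nabla v$. Verifying that, once summed against an orthonormal tangent frame and combined with the metric-determinant weighting, these second-fundamental-form contributions recombine with $\partial G^{(d)}$ exactly into the $\sqrt{\det G^{(d)}}$ prefactor of \eqref{eqn:LBO_standard_def_marg} is the technical heart of the proof; everything else is either algebra with the orthogonal projector $\bs P$ or a direct consequence of the additive LBO decomposition on the product manifold.
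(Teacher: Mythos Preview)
Your block-diagonal reduction to a single factor $\mathcal{M}_d$ matches the paper exactly, and in the same order (single manifold first, then assemble via $\Delta_\Omega=\sum_d\Delta_{\mathcal{M}_d}$). Where you diverge is the single-manifold step. The paper does not attempt to reconcile $\sum_j t_j^{\intercal}\bs H_{\mathbb{R}^m}[v]\,t_j$ with the intrinsic coordinate formula \eqref{eqn:LBO_standard_def_marg} at all; instead it imports the ambient identity $\Delta_{\mathcal{M}}v = (\bs P\nabla_{\mathbb{R}^m})\cdot(\bs P\nabla_{\mathbb{R}^m})v$ from Harlim (2023) as a given and expands it while treating $\bs P$ as a fixed matrix (the projector is not differentiated by the inner $\nabla$). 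That collapses the whole single-manifold argument to a few lines of index algebra ending at $\sum_i\bs P_{i,\cdot}\bs H_{\mathbb{R}^m}[v]\bs P_{i,\cdot}^{\intercal}$, after which the block-diagonal assembly is immediate.

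Your chart-based route is more self-contained, but as written it has a gap precisely at the step you flag as ``the technical heart.'' Decompose $\partial_k\partial_l l$ into its tangential part (Christoffel combinations of the $\partial_m l$) and its normal part (the second fundamental form). The tangential piece does recombine with $\partial G^{-1}$ and $\partial\log\sqrt{\det G}$ into the first-order terms of the divergence-form LBO, exactly as you sketch. But the normal piece contributes
\[
-\sum_{k,l}(G^{-1})_{kl}\,II(\partial_k l,\partial_l l)\cdot\nabla_{\mathbb{R}^m}v \;=\; -\,\vec H\cdot\nabla_{\mathbb{R}^m}v,
\]
with $\vec H$ the mean curvature vector; this involves the \emph{normal} component of the ambient gradient and therefore cannot be absorbed into any $\sqrt{\det G}$ prefactor, which lives entirely on the tangential side. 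In other words, $\operatorname{tr}(\bs P\,\bs H_{\mathbb{R}^m}[v])$ and $\Delta_{\mathcal{M}}v$ differ in general by exactly this mean-curvature term, so the recombination you hope for does not happen for a generic smooth extension of $v$. The paper sidesteps this by taking the projected-operator formulation wholesale and freezing $\bs P$; if you want your from-scratch derivation to land on the stated identity you need either that same imported formula or an additional hypothesis (e.g., extending $v$ off $\mathcal{M}$ constant along normals) that kills the normal component of $\nabla_{\mathbb{R}^m}v$.
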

Equation \eqref{eqn:projected_diff_operators} implies that if we can efficiently calculate i) the second order derivatives of $v$ in the ambient Euclidean space and ii) the basis vectors to form $\boldsymbol{P}^{(d)}$,  we can construct the differential operators of interest efficiently. Regarding i), the required Euclidean Hessian can be calculated via automatic differentiation. The ease of ii) depends on the manifold in question. For some manifolds of interest, e.g. $\mathbb{S}^{p}$, closed form solutions exist and hence can be calculated rapidly. In the general case, it is common for complicated manifolds to be approximated using triangulations (or their higher dimensional analogs). In such situations, the tangent basis can be approximated directly using the triangulation vertices. 
\par 
Algorithm~\ref{alg:sgd} provides pseudocode for our stochastic gradient ascent based estimation procedure. In general, proving the convergence of the iterates in Algorithm~\ref{alg:sgd} to a stationary point of \eqref{eqn:MAP_pp_reparam} requires some additional assumptions on $\mathcal{L}(o,\boldsymbol{\theta})$ that are difficult to verify, e.g. globally Lipschitz gradients \citep{ghadimi2013}. Empirically, we found convergence to be robust for simple learning rate sequences, provided that $\tau$ was large enough to avoid the case of unbounded or nearly unbounded likelihoods. For additional details on the algorithm and practical implementation guidance, please refer to Supplemental Section S3.2.

\begin{algorithm}[t]
  \caption{Mini-batch Stochastic Gradient Ascent Estimator}
  \label{alg:sgd}
   \small 
  \begin{algorithmic}[1]
    \State \textbf{Input} non-negative sequence of learning rates $\{w_{t}\}$, architecture hyperparameters for \eqref{eqn:nf}, batch sizes $b$, MC sample sizes $q_{1}$, $q_{2}$, regularization strength $\tau$
    \State Set $\bs{\eta} := (\psi_{1}, \ldots, \psi_{K})^\intercal$ by sampling $\{\psi_{\bs i}\}$ via \eqref{eqn:random_LBO_encoding}
    \State Initialize $\boldsymbol{\theta}^{\text{i}}$ using scheme from \cite{sitzmann2020}
    \For{$t = 1,\ldots,T$}
    \For{$l = 1,\cdots \lfloor\frac{n}{b}\rfloor$}
    \State Sample mini-batch $\{{\bs x}_i\}_{i=1}^b$ uniformly from $o$
    \State Sample $\{{\bs x}_{j}\}_{j=1}^{q_{1}}$, $\{{\bs x}_l\}_{l=1}^{q_{2}}$ $\overset{iid}{\sim}\text{Unif}(\Omega)$ \label{alg:manifold_sampling}
    \State Calculate $\frac{\partial}{\partial\boldsymbol{\theta}}\widehat{\mathcal{L}}(o,\boldsymbol{\theta}^{\text{i}})$ using  Proposition \ref{prop:unbiased_grads}. 
    \State Update $\boldsymbol{\theta}^{\text{c}} = \boldsymbol{\theta}^{\text{i}} - w_{t} \frac{\partial}{\partial\boldsymbol{\theta}}\widehat{\mathcal{L}}(o,\boldsymbol{\theta}^{\text{i}})$
    \State Set $\boldsymbol{\theta}^{\text{i}} = \boldsymbol{\theta}^{\text{c}}$
    \EndFor
    \EndFor
    \State \textbf{Return}: $\widehat{\boldsymbol{\theta}} := \boldsymbol{\theta}^{\text{c}}$
  \end{algorithmic}
\end{algorithm}

\subsection{Hyperparameter Selection}\label{ssec:hyper_param_selection}
As is typical for deep neural network based estimators, Algorithm~\ref{alg:sgd} involves many hyperparameters that require tuning. 
Evaluating the quality of a given hyperparameterization necessitates a data-driven criterion. Towards this end, we first partition the observed points into training $o_{T}$ and validation $o_{V}$ sets. We estimate $\widehat{\boldsymbol{\theta}}$ using Algorithm~\ref{alg:sgd} with $o_{T}$ for each candidate hyperparameterization, and then calculate criterion
\begin{equation}\label{eqn:l2_selection_criteria}
    C(\widehat{\boldsymbol{\theta}}) = \left\|f_{\widehat{\boldsymbol{\theta}}}\right\|_{L^2(\Omega)}^2 - \frac{2}{|o_{V}|}\sum_{{\bs x}_i \in o_{V}}f_{\widehat{\boldsymbol{\theta}}}({\bs x}_i),     
\end{equation}
with the integral estimated using Monte-Carlo integration. The criteria \eqref{eqn:l2_selection_criteria} corresponds to a shifted approximation of the integrated squared error (ISE) of the density estimator \citep{Hall1987}.
\par 
In theory, criterion \eqref{eqn:l2_selection_criteria} can be utilized within various algorithms for hyperparameter selection. The design of such algorithms remains a challenging and active area of research in the machine learning community, due to the high dimensionality of the hyperparameter spaces and high cost of computing estimators \citep{Yu2020HyperParameterOA}. After experimenting with modern approaches to jointly optimize over many hyperparameters \citep{snoek2012}, we found that a simple two stage approach worked best. In this approach, all hyperparameters except for $\tau$ are first fixed to values deemed reasonable based on the literature and exploratory experimentation. The parameter $\tau$ was then selected over a grid $\tau_{1} < \tau_{2} < \cdots <\tau_{S}$ via: 
$
    \tau = \argmin_{\tau_{s}\in\{\tau_1, \ldots, \tau_{S}\}} C(\widehat{\boldsymbol{\theta}}_{\tau_{s}}), 
$
where $\widehat{\boldsymbol{\theta}}_{\tau_{s}}$ is estimated from Algorithm~\ref{alg:sgd} with $o_{T}$ and $\tau_{s}$. This approach proved effective, as Algorithm~\ref{alg:sgd} was found to be particularly sensitive to the choice of $\tau$.
\par 
We conclude this section with a few practical guidelines for setting the remaining hyperparameters. For network size, we found that operating in or near the over-parameterized regime is effective, with the total number of network parameters ranging from at least half to twice $n$. A large data batch size is recommended, typically between $n/2$ and $n/4$, which aligns with empirical findings reported for training Euclidean NFs on Gaussian data \citep{dupont2022}. The Monte Carlo sample size $q_1$ can typically be set relatively high, however, a large $q_2$ may significantly slow training, particularly when automatic differentiation is employed to approximate the roughness penalty (see Supplemental Section S3.2 for alternative derivative calculation strategies). We recommend setting the encoding basis rank $K$ equal to the width of the first layer and increasing the maximum frequency of the marginal basis as the dimensionality increases. \revised{Finally, early stopping can be performed by monitoring convergence using the criterion in \eqref{eqn:l2_selection_criteria} on a small held-out validation set, or by tracking the variance of the gradients. See Supplemental Section S5.5 for a detailed discussion and evaluation of both criteria}.

\section{Empirical Evaluation}\label{sec:experiments}

\subsection{Synthetic Data Analysis}\label{ssec:simulation_studies}
\subsubsection{Simulation Setup}
We evaluate the recovery of the target density $f$ on the (hyper)-torus $\mathbb{T}^D$ using simulated data. We focus on the hyper-toroidal case because it allows for easier definition and sampling of non-separable anisotropic density functions, and because there are standard competing alternatives available. We consider $\Omega = \mathbb{T}^2$ (\revised{2-d}) and $\mathbb{T}^4$ (\revised{4-d}) cases to study the performance of the method on both (relatively) low and high-dimensional \revised{product} domains. The true density functions are defined using mixtures of anisotropic wrapped normal distributions \citep{mardia2009directional}. 
For $\mathbb{T}^2$, the density is defined as an equally weighted mixture of three anisotropic components, as shown in the top left panel of Figure~\ref{fig:sim_T2}. For $\mathbb{T}^4$, the density is an equally weighted mixture of 5 anisotropic components. $n=10,000$ and $n=100,000$ observations are simulated for $\mathbb{T}^2$ and $\mathbb{T}^4$, respectively. The MATLAB library \textit{libDirectional} was used for defining and sampling the mixture model \citep{libdirectional}. 20 replications were formed for each dataset. For more details on the synthetic data generation, see supplemental Section S5.
\par 
The eigenfunctions of the LBO on $\mathbb{S}^{1}$ using intrinsic (polar) coordinates are the Fourier functions with discrete frequencies. For both $\mathbb{T}^{2}$ and $\mathbb{T}^{4}$, we set the marginal maximum frequencies $\lambda_{d,max}=10$, $d=1,\ldots,D$ and take $K=128$. To form the first layer encoding $\boldsymbol{\eta}$, for $\mathbb{T}^{2}$, we sample the separable tensor product eigenfunctions. For $\mathbb{T}^{4}$, we employ the augmented approach outlined in Section~\ref{sssec:RLBO_encoding} and use non-separable eigenfunctions formed via predefined rotation of the separable eigenfunctions (see supplemental Section S5.3.1 for more details). The width of all hidden layers is set to $128$. For $\mathbb{T}^{2}$, we use a depth $L=3$, whereas for $\mathbb{T}^{4}$, we increase the depth to $L=4$ to reflect the increased dimensionality of the problem. To calculate the gradients in \eqref{eqn:unbiased_gradient_estimates}, we let batch size $b=n/2$, and set $q_1=q_2=1,024$ to sample quadrature points using quasi-Monte Carlo over $[-\pi,\pi]^{D}$ \citep{OWEN1998466}. The intrinsic form of the LBO is used for the roughness penalty computation. For the $\mathbb{T}^2$ case, we train for $10,000$ epochs with a fixed learning rate of $10^{-5}$. In the  $\mathbb{T}^4$ case, we use a cyclic learning rate under the triangular policy \citep{Smith2017}, as described in Supplemental Section S3.2, where the learning rate cycles between a maximum and minimum of $10^{-3}$ and $10^{-5}$, respectively, every $5,000$ epochs, for $10,000$ total epochs. For selecting $\tau$, we use a held-out validation set of size $0.05n$ and consider a logarithmically spaced grid of candidate solutions from $10^{-5}$ to $10^1$.


\par 
We compare our method, from here on referred to as Neural Product Manifold Density (NeuroPMD), to \revised{several competing alternatives}. I) a product kernel density estimation (KDE) with marginal kernels taken to be von Mises densities. We assume a common bandwidth for all marginal domains to avoid complex multidimensional bandwidth selection issues. The bandwidth is selected via 5-fold cross validation using the approximate ISE criteria \eqref{eqn:l2_selection_criteria}. II) Basis expansion over the tensor product eigenfunctions (TPB), i.e. model $v(x_1, \ldots,x_{D}) = \sum_{i_{1},\cdots,i_{D}}c_{i_{1},\ldots,i_{D}}\phi_{i_{1}}(x_1)\cdots\phi_{i_{D}}(x_{D})$. The expansion coefficients $\{c_{i_{1},\ldots,i_{D}}\}$ are estimated using stochastic gradient ascent on the penalized log-likelihood \eqref{eqn:map_pp}, which has been used for basis function density estimators in the literature \citep{ferraccioli2021}, and can be guaranteed to converge under some standard conditions (see Supplemental  Section S4.1 for more details). The hyperparameter $\tau$ for the TBP estimator was selected using the scheme outlined in Section~\ref{ssec:hyper_param_selection}, applied independently to all replications for $\mathbb{T}^2$. For $\mathbb{T}^{4}$, due to the computational challenges of estimating the tensor-product basis in this high-dimensional domain, the full hyperparameter optimization was performed for a single replication to select $\tau$, which was then fixed for the remaining replications to manage the computational cost. For $\mathbb{T}^2$, the marginal maximum frequencies are also set as $\lambda_{d,max}=10$ $\forall d$. However, due to the curse of dimensionality, applying this strategy to $\mathbb{T}^{4}$ results in a basis expansion with $(2(10)+1)^4 = 194,481$ unknown parameters, the estimation of which proved to be prohibitively costly for our experiments. Therefore, we set the marginal maximum frequencies to $\lambda_{d,max}=7$ $\forall d$, resulting in $(2(7)+1)^4=50,625$ coefficients. This choice was made to approximately match the parameter count of the $\mathbb{T}^{4}$ NeuroPMD architecture, which has $3(128^2 + 128) + 128 + 1=49,665$ parameters. \revised{III) The product domain squared neural families (pSNF) from \citep{tsuchida2023}. For both the $\mathbb{T}^2$ and $\mathbb{T}^{4}$ cases, the number of network parameters in pSNF was chosen to approximately match the parameter count in the corresponding NeuroPMD model. Fixed learning rates of $5\cdot 10^{-5}$ and $1\cdot 10^{-5}$ were used for the $\mathbb{T}^{2}$ and $\mathbb{T}^{4}$ case, respectively, as these were found to be the largest learning rates that ensured stable convergence in these settings. We also experimented with a triangular learning rate policy, but found it to result in instability. Supplemental Section S4.2 provides more details on the pSNF model structure, hyperparameter settings, and initialization.} We also compared our method to a ``vanilla'' MLP with RelU activations and the identity encoding function: $\boldsymbol{\eta}(\boldsymbol{x})=\boldsymbol{x}$. However, this baseline approach did not achieve competitive performance compared to any of the alternatives; detailed results are provided in Supplemental Section S5.2.
\par 
To evaluate the estimation performance of all methods, we use two metrics: the normalized $L^2(\Omega)$ error (nISE) and the Fisher-Rao metric (FR), both defined with respect to the true density function, as follows:
$$
\text{nISE}(\hat{f}) = \left\| f - \hat{f}\right\|_{L^{2}(\Omega)}^2\Big/\left\| f \right\|_{L^{2}(\Omega)}^2,\quad \text{FR}(\hat{f}) =  \cos^{-1}\left(\left\langle  \sqrt{f}, \sqrt{\hat{f}}\right\rangle_{L^{2}(\Omega)}^2\right).
$$
The integrals required for these metrics are approximated differently depending on the dimensionality of the problem. For the relatively low-dimensional case of $\mathbb{T}^2$, we use the tensor product of dense marginal grids, while for the higher-dimensional case of $\mathbb{T}^4$, we rely on Monte Carlo integration. 

\subsubsection{Results}
\begin{figure}[!ht]
    \centering
    \includegraphics[width=\textwidth]{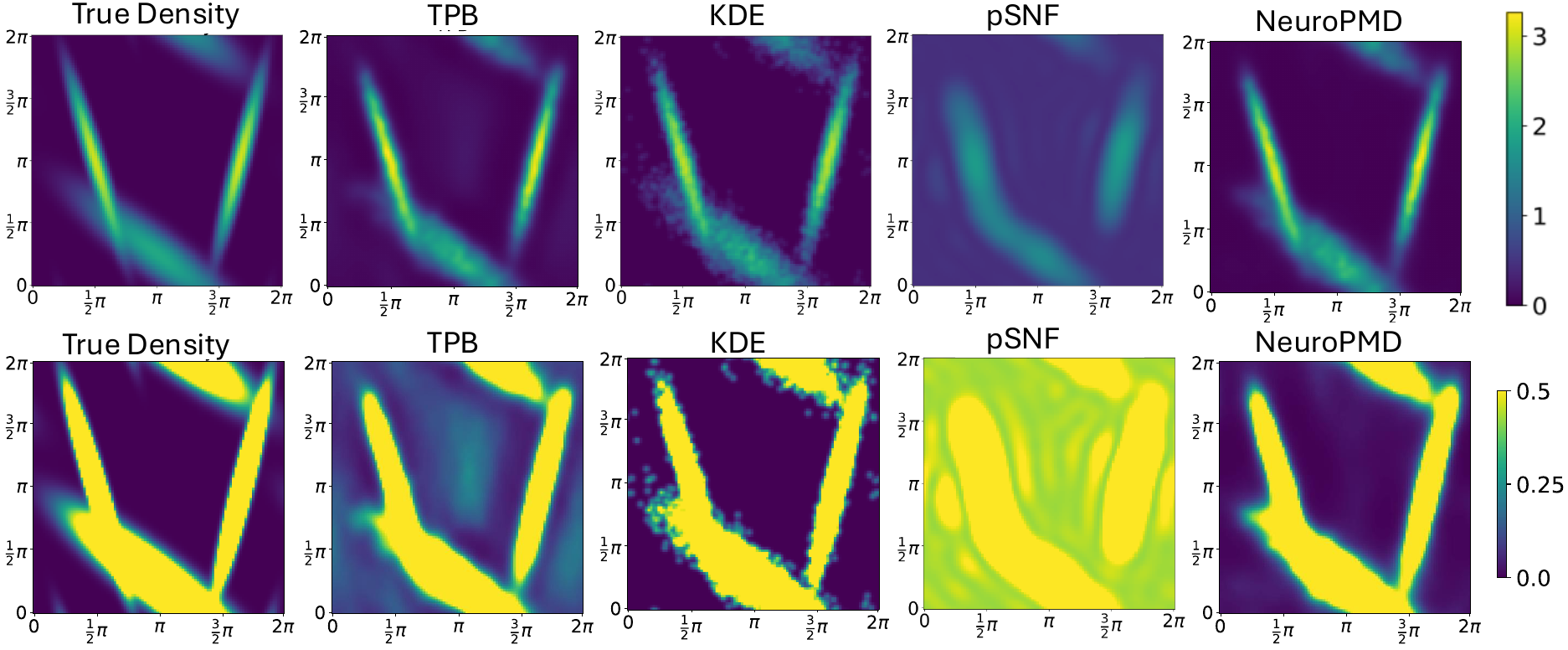}
    \caption{True (log) density function on $\mathbb{T}^2$ (left column) and estimates of (log) density from each method from a randomly selected experimental replication. Both rows present the same function visualized with different colorbars for enhanced comparison.}
    \label{fig:sim_T2}
\end{figure}

\begin{table}[ht!]
\centering
\begin{tabular}{llcc}
\toprule
Method & Metric & $\mathbb{T}^2$ & $\mathbb{T}^4$ \\
\midrule
\multirow{2}{*}{NeuroPMD} 
  & FR & $0.136 \pm (2.08\times 10^{-3})$ & $0.242 \pm (8.16 \times 10^{-3})$ \\
  & nISE & $0.0148 \pm (8.17 \times 10^{-4})$ & $0.0608 \pm (6.93 \times 10^{-3})$ \\
\midrule
\multirow{2}{*}{KDE} 
  & FR & $0.156 \pm (7.65 \times 10^{-4})$ & $1.059 \pm (1.48 \times 10^{-4})$ \\
  & nISE & $0.0235 \pm (3.96 \times 10^{-4})$ & $0.938 \pm (1.06 \times 10^{-5})$ \\
\midrule
\multirow{2}{*}{TPB} 
  & FR & $0.203 \pm (1.43\times 10^{-3})$ & $0.430 \pm (3.81 \times 10^{-4})$ \\
  & nISE & $0.0165 \pm (5.56 \times 10^{-4})$ & $0.0764\pm (3.56 \times 10^{-4})$ \\
\midrule
\multirow{2}{*}{\revised{pSNF}} 
  & \revised{FR} & \revised{$0.484 \pm (9.08\times 10^{-4})$} & \revised{$0.764\pm (1.63 \times 10^{-4})$} \\
  & \revised{nISE} & \revised{$0.352 \pm (5.37 \times 10^{-4})$} & \revised{$0.749\pm (7.84 \times 10^{-5})$} \\
\bottomrule
\end{tabular}
\caption{ \textbf{Monte Carlo average simulation results for density estimation on $\mathbb{T}^2$ and $\mathbb{T}^4$}. The table compares the performance of NeuroPMD, KDE, TPB, \revised{and pSNF} using the normalized $L^2(\Omega)$ error (nISE) and Fisher-Rao (FR) metric. Average errors and standard errors (in parentheses) demonstrate that NeuroPMD achieves the best overall performance across both metrics and domains.}
\label{tab:sim_1_2_MC_results}
\end{table}
\noindent{\textbf{$\mathbb{T}^2$ Results}: Figure~\ref{fig:sim_T2} displays the true (log) density function along with the estimates from each method for a randomly selected experimental replication. Both rows show the same functions but with different color scales for better visualization.
We observe that the KDE method performs well in capturing the highly peaked mode but struggles with more diffuse modes. This is likely due to the global bandwidth parametrization, which cannot adapt to the spatially varying anisotropy of the underlying density. This can be observed in the relative roughness in the KDE estimate of the most diffuse mode (mixture component at the bottom of the image). TPB accurately recovers the overall structure of the mixture components and captures the smoothness of the more diffuse component better than KDE, as shown in the top row. However, as displayed in the bottom row, TPB tends to overestimate in low-density regions. This tendency is reflected in its comparatively poor performance on the FR metric in Table~\ref{tab:sim_1_2_MC_results}, where the inclusion of the square root amplifies errors in these low-density areas. The issues arises due to the global, finite-rank (band-limited) representation of the Fourier functions. While increasing the marginal frequencies could eventually alleviate this problem, the resulting exponential scaling of the parameter space makes this approach impractical. Compared to the competitors, NeuroPMD offers an effective balance between capturing varying smoothness properties of the function while maintaining low bias in regions of low density. This can be observed quantitatively in Table~\ref{tab:sim_1_2_MC_results}, where NeuroPMD displays the lowest average error in both metrics across replications. Comparing TPB to NeuroPMD, the better performance of the latter can be attributed to its compositional framework. As shown in Theorem~\ref{thm:representation_space_Td}, the NeuroPMD estimator on $\mathbb{T}^{D}$ uses a basis of sinusoidal functions whose weights, frequencies, and phase shifts are adaptively learned from the data. In contrast, TPB is restricted to linear weights on a tensor-product expansion over a fixed set of frequencies, limiting its flexibility and adaptability.}
\par
\noindent{\revised{Among all methods, pSNF exhibits the weakest performance across both evaluation metrics, due largely to its architectural constraints. As noted earlier, pSNF requires the use of specific encoding and activation functions and is restricted to a single hidden layer design. While this formulation enables exact integration, the lack of flexibility results in a network that produces overly smooth estimates, as shown in Figure~\ref{fig:sim_T2}, and converges very slowly to high-frequency components, as illustrated in Figure S5 of the Supplemental Materials. This is a classic manifestation of the spectral bias and underscores the importance of the specific design choices in NeuroPMD made to combat this issue. See Supplemental Section S5.3.2 for a detailed comparison of spectral bias in pSNF and NeuroPMD.}
\par\bigskip
\noindent{\textbf{$\mathbb{T}^4$ Results}: Table~\ref{tab:sim_1_2_MC_results} provides a quantitative comparison between the methods in the high-dimensional $\mathbb{T}^{4}$ case. The results show that our NeuroPMD method consistently outperforms \revised{all of the competing} methods. A notable decline in the performance of KDE is observed when transitioning from the $\mathbb{T}^2$ to the $\mathbb{T}^4$ case. This degradation is largely attributable to the CV-based bandwidth selection, which consistently chooses a very small von-Mises concentration (i.e., a large bandwidth), resulting in significantly over-smoothed estimates.  Through post-hoc experimentation, we were able to identify some larger concentrations that did lead to lower FR and nISE. Still, none of these configurations achieved the performance levels of NeuroPMD. Identifying the optimal bandwidth for multidimensional density estimation is notoriously challenging. 
Moreover, even with an optimal global bandwidth, the fits are likely to exhibit local over- and/or under-smoothing due to spatial variations in the true density's smoothness and anisotropy. This further underscores the importance of the data-adaptive nature of NeuroPMD: its representation space in $\mathbb{T}^D$ being a large set of data-adaptive Fourier functions (Theorem~\ref{thm:representation_space_Td}), facilitating  adaptation to the localized features of the density. While TPB significantly outperforms KDE, it remains inferior to NeuroPMD, particularly in terms of the FR metric.  As observed in the $\mathbb{T}^2$ case, FR is highly sensitive to bias in low-density regions. The superior performance of NeuroPMD over TPB is particularly notable in this case since TPB actually has more parameters than NeuroPMD, highlighting the parameter efficiency of the compositional structure in \eqref{eqn:nf}. \revised{Although pSNF outperforms KDE in this higher dimensional case, it remains uncompetitive with either NeuroPMD or TPB.}}

\subsection{Real Data Analysis: Brain Connectivity}\label{ssec:rda_cc}
As outlined in Section~\ref{sec:intro}, a primary motivating application for this work is modeling the spatial distribution of white matter neural fiber connection endpoints on the brain's cortical surface. These connections, collectively referred to as the \textit{structural connectivity}, can be estimated through a combination of diffusion magnetic resonance imaging (dMRI) \citep{baliyan2016diffusion} and tractography algorithms \citep{STONGE2018524}. As illustrated in Figure~\ref{fig:structural_connectivity_data} panel A, the inferred tracts map physical connections between different brain regions, effectively reconstructing the complex structural network of the brain. Accurately estimating the density function governing the spatial pattern of connection endpoints is of substantial scientific interest, both for single-subject analysis \citep{moyer2017} and as a data representation for population-level studies \citep{mansour2022,consagra2023continuous}.
\subsubsection{Data Description, Implementation Details and Evaluation} 
In this work, we consider a randomly selected subject from the Adolescent Brain Cognitive Development (ABCD) study \citep{CASEY201843}. The structural connectivity endpoint data was inferred from the subject's diffusion and structural (T1) MRI using the SBCI pipeline \citep{cole2021}. Briefly, SBCI constructs cortical surfaces using FreeSurfer \citep{FISCHL2012774}, where each cortical hemisphere surface is represented using a dense triangular mesh consisting of 163,842 vertices. White matter fiber tracts connecting cortical surface locations are estimated using surface-enhanced tractography \citep{STONGE2018524}. In total, we observe $n=242,972$ left-hemisphere to left-hemisphere cortical connections. Each cortical hemisphere surface is approximately homeomorphic to the 2-sphere $\mathbb{S}^2$ \citep{FISCHL1999195}. Consequently, connections within a single hemisphere can be parameterized as points on the product space of 2-spheres, such that $(x_1, x_2) \in \mathbb{S}^2 \times \mathbb{S}^2$. Thus, the connectivity can be modeled as a point set on the product space $\Omega= \mathbb{S}^2 \times \mathbb{S}^2$, as shown in Figure~\ref{fig:structural_connectivity_data} panel C, and our goal is to recover the latent density function from the observed intra-hemisphere connectivity.

The eigenfunctions of the LBO on $\mathbb{S}^2$ are known as the \textit{spherical harmonics}, further details on these functions can be found in Supplemental Section S2.2. We set both maximum marginal \textit{degrees} to be 10 (resulting in $121$ harmonic basis for each marginal $\mathbb{S}^2$), $K=256$, $L=6$ hidden layers, and the width of each hidden layer to be 256. We sample the separable tensor product harmonics to form the first layer encoding. We set the batch size $b=n/2$ and $q_1=q_2=10,000$. $T=10,000$ training iterations are used under the cyclic triangular learning rate policy with minimum and maximum learning rates $10^{-5}$ and $10^{-3}$, respectively. We use the extrinsic form of the Laplacian-based roughness penalty \eqref{eqn:projected_diff_operators} and employ a centered difference scheme to numerically approximate the Hessian in the ambient space $\mathbb{R}^6$. Orthogonal basis vectors of the tangent space of $T_{x_{d}}\left(\mathbb{S}^2\right)$ can be calculated analytically (see Section S2.2), and hence the block matrix in proposition~\ref{prop:LBO_via_ambient_projection} can be computed rapidly. The regularization parameter $\tau$ is selected using the approach discussed in Section~\ref{ssec:hyper_param_selection}.
\par 
We compare our method with TPB with the same marginal ranks ($121^2$ basis functions). While specifying a product kernel on $\Omega$ is straightforward, the KDE method proves highly impractical in this setting. The difficulty arises not only from the large data size $n$, but more critically from the extremely high-resolution surface meshes ($>160,000$ vertices for each marginal $\mathbb{S}^2$) on which we wish to infer the density function. Therefore, we exclude the KDE method from our comparisons. 
\par 
Due to the absence of a ground truth density function, we evaluate the estimation performance qualitatively.  Specifically, we focus on an anatomically defined region of interest (ROI) on the brain surface and analyze the associated connectivity patterns using the concept of \textit{marginal connectivity}:
$$
\tilde{f}_E(x_2) = \int_{x_1 \in E}\tilde{f}(x_1,x_2)dx_1,
$$
where $E$ is a predefined brain ROI, and the integral is approximated numerically using the high-resolution spherical mesh. This evaluation approach enables us to examine the estimated connectivity function through marginal profiles, which are functions on $\mathbb{S}^2$ and are more amenable to visualization and interpretation. For this study, we focus on the medial orbitofrontal cortex (MOFC) ROI, displayed in red in the top row of Figure~\ref{fig:rda_results}. The MOFC is an important brain region involved in high-order cognitive processing, and its connectivity is impacted in psychiatric diseases, including depression \citep{Rolls2020}. 

\begin{figure}[!ht]
    \centering
    \includegraphics
    [width=0.8\textwidth]{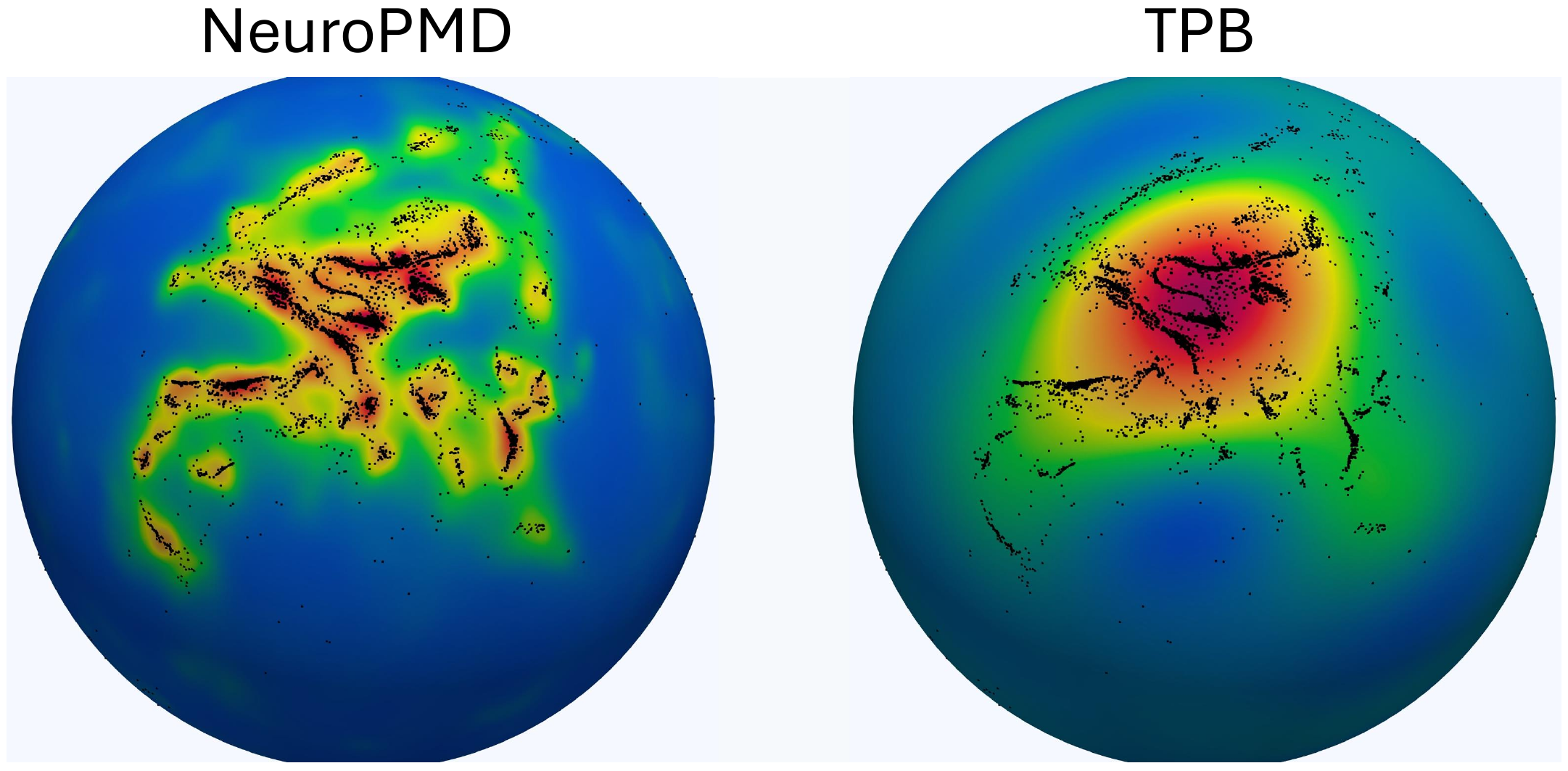}
    \caption{Marginal density function estimates for connections from the medial orbitofrontal cortex (MOFC), generated using our method (left) and tensor product basis (right). Black dots represent the endpoints connected to the MOFC. Color scales are normalized within each image to emphasize differences in the shape of the functions.}
    \label{fig:rda_marg_density}
\end{figure}
\subsubsection{Results}
\begin{figure}[!ht]
    \centering
    \includegraphics[width=\textwidth]{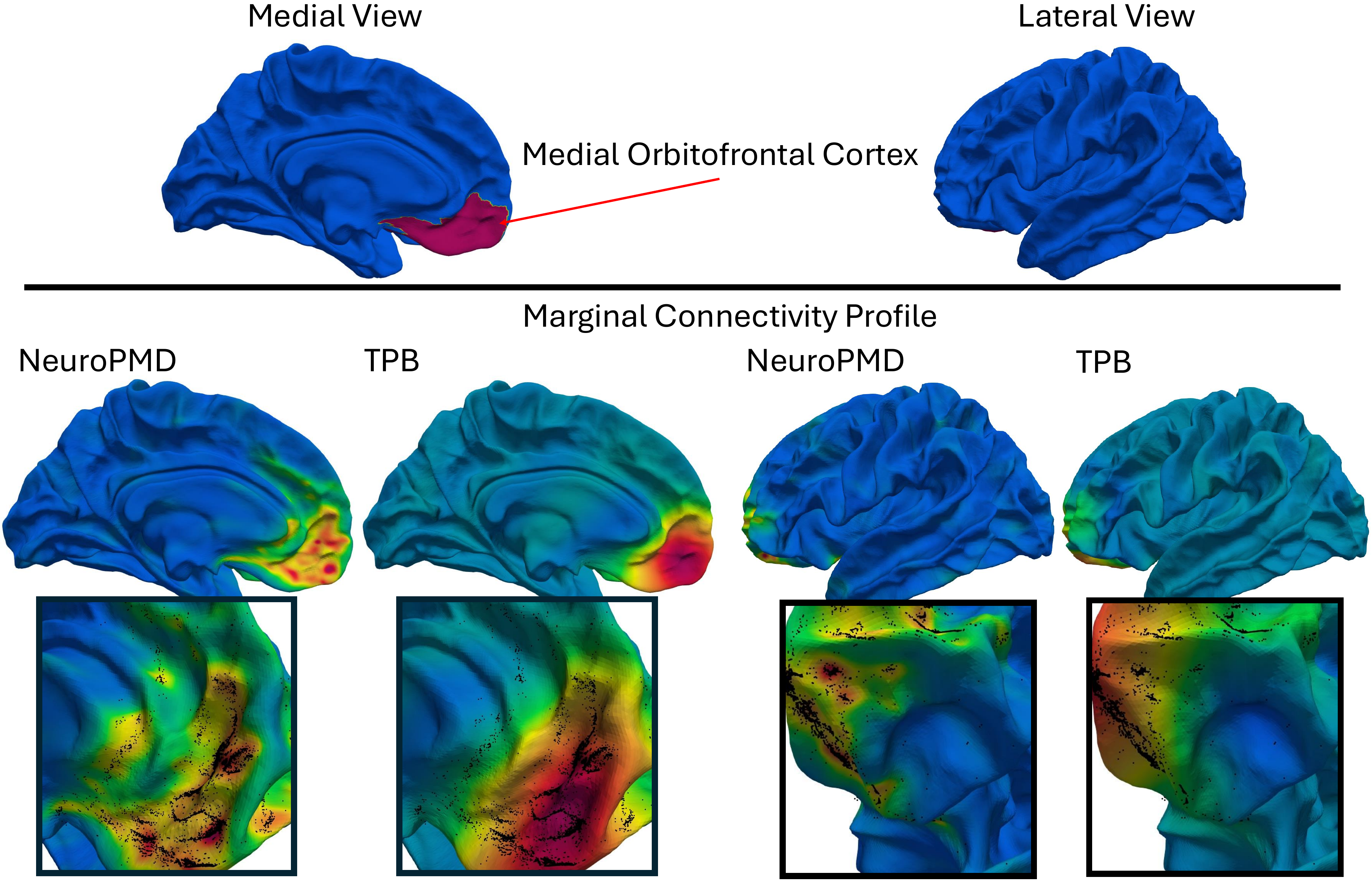}
    \caption{Same as Figure \ref{fig:rda_marg_density}, with marginal density functions mapped to the cortical surface. \textbf{Top Row}: The MOFC is highlighted in red in both medial (left) and lateral (right) views.
\textbf{Middle Row}: Comparing the estimated marginal density functions by the NeuroPMD and TPB method. \textbf{Bottom Row}: Close-up views of the marginal density functions mapped to the cortical surface. Endpoints of fiber curves connecting to the MOFC are marked in black.}
    \label{fig:rda_results}
\end{figure}

Figure~\ref{fig:rda_marg_density} shows the marginal density functions $\tilde{f}_E$ for $E =$MOFC for both NeuroPMD and TPB. The black points in Figure~\ref{fig:rda_marg_density} are endpoints of fiber curves whose other endpoints are located in the MOFC. Comparing the connection point pattern to the estimated $\tilde{f}_E$, we observe that our method successfully detects nearly all marginal modes and effectively adapts to the data's highly anisotropic shape. In contrast, the TPB method significantly over-smooths the data, failing to capture the multi-modal and high-frequency structure. 
\par 
To facilitate biological interpretation, we mapped the marginal density function and connection points back to the cortical surface in Figure~\ref{fig:rda_results}.  The middle and bottom two rows again demonstrate NeuroPMD's ability to capture the fine details of the connection patterns.  In comparison, the TPB estimates appear significantly over-smoothed. The MOFC region is generally known to have strong connections with the anterior cingulate gyrus, pregenual cingulate cortex, frontal pole, and lateral orbitofrontal cortex \citep{Beckmann2009,hsu2020}. All of these expected connection patterns are clearly reflected in the marginal density estimates of NeuroPMD. Additionally, studies suggest complex connectivity patterns within the MOFC (Chapter 3; \cite{Zald2006}). The detailed connectivity within the MOFC is evident in the zoomed-in region on the left, showing a distinct multi-modal and anisotropic spatial structure. 
\par  
Supplemental Figures S8 and S9 in  Section S6 highlight the variability in intra- and inter-MOFC connection patterns across several ABCD subjects by showing their marginal density functions estimated by NeuroPMD. It is notable that the within MOFC structure is entirely lost when using traditional \revised{atlas}-based network representations, where between-ROI connections are reduced to a single summary statistic (e.g., a count), and within-ROI connections are ignored \citep{chung2021}. We speculate that integrating the proposed deep neural field model as a data representation could enhance the power of downstream neuroscientific tasks, such as cognitive trait and neuropsychiatric disease prediction, compared to traditional \revised{atlas}-based models. \revised{Relatedly, recent work has explored predictive models operating directly in the weight space of representation networks \citep{Navon2023}, opening this as an exciting direction for future work.}

\section{Conclusion}
\label{SEC:conclusion}

This work introduces a novel deep neural network methodology for density function estimation on product Riemannian manifolds. By carefully designing the network architecture and stochastic gradient estimation, our method avoids the \revised{computational} curse of dimensionality that afflicts traditional approaches to flexible density estimation in high-dimensional settings. To promote convergence and properly regularize our estimates, a roughness penalty based on the Laplace-Beltrami operator is incorporated. To our knowledge, this is the first approach to use a deep neural network for flexible density estimation on product manifold domains. Simulation studies demonstrate improved performance over traditional approaches, particularly in high-dimensional domains, and a real-world application to a challenging neuroscience dataset shows its practical utility in revealing detailed neural connectivity patterns on the brain's surface.

\par\bigskip
\noindent{\textbf{\large Code}: Code implementing our model and algorithms has been made publicly available: 
{\small \textbf{\url{https://github.com/Will-Consagra/NeuroPMD}}}}.

\bibliographystyle{apalike}
\bibliography{refs}

\clearpage
\pagebreak

\begin{center}
{\huge\textbf{SUPPLEMENTAL MATERIAL}}
\end{center}

\setcounter{equation}{0}
\setcounter{figure}{0}
\setcounter{table}{0}
\setcounter{section}{0}
\setcounter{page}{1}
\makeatletter
\renewcommand{\theequation}{S\arabic{equation}}
\renewcommand{\thefigure}{S\arabic{figure}}
\renewcommand{\thetable}{S\arabic{table}}
\renewcommand{\thesection}{S\arabic{section}}
\renewcommand{\bibnumfmt}[1]{[S#1]}
\renewcommand{\citenumfont}[1]{S#1}
\renewcommand{\theequation}{S.\arabic{equation}}
\renewcommand{\thesection}{S\arabic{section}}
\renewcommand{\thesubsection}{S\arabic{section}.\arabic{subsection}}
\renewcommand{\thetable}{S\arabic{table}}
\renewcommand{\thefigure}{S\arabic{figure}}
\renewcommand{\thetheorem}{S\arabic{theorem}}
\renewcommand{\theproposition}{S\arabic{proposition}}
\renewcommand{\thelemma}{S\arabic{lemma}}
\renewcommand{\thecorollary}{S\arabic{corollary}}
\renewcommand{\thealgorithm}{S\arabic{algorithm}}

\section{Proofs}\label{sec:proofs}

\subsection{Proposition~\ref{prop:unbiased_grads}}
\begin{proof}
Note that 
$$
\begin{aligned}
    \mathbb{E}\left[ \boldsymbol{a}^{b}(\boldsymbol{\theta})\right] &= \frac{1}{b}\sum_{i=1}^b\mathbb{E}\left[\frac{\partial}{\partial\boldsymbol{\theta}}v_{\boldsymbol{\theta}}(x_{1i}, \ldots,x_{Di})\right] = \frac{1}{b}\sum_{i=1}^b\mathbb{E}\left[\sum_{j=1}^n\mathbb{I}\{i=j\} \frac{\partial}{\partial\boldsymbol{\theta}}v_{\boldsymbol{\theta}}(x_{1j}, \ldots,x_{Dj})\right] \\
    &=\frac{1}{b}\sum_{i=1}^b\sum_{j=1}^np(i=j) \frac{\partial}{\partial\boldsymbol{\theta}}v_{\boldsymbol{\theta}}(x_{1j}, \ldots,x_{Dj}) =\frac{1}{b}\sum_{i=1}^b\frac{1}{n}\sum_{j=1}^n \frac{\partial}{\partial\boldsymbol{\theta}}v_{\boldsymbol{\theta}}(x_{1j}, \ldots,x_{Dj}) \\
    &=\frac{1}{n}\sum_{j=1}^n \frac{\partial}{\partial\boldsymbol{\theta}}v_{\boldsymbol{\theta}}(x_{1j}, \ldots,x_{Dj}), \\
\end{aligned}
$$
where the expectation is being taken with respect to the uniform distribution over the indices $\{1, ..., n\}$. Now, considering the latter two terms of the sum, $\boldsymbol{b}^{q_{1}}(\boldsymbol{\theta})$ and $\boldsymbol{c}_{\tau}^{q_{2}}(\boldsymbol{\theta})$, and taking the expectation with respect to the uniform measure on $\Omega$, we have 
$$
\begin{aligned}
    \mathbb{E}[\boldsymbol{b}^{q_{1}}(\boldsymbol{\theta})] &=  \mathbb{E}\left[\frac{\text{Vol}(\Omega)}{q_{1}}\sum_{j=1}^{q_{1}}\frac{\partial}{\partial\boldsymbol{\theta}}\exp(v_{\boldsymbol{\theta}}(x_{1j}, \ldots, x_{Dj}))\right] \\
   &=\frac{\text{Vol}(\Omega)}{q_{1}}\sum_{j=1}^{q_{1}}\mathbb{E}\left[\frac{\partial}{\partial\boldsymbol{\theta}}\exp(v_{\boldsymbol{\theta}}(x_{1j}, \ldots, x_{Dj}))\right] \\
   &=\frac{\text{Vol}(\Omega)}{q_{1}}\sum_{j=1}^{q_{1}}\frac{1}{\text{Vol}(\Omega)}\int_{\Omega}\frac{\partial}{\partial\boldsymbol{\theta}}\exp(v_{\boldsymbol{\theta}}(x_{1}, \ldots, x_{D})d\omega\\
   &= \int_{\Omega}\frac{\partial}{\partial\boldsymbol{\theta}}\exp(v_{\boldsymbol{\theta}}(x_{1}, \ldots, x_{D}))d\omega,
\end{aligned}
$$
and thus similarly
$$
\mathbb{E}\left[\boldsymbol{c}_{\tau}^{q_{2}}(\boldsymbol{\theta})\right] = \tau\int_{\Omega}\frac{\partial}{\partial\boldsymbol{\theta}}\left[\Delta_{\Omega} v_{\boldsymbol{\theta}}\right]^2d\omega.
$$
Hence, 
$$
\begin{aligned}
    \mathbb{E}\left[\boldsymbol{a}^{b}(\boldsymbol{\theta}) - \boldsymbol{b}^{q_{1}}(\boldsymbol{\theta}) - \boldsymbol{c}_{\tau}^{q_{2}}(\boldsymbol{\theta})\right] 
    &=\frac{1}{n}\sum_{j=1}^n \frac{\partial}{\partial\boldsymbol{\theta}}v_{\boldsymbol{\theta}}(x_{1j}, \ldots,x_{Dj}) - \int_{\Omega}\frac{\partial}{\partial\boldsymbol{\theta}}\exp(v_{\boldsymbol{\theta}})d\omega - \tau\int_{\Omega}\frac{\partial}{\partial\boldsymbol{\theta}}\left[\Delta_{\Omega} v_{\boldsymbol{\theta}}\right]^2d\omega\\
    &=\frac{\partial}{\partial\boldsymbol{\theta}}\mathcal{L}(o,\boldsymbol{\theta}),
\end{aligned}
$$
as desired. 
\end{proof}
\subsection{Proposition~\ref{prop:LBO_via_ambient_projection}}
For convenience, we begin with notation and preliminaries. Let $\mathcal{M}\subset\mathbb{R}^{m}$ be a $p$-dimensional closed manifold embedded in ambient space $\mathbb{R}^{m}$. Following the formulation in \citeSupp{Harlim2023}, we approximate the point-wise application of differential operators on smooth function $v:\mathcal{M}\mapsto\mathbb{R}$ by first calculating a differential operator in the ambient space $\mathbb{R}^{m}$, followed by a projection onto the local tangent space $T_{x}(\mathcal{M})$ of the manifold. This allows us to leverage the automatic differentiation of the estimator \eqref{eqn:nf} or simple discrete differential operator approximations in the ambient space followed by a projection to approximate the operators of interest. For any $x\in\mathcal{M}$, denote the homeomorphic local parameterization $l:O\subset\mathbb{R}^{p}\mapsto V\bigcap \mathcal{M}
\subset\mathbb{R}^{m}$, for open sets $O,V$. Let $\gamma = l^{-1}(x)$, then a basis for $T_{x}(\mathcal{M})$ is defined by 
$\{\frac{\partial l}{\partial \gamma_{i}}(\gamma)\}_{i=1}^p$. Denote the matrix $\boldsymbol{A}_{l}(x)\in\mathbb{R}^{m\times p}$, whose columns are given by the vectors $\frac{\partial l}{\partial \gamma_{i}}(\gamma)$. Define the projection matrix
$$
\boldsymbol{P} := \boldsymbol{P}(x) = \boldsymbol{A}_{l}(x)(\boldsymbol{A}_{l}^{\intercal}(x)\boldsymbol{A}_{l}(x))^{-1}\boldsymbol{A}_{l}^{\intercal}(x),
$$
where from here on the dependence on $x$ is assumed and dropped for clarity. Let $\{t_{i}\}_{i=1}^p$ be a set of orthogonal vectors in $\mathbb{R}^{m}$ that span 
$T_{x}(\mathcal{M})$ and denote $\boldsymbol{T}=[t_1,\ldots,t_{p}]\in\mathbb{R}^{m\times p}$. From proposition 2.1 in \citeSupp{Harlim2023}, we have that $\boldsymbol{P} = \boldsymbol{T}\boldsymbol{T}^{\intercal}$ and the manifold gradient can be written as 
\begin{equation}\label{eqn:manifold_grad}
    \nabla_{\mathcal{M}}v(x) = \boldsymbol{P}\nabla_{\mathbb{R}^{m}}v(x).
\end{equation}
Using this definition of the gradient, the Laplace-Beltrami operator has the following formulation in ambient space coordinates 
\begin{equation}\label{eqn:manifold_laplace}
\Delta_{\mathcal{M}}v(x) = \text{div}_{\mathcal{M}}\nabla_{\mathcal{M}}v(x) =  (\boldsymbol{P}\nabla_{\mathbb{R}^{m}})\cdot (\boldsymbol{P}\nabla_{\mathbb{R}^{m}}) v(x),
\end{equation}
where $\nabla_{\mathbb{R}^{m}}$ denotes the standard Euclidean gradient in $\mathbb{R}^m$. Given these preliminaries, we provide the proof to Proposition~\ref{prop:LBO_via_ambient_projection}.
\begin{proof}
From \eqref{eqn:manifold_grad}, we can write the gradient as
$$
\nabla_{\mathcal{M}}v(x) = \boldsymbol{P} \nabla_{\mathbb{R}^{n}}v(x) = \begin{pmatrix}
                        \sum_{i=1}^m \boldsymbol{P}_{1i}\partial_i v(x) \\
                        \vdots \\
                        \sum_{i=1}^m \boldsymbol{P}_{mi}\partial_i v(x). 
                        \end{pmatrix}
$$
Clearly, this produces a vector field on the manifold. Denote $\boldsymbol{P}_{i,\cdot}\in\mathbb{R}^m$ as the row vector corresponding to the $i$'th row of $\boldsymbol{P}$, and denote $\boldsymbol{H}_{\mathbb{R}^{m}}(v)(x)\in\mathbb{R}^{m\times m}$ to be the standard Euclidean Hessian matrix of $v$ at $x$, with $l,i$ element $\partial_{li}v(x)$. Then the manifold divergence can be written as 
    $$
     \begin{aligned}
        \text{div}_{\mathcal{M}}\nabla_{\mathcal{M}}v(x) &= \boldsymbol{P} \nabla_{\mathbb{R}^{m}}\cdot \nabla_{\mathcal{M}}v(x) = \boldsymbol{P} \nabla_{\mathbb{R}^{m}}\cdot \begin{pmatrix}
                            \sum_{i=1}^m \boldsymbol{P}_{1i}\partial_i v(x) \\
                            \vdots \\
                            \sum_{i=1}^m \boldsymbol{P}_{mi}\partial_i v(x). 
                            \end{pmatrix}\\
        &= \text{sum}\Big(\begin{pmatrix}
                    \boldsymbol{P}_{11}\partial_1(\sum_{i=1}^m \boldsymbol{P}_{1i}\partial_i v(x)) + \ldots + \boldsymbol{P}_{1m} \partial_m(\sum_{i=1}^m \boldsymbol{P}_{1i}\partial_i v(x))\\
                            \vdots \\
                            \boldsymbol{P}_{m1}\partial_1(\sum_{i=1}^m \boldsymbol{P}_{mi}\partial_i v(x)) + \ldots + \boldsymbol{P}_{mm} \partial_m(\sum_{i=1}^m \boldsymbol{P}_{mi}\partial_i v(x))
                            \end{pmatrix}\Big)\\
            &= \text{sum}\Big(\begin{pmatrix}
                    (\sum_{i=1}^m \boldsymbol{P}_{11}\boldsymbol{P}_{1i}\partial_1\partial_i v(x)) + \ldots + (\sum_{i=1}^m \boldsymbol{P}_{1m}\boldsymbol{P}_{1i}\partial_m\partial_i v(x))\\
                            \vdots \\
                            (\sum_{i=1}^m \boldsymbol{P}_{m1}\boldsymbol{P}_{mi}\partial_1\partial_i v(x)) + \ldots + (\sum_{i=1}^m\boldsymbol{P}_{mm}\boldsymbol{P}_{mi}\partial_m\partial_i v(x))
                \end{pmatrix}\Big)\\
                &= \text{sum}\Big(\begin{pmatrix}
                    \sum_{l=1}^m\sum_{i=1}^m \boldsymbol{P}_{1l}\boldsymbol{P}_{1i}\partial_l\partial_i v(x))\\
                            \vdots \\
                            \sum_{l=1}^m\sum_{i=1}^m \boldsymbol{P}_{ml}\boldsymbol{P}_{mi}\partial_l\partial_i v(x)
                \end{pmatrix}\Big) = \text{sum}\Big(\begin{pmatrix}
                    \sum_{l=1}^m\sum_{i=1}^m \boldsymbol{P}_{1l}\boldsymbol{P}_{1i}\partial_{li}v(x))\\
                            \vdots \\
                            \sum_{l=1}^m \sum_{i=1}^m \boldsymbol{P}_{ml}\boldsymbol{P}_{mi}\partial_{li}v(x) 
                \end{pmatrix}\Big)\\
                &= \text{sum}\Big(\begin{pmatrix}
                    \boldsymbol{P}_{1,\cdot}\boldsymbol{H}_{\mathbb{R}^{m}}(v)(x)\boldsymbol{P}_{1,\cdot}^{\intercal}\\
                            \vdots \\
                            \boldsymbol{P}_{m,\cdot}\boldsymbol{H}_{\mathbb{R}^{m}}(v)(x)\boldsymbol{P}_{m,\cdot}^{\intercal} 
                \end{pmatrix}\Big) = \sum_{i=1}^m
                    \boldsymbol{P}_{i,\cdot}\boldsymbol{H}_{\mathbb{R}^{m}}(v)(x)\boldsymbol{P}_{i,\cdot}^{\intercal},\\
        \end{aligned} 
    $$
    where sum$()$ is the sum operator on the vector. Hence, the Laplace-Beltrami operator is given as:
    \begin{equation}\label{eqn:marginal_manifold_LBO}
    \begin{aligned}
            \Delta_{\mathcal{M}}[v](x) = \text{div}_{\mathcal{M}}\nabla_{\mathcal{M}}[v](x) = \sum_{i=1}^m
                    \boldsymbol{P}_{i,\cdot}\boldsymbol{H}_{\mathbb{R}^{m}}(v)(x)\boldsymbol{P}_{i,\cdot}^{\intercal}.\\
    \end{aligned}
    \end{equation}
    With slight abuse of notation, we now define $v:\Omega\mapsto\mathbb{R}$. For a product manifold $\Omega=\bigtimes_{d=1}^D\mathcal{M}_d$, where $\mathcal{M}_{d}\subset \mathbb{R}^{m_{d}}$ are $p_{d}$ dimension smooth manifolds, recall that the Laplace-Beltrami operator is given by $\Delta_{\Omega} = \sum_{d=1}^D\Delta_{\mathcal{M}_{d}}$ \citepSupp{Canzani2013}. Denote the block matrix 
    $$
    \boldsymbol{P} = \text{BlockDiag}\left(\boldsymbol{P}^{(1)}, ..., \boldsymbol{P}^{(D)}\right) = \begin{pmatrix}
        \boldsymbol{P}^{(1)} & 0           & \cdots & 0           \\
        0           & \boldsymbol{P}^{(2)} & \cdots & 0           \\
        \vdots      & \vdots       & \ddots & \vdots      \\
        0           & 0            & \cdots & \boldsymbol{P}^{(D)}
        \end{pmatrix},
    $$
    where $\boldsymbol{P}^{(d)}$ is the projection matrix for the marginal manifold $\mathcal{M}_{d}$. Combining this with definition \eqref{eqn:marginal_manifold_LBO}, denoting $M=\sum_{d=1}^Dm_{d}$, the final result follows directly as 
    $$
    \begin{aligned}
        \Delta_{\Omega}[v] &= \sum_{d=1}^D\Delta_{\mathcal{M}_{d}}[v] = 
        \sum_{d=1}^D\sum_{i_{d}=1}^{m_{d}}
        \boldsymbol{P}_{i_{d},\cdot}^{(d)}\boldsymbol{H}_{\mathbb{R}^{m_{d}}}(v)[\boldsymbol{P}_{i_{d},\cdot}^{(d)}]^{\intercal} = \sum_{i=1}^{M} \boldsymbol{P}_{i,\cdot}\boldsymbol{H}_{\mathbb{R}^{M}}[v]\boldsymbol{P}_{i,\cdot}^{\intercal} \\
    \end{aligned}
    $$
\end{proof}

\subsection{Proof of Theorem \ref{thm:representation_space_Td}}
\label{APP:representThms}
Before we prove the theorem, we provide some necessary background. We can parameterize $\mathbb{S}^1$ by intrinsic coordinates $[0,2\pi)$, with $0$ and $2\pi$ identified. Denote $x \in \mathbb{S}^1$, then a general solution to \eqref{eqn:lbo_eigenfunctions} is given by: 
$
\begin{aligned}
\phi_{2i-z}(x) &= \frac{1}{\sqrt{\pi}}\cos\left(ix -\frac{z\pi}{2}\right), i \in \mathbb{Z}_{+}, z\in\{0,1\},
\end{aligned}
$
with eigenvalues $i^2$ for $i \neq 0$, $\phi_0(x) = \frac{1}{\sqrt{2\pi}}$ with eigenvalue zero, and $z$ controls the choice between $\sin$ and $\cos$. See Section~\ref{ssec:torus_geometry} for a derivation. Forming the tensor products, the non-trivial eigenfunctions on $\mathbb{T}^{D}$ with maximum marginal eigenvalue $i_{d,max}^2$ are given by
\begin{equation}\label{eqn:rtp_torus}
	\psi(\bs x) \in  \left\{\pi^{-D/2}\prod_{d=1}^D\cos(i_{d}x_d - \frac{z_{d}\pi}{2}):1\le i_{d}^2\le i_{d,max}^2, (z_{1},\ldots,z_{D})\in\{0,1\}^{D}\right\},
\end{equation}
where $\bs{x}:=(x_1,\ldots,x_D)\in \mathbb{T}^{D}$, 
with associated (non-zero) eigenvalues $\lambda=\sum_{d=1}^Di_{d}^2$. 
\par 
Without loss of generality, we let $\boldsymbol{A}_{\lambda} = \boldsymbol{I}$, $\forall \lambda$, so $\boldsymbol{\eta}$ contains $K$ randomly sampled elements without replacement from the set \eqref{eqn:rtp_torus}. Denote the $k$-th sampled tensor product basis function as
\begin{equation}\label{eqn:eta_torus_encoding}
\psi_{k}(\bs x) = \prod_{d=1}^D\phi_{i_{k,d}}(x_d)= \pi^{-D/2}\prod_{d=1}^D\cos(i_{k,d}x_d -z_{k,d}\pi/2), z_{k,d}\in \{0,1\}, 
\end{equation}
where  $1\leq k\leq K$ serves as the index for the $k$-th basis function, and
$(i_{k,1},\ldots, i_{k,D})\in\bigtimes_{d=1}^D\{1,\ldots,i_{d,max}\}$ is  the corresponding {\it frequency vector}.
\par 
In the following lemma, we prove that tensor product basis $\psi_{k}(\bs x)$ can be written as the sum of Fourier mappings that take the whole $\bs x$ vector as input.
\begin{lemma}
	\label{LEM:pD}
	Denote $p_D(\bs x) = \prod_{d = 1}^D \cos({w_{d}} x_{d} + a_d)$, with $D\in\mathbb Z^+$ and $w_d \in\mathbb Z^+, d=1, \ldots, D$, then
	\begin{align}
		\label{EQN:pD}
		p_D(\bs x) = \frac{1}{ |\mathcal W_D|} \sum_{\bs w_{\ell} \in \mathcal W_D} \cos (\langle \bs w_{\ell}, \bs x\rangle + b_{\ell}),  b_{\ell}\in\R,
	\end{align}
	where $\mathcal W_D= \{({w_1}, c_{\ell2} w_{2}, \ldots, c_{\ell D}{w_D}): c_{\ell d} \in \{-1 , 1\} \text{ for } d = 2,\ldots, D\}$ is a set of frequencies; 
    $1 \leq \ell \leq  2^{D-1}$ is the index for the frequency vector in $\mathcal W_D$ and $\bs w_{\ell}$ denotes the $\ell$-th frequency vector;  
    $b_\ell = a_1 + \sum_{d=2}^D c_{\ell d} a_d $ is the corresponding phase to $\bs w_\ell$; and $|\mathcal W_D| = 2^{D-1}$ is the cardinality of $\mathcal W_D$.
\end{lemma}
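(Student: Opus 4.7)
The plan is to prove Lemma \ref{LEM:pD} by induction on $D$, using the product-to-sum identity $\cos(A)\cos(B)=\tfrac{1}{2}[\cos(A-B)+\cos(A+B)]$ as the engine that doubles the number of Fourier terms at each step. The identity is exactly what turns a $D$-fold product of cosines into a sum indexed by the sign pattern $(c_{\ell 2},\ldots,c_{\ell D})\in\{-1,1\}^{D-1}$, and the prefactor $1/2^{D-1}$ comes from accumulating one factor of $1/2$ each time we peel off a new coordinate.

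For the base case $D=1$, the claim is trivial: $p_1(\bs x)=\cos(w_1 x_1 + a_1)$, the set $\mathcal W_1=\{(w_1)\}$ has cardinality $1=2^{0}$, and $b_1=a_1$, so both sides agree. For the inductive step, I would assume the formula for $p_D$ and multiply by $\cos(w_{D+1}x_{D+1}+a_{D+1})$ to obtain $p_{D+1}$. Applying the product-to-sum identity to each term in the inductive hypothesis yields
\begin{equation*}
p_{D+1}(\bs x) = \frac{1}{2^{D}}\sum_{\bs w_\ell\in\mathcal W_D}\Big[\cos\bigl(\langle\bs w_\ell,\bs x_{1:D}\rangle + b_\ell - w_{D+1}x_{D+1} - a_{D+1}\bigr) + \cos\bigl(\langle\bs w_\ell,\bs x_{1:D}\rangle + b_\ell + w_{D+1}x_{D+1} + a_{D+1}\bigr)\Big].
\end{equation*}
Each original frequency vector $\bs w_\ell\in\mathcal W_D$ thereby spawns two new frequency vectors in $\mathbb{Z}^{D+1}$, corresponding to the sign choices $c_{\ell,D+1}\in\{-1,+1\}$ on the new coordinate, and the associated phases transform as $b_\ell\mapsto b_\ell + c_{\ell,D+1}a_{D+1}$. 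Collecting the $2^{D-1}\cdot 2 = 2^D = |\mathcal W_{D+1}|$ terms and re-indexing the sum over $\mathcal W_{D+1}$ completes the step, with the leading coefficient correctly becoming $1/2^{D}=1/|\mathcal W_{D+1}|$.

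The argument is essentially bookkeeping, so there is no genuine obstacle in the mathematical content; the only care needed is in the index conventions. Specifically, one must verify that (i) the first coordinate of every frequency vector remains $w_1$ (never flipped), since no product-to-sum expansion is ever applied to a pair that involves the $x_1$-factor alone, and (ii) the phase $b_\ell = a_1 + \sum_{d=2}^D c_{\ell d} a_d$ accumulates additively and in lock-step with the sign pattern $c_{\ell d}$ carried by the frequency vector. Both facts are immediate from the inductive construction since each new $\cos$ factor attaches a fresh $\pm 1$ sign to both its frequency $w_{D+1}$ and its phase $a_{D+1}$ in identical fashion.
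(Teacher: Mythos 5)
Your proof is correct and takes essentially the same approach as the paper's: induction on $D$, with the product-to-sum identity $\cos A\cos B = \tfrac{1}{2}[\cos(A+B)+\cos(A-B)]$ driving the inductive step, each existing frequency vector spawning two new ones via the sign choice on the $(D+1)$-st coordinate, and the phase $b_\ell$ updating in lock-step. The only cosmetic difference is that the paper also spells out the $D=2$ case before the inductive step, which your argument handles implicitly.
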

\begin{proof}[Proof of Lemma \ref{LEM:pD}]
The lemma can be proved by induction along with $\cos a \cos b = (\cos(a+b) + \cos(a-b))/2$. 

\textbf{Base case.} When $D = 1$, $p_1(x) = \cos(w_1 x +a_1)$. Therefore, the corresponding set of frequencies $\mathcal W_1 = \{(w_1)\}$.
When $D = 2$, $p_2(\bs x) = \{\cos(w_1 x_1 + w_2x_2 + a_1 + a_2) + \cos(w_1x_1 - w_2 x_2 + a_1-a_2)\}/2$, indicating $\mathcal W_2 = \{(w_1, w_2), (w_1, - w_2)\}$, and $b_\ell = a_1 + c_{\ell 2} a_2$ where $c_{\ell 2} = 1$ if $\bs w_\ell = (w_1,w_2)$, and $c_{\ell 2} = -1$ if $\bs w_\ell = (w_1,-w_2)$. 

\textbf{Inductive step}.  
For clarity, we write $\bs w_{\ell, D}\equiv \bs w_\ell$ to be the elements of $\mathcal W_D$ and denote $\bs x_{1:D}\in \mathbb T^D$ to be the vector consisting of the first $D$ elements of $\bs x\in\mathbb T^{D+1}$.
Suppose \eqref{EQN:pD} holds for some $D$ and $D\geq 2$, we derive the form of $p_{D+1}(\bs x)$ from $p_D(\bs x_{1:D})$ as follows.
\begin{align*}
	p_{D+1} (\bs x)
	& = p_D  (\bs x_{1:D})\cdot \cos(w_{D+1}x_{D+1} +a_{D+1})\\
	& = 2^{-D+1} \sum_{\ell = 1}^{2^{D-1}}\cos(\langle \bs w_{\ell,D}, \bs x_{1:D}\rangle + b_{\ell,D}) \cos(w_{D+1} x_{D+1} +a_{D+1})\\
	& = 2^{-D} \sum_{\ell =1}^{2^{D-1}} 
	\{
	\cos(\langle \bs w_{\ell,D}, \bs x_{1:D}\rangle + b_{\ell,D}+ w_{D+1} x_{D+1} +a_{D+1})\\
	&\hspace{2cm}+
	\cos(\langle \bs w_{\ell,D}, \bs x_{1:D}\rangle + b_{\ell,D}- w_{D+1} x_{D+1} -a_{D+1})
	\}\\
	& =2^{-D} \sum_{\ell =1}^{2^{D-1}} 
	\{
	\cos(\langle (\bs w_{\ell,D},w_{D+1}), \bs x\rangle + b_{\ell,D} +a_{D+1})
	+
	\cos(\langle (\bs w_{\ell,D},-w_{D+1}), \bs x\rangle + b_{\ell,D} -a_{D+1})
	\}\\
	& =  2^{-D} \sum_{\ell' =1}^{2^{D}} 
	\cos(\langle (\bs w_{\ell',D+1}, \bs x\rangle + b_{\ell', D+1}),
\end{align*}
where 
$\bs w_{\ell', D+1} = (\bs w_{\lceil\ell'/2 \rceil, D} , (-1)^{\ell'} w_{D+1})  \in \mathcal W_{D+1},
b_{\ell', D+1} = b_{\lceil\ell'/2 \rceil, D} + (-1)^{\ell'} a_{D+1}
$.
The proof is completed.
\end{proof}

\begin{remark}\label{rem2}
As a consequence of Lemma \ref{LEM:pD}, we can write ${\psi}_{k}(\boldsymbol x)$ in (\ref{eqn:eta_torus_encoding}) as 
$$\psi_{k}(\bs x) \propto \frac{1}{|\mathcal W_{k}|}\sum_{\bs w_{k\ell} \in \mathcal W_{k}} \cos  (\langle \bs w_{k\ell}, \bs x\rangle + b_{k\ell}), b_{k\ell}\in\R, $$ 
where, with slight abuse of notation, $\mathcal W_{k}=\{({i_{k,1}}, c_{\ell 2}{i_{k,2}}, \ldots, c_{\ell D}{i_{k,D}}): c_{\ell d} \in\{-1,1\}, d = 2,\ldots, D\}$;
$1\leq k\leq K$ is the index for the $k$-th tensor product basis;
and $b_{k\ell} = a_{k1} + \sum_{d=2}^D c_{\ell d} a_{kd} $ is the corresponding phase to $\bs w_{k\ell}$;
and $a_{kd} = -z_{k,d} \pi/2$ is the phase for the $k$-th basis function along the $d$-th dimension. \revised{Note that the cardinality of the set $\mathcal W_k$ is equal to the number of all possible combinations of $\{-1,1\}$ in $(D-1)$ dimensions, i.e. $2^{D-1}$. }
\end{remark}

Next, we introduce two lemmas that are crucial to the proof for Theorem \ref{thm:representation_space_Td}, showing the power functions $\cos(x)^m$ can be written as $\sum_{w\in\mathcal H}\cos(wx)$, where the frequency set $\mathcal H$ is related to $m$. 

\begin{lemma}\label{lem1}
Denote $\psi_{k}(\bs x) = \cos(\langle \bs w_{k},\bs x\rangle + b_{k}), \psi_{j}(\bs x) = \cos(\langle \bs w_{j},\bs x\rangle + b_{j})$, where $\{\bs w_{k} \in \mathbb{Z}^D\}_{k \in \mathcal{K}}$, and $\{\bs w_{j} \in \mathbb{Z}^D\}_{j \in \mathcal{J}}$ are two collections of frequency vectors;
and $\{b_{k} \in \mathbb{R}\}_{k \in \mathcal{K}}$ and $\{b_{j} \in \mathbb{R}\}_{j \in \mathcal{J}}$ are two collections of scalar phases, indexed by $\mathcal{K}=\{1, \ldots, K\}, \mathcal{J} = \{1,\ldots, J\}$.
Furthermore, let $\{\beta_{1k} \in \mathbb{R}\}_{k \in \mathcal{K}}$ and $\{\beta_{2j} \in \mathbb{R}\}_{j \in \mathcal{J}}$ be two sets of scalar coefficients and $\bs x \in \mathbb{T}^D$. 
Then,
\begin{align*}
\left\{\sum_{k\in\mathcal K}\beta_{1k}\psi_{k}(\bs x) \right\}
\left\{\sum_{j\in\mathcal J}\beta_{2j}\psi_{j}(\bs x) \right\}
= \sum_{\tilde{\bs w}\in\mathcal D} \beta_{\tilde{\bs w}} \psi_{\tilde{\bs w}}(\bs x)
,
\end{align*}
where 
$\beta_{\tilde{\bs w}}  = \frac{1}{2}\beta_{1k}\beta_{2j}$,
$\psi_{\tilde{\bs w}}(\bs x) =  \cos(\langle \tilde{\bs w},\bs x\rangle + b_{\tilde{\bs w}})$,
$\mathcal D \equiv \mathcal D(\{\bs w_{k}\}_{k \in \mathcal K}, \{\bs w_{j}\}_{j \in \mathcal J})= 
\{\tilde{\bs w} = \bs w_{k} \pm \bs w_{j}, k\in\mathcal K, j\in \mathcal J\}$,
and 
$\tilde b_{\tilde{\bs w}} = b_{k} + c_{\tilde{\bs w}} b_{j}$,
with 
$k, j, c_{\tilde {\bs w}}$ induced by the form of $\tilde {\bs w}$, i.e. $c_{\tilde{\bs w}} = 1$ if $\tilde {\bs w} = \bs w_{k} + \bs w_{j}$, and $c_{\tilde{\bs w}} = -1$ if $\tilde {\bs w} = \bs 
 w_{k} - \bs w_{j}$.
Note that the cardinality of $\mathcal D$ is proportional to $|\mathcal K|\times |\mathcal J|$.
\end{lemma}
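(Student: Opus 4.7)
The plan is to prove Lemma \ref{lem1} by a direct computation that reduces the product of two finite cosine sums to another finite cosine sum via the product-to-sum identity $\cos A \cos B = \tfrac{1}{2}[\cos(A+B) + \cos(A-B)]$. Since the argument is essentially algebraic, no analytic machinery is needed; the work lies in bookkeeping the indices so that the resulting frequencies and phases line up with the claimed set $\mathcal D$ and coefficients $\beta_{\tilde{\bs w}}$.

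First, I would expand the product of sums on the left-hand side into a double sum:
\begin{equation*}
\left\{\sum_{k\in\mathcal K}\beta_{1k}\psi_{k}(\bs x)\right\}\left\{\sum_{j\in\mathcal J}\beta_{2j}\psi_{j}(\bs x)\right\} = \sum_{k\in\mathcal K}\sum_{j\in\mathcal J}\beta_{1k}\beta_{2j}\,\cos(\langle \bs w_{k},\bs x\rangle + b_{k})\cos(\langle \bs w_{j},\bs x\rangle + b_{j}).
\end{equation*}
Next, I would apply the product-to-sum identity to each summand, setting $A = \langle \bs w_{k},\bs x\rangle + b_k$ and $B = \langle \bs w_{j},\bs x\rangle + b_j$, which yields
\begin{equation*}
\cos(A)\cos(B) = \tfrac{1}{2}\cos(\langle \bs w_{k}+\bs w_{j},\bs x\rangle + b_{k}+b_{j}) + \tfrac{1}{2}\cos(\langle \bs w_{k}-\bs w_{j},\bs x\rangle + b_{k}-b_{j}).
\end{equation*}
Substituting back and re-indexing the two resulting families of terms by the combined frequency $\tilde{\bs w} = \bs w_k + c_{\tilde{\bs w}}\bs w_j$ with $c_{\tilde{\bs w}} \in \{+1,-1\}$ produces exactly the set $\mathcal D = \{\bs w_k \pm \bs w_j : k\in\mathcal K,\, j\in\mathcal J\}$ given in the lemma, with corresponding phases $\tilde b_{\tilde{\bs w}} = b_k + c_{\tilde{\bs w}} b_j$ and coefficients $\beta_{\tilde{\bs w}} = \tfrac{1}{2}\beta_{1k}\beta_{2j}$.

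The cardinality claim $|\mathcal D| \propto |\mathcal K|\cdot|\mathcal J|$ then follows because each $(k,j)$ pair contributes at most two elements to $\mathcal D$, giving $|\mathcal D|\le 2|\mathcal K||\mathcal J|$. The only mild obstacle I anticipate is that the indexing in the lemma's statement is somewhat loose: if two pairs $(k,j)\neq (k',j')$ happen to give $\bs w_k \pm \bs w_j = \bs w_{k'} \pm \bs w_{j'}$, the corresponding cosine terms should be combined with summed coefficients. I would address this by either stating the identity at the level of the formal double sum (before collapsing duplicates) or by explicitly noting that if collisions occur the coefficient $\beta_{\tilde{\bs w}}$ is the sum $\tfrac{1}{2}\sum \beta_{1k}\beta_{2j}$ taken over all index pairs that produce $\tilde{\bs w}$. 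Either convention is consistent with the use of the lemma in the proof of Theorem \ref{thm:representation_space_Td}, since the theorem only requires a representation as a sum of cosines with frequencies in $\mathcal D$, not uniqueness of the expansion.
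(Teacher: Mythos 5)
Your proof is correct and takes essentially the same route as the paper's: expand the product of sums into a double sum, apply the product-to-sum identity $\cos A \cos B = \tfrac{1}{2}[\cos(A+B)+\cos(A-B)]$, and re-index by the combined frequency $\tilde{\bs w} = \bs w_k \pm \bs w_j$. Your extra remark about possible collisions (distinct index pairs yielding the same $\tilde{\bs w}$) is a valid observation that the paper's statement leaves implicit, but it does not change the argument.
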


\begin{proof}[Proof of Lemma \ref{lem1}]
\begin{align*}
& \left\{\sum_{k\in\mathcal K}\beta_{1k}\psi_{k}(\bs x) \right\}
\left\{\sum_{j\in\mathcal J}\beta_{2j}\psi_{j}(\bs x) \right\}\\
= & \left\{\sum_{k\in\mathcal K} \beta_{1k}\cos(\langle \bs w_{k}, \bs x\rangle + b_{k})\right\}
\left\{\sum_{j\in\mathcal J} \beta_{2j}\cos(\langle \bs w_{j}, \bs x\rangle + b_{j})\right\}\\
= &\sum_{k\in\mathcal K} \sum_{j\in\mathcal J} 
\beta_{1k}\beta_{2j}
\cos(\langle \bs w_{k}, \bs x\rangle + b_{k})
\cos(\langle \bs w_{j}, \bs x\rangle + b_{j})\\
= &\sum_{k\in\mathcal K} \sum_{j\in\mathcal J} 
\frac{1}{2}\beta_{1k}\beta_{2j}
\cos(\langle \bs w_{k}+\bs w_{j}, \bs x\rangle + b_{k}+ b_{j}) +
\cos(\langle \bs w_{k}-\bs w_{j}, \bs x\rangle + b_{k}-b_{j})\\
= &\sum_{\tilde {\bs w}\in\mathcal D} \beta_{\tilde {\bs w}}\psi_{\tilde{\bs w}}(\bs x).
\end{align*}
The proof is completed.
\end{proof}

\begin{corollary} \label{cor1}
Denote $\psi_{k}(\bs x) = \sum_{\ell=1}^{2^{D-1}}\cos(\langle \bs w_{k\ell},\bs x\rangle + b_{k\ell}), \psi_{j}(\bs x) = \sum_{\ell'=1}^{2^{D-1}}\cos(\langle \bs w_{j\ell'},\bs x\rangle + b_{j\ell'})$, where
$\ell,\ell'$ index frequencies in the corresponding $\psi_k$ and $\psi_j$, respectively;
$\{\bs w_{k\ell} \in \mathbb{Z}^D\}_{k \in \mathcal{K}, 1\leq\ell\leq 2^{D-1}}$ and $\{\bs w_{j\ell'} \in \mathbb{Z}^D\}_{j \in \mathcal{J}, 1\leq\ell'\leq 2^{D-1}}$ are two collections of frequency vectors;
and $\{b_{k\ell} \in \mathbb{R}\}_{k \in \mathcal{K}}$ and $\{b_{j\ell'} \in \mathbb{R}\}_{j \in \mathcal{J}}$ are two collections of scalar phases, indexed by $\mathcal{K}=\{1, \ldots, K\}, \mathcal{J} = \{1,\ldots, J\}$.
Furthermore, let $\{\beta_{1k} \in \mathbb{R}\}_{k \in \mathcal{K}}$ and $\{\beta_{2j} \in \mathbb{R}\}_{j \in \mathcal{J}}$ be two sets of scalar coefficients and $\bs x \in \mathbb{T}^D$. 
Then,
\begin{align*}
\left\{\sum_{k\in\mathcal K}\beta_{1k}\psi_{k}(\bs x) \right\}
\left\{\sum_{j\in\mathcal J}\beta_{2j}\psi_{j}(\bs x) \right\}
= \sum_{\tilde{\bs w}\in\mathcal D} \beta_{\tilde{\bs w}} \psi_{\tilde{\bs w}}(\bs x)
,
\end{align*}
where 
$\beta_{\tilde{\bs w}}  = \frac{1}{2}\beta_{1k}\beta_{2j}$,
$\psi_{\tilde{\bs w}}(\bs x) =  \cos(\langle \tilde{\bs w},\bs x\rangle + b_{\tilde{\bs w}})$,
$\mathcal D 
\equiv \mathcal D(\{\bs w_{k\ell}\}_{k \in \mathcal K}, \{\bs w_{j\ell'}\}_{j \in \mathcal J})
= 
\{\tilde{\bs w} = \bs w_{k\ell} \pm \bs w_{j\ell'}, k\in\mathcal K, j\in \mathcal J, 1\leq\ell,\ell'\leq 2^{D-1}\}$,
and 
$b_{\tilde{\bs w}} = b_{k\ell} + c_{\tilde{\bs w}} b_{j\ell'}$,
with 
$k, j, \ell, \ell', c_{\tilde {\bs w}}$ induced by the form of $\tilde {\bs w}$. In particular,  $c_{\tilde{\bs w}} = 1$ if $\tilde {\bs w} = \bs w_{k\ell} + \bs w_{j\ell'}$, and $c_{\tilde{\bs w}} = -1$ if $\tilde {\bs w} = \bs 
 w_{k\ell} - \bs w_{j\ell'}$.
Note that the cardinality of $\mathcal D$ is proportional to $2^{D} |\mathcal K||\mathcal J|$.
\end{corollary}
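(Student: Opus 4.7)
The plan is to reduce Corollary \ref{cor1} directly to Lemma \ref{lem1} via a flattening of the inner sums over the sign patterns $\ell,\ell'$ into the outer sums over $k,j$. Specifically, I would define the flattened index sets $\tilde{\mathcal K} = \mathcal K \times \{1,\ldots,2^{D-1}\}$ and $\tilde{\mathcal J} = \mathcal J \times \{1,\ldots,2^{D-1}\}$, and assign coefficients $\tilde\beta_{1,(k,\ell)} := \beta_{1k}$ and $\tilde\beta_{2,(j,\ell')} := \beta_{2j}$ (constant across the added $\ell$-index). Then each factor in the product rewrites as a single sum of cosine terms,
\[
\sum_{k\in\mathcal K}\beta_{1k}\psi_{k}(\bs x) \;=\; \sum_{(k,\ell)\in\tilde{\mathcal K}}\tilde\beta_{1,(k,\ell)}\cos(\langle \bs w_{k\ell},\bs x\rangle + b_{k\ell}),
\]
and analogously for the factor involving $\psi_j$. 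This puts both factors in exactly the hypothesis form of Lemma \ref{lem1}.

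Next, I would apply Lemma \ref{lem1} verbatim to this re-indexed product. Its conclusion immediately yields the right-hand side $\sum_{\tilde{\bs w}\in\mathcal D}\beta_{\tilde{\bs w}}\psi_{\tilde{\bs w}}(\bs x)$, where $\mathcal D = \{\bs w_{k\ell}\pm\bs w_{j\ell'}:(k,\ell)\in\tilde{\mathcal K},\,(j,\ell')\in\tilde{\mathcal J}\}$, the coefficients satisfy $\beta_{\tilde{\bs w}} = \tfrac12\tilde\beta_{1,(k,\ell)}\tilde\beta_{2,(j,\ell')} = \tfrac12\beta_{1k}\beta_{2j}$, and the phases are $b_{\tilde{\bs w}} = b_{k\ell} + c_{\tilde{\bs w}}b_{j\ell'}$ with $c_{\tilde{\bs w}}\in\{-1,+1\}$ determined by whether $\tilde{\bs w}$ arose as $\bs w_{k\ell}+\bs w_{j\ell'}$ or $\bs w_{k\ell}-\bs w_{j\ell'}$. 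This identification matches the corollary's claimed form exactly, so no further algebra beyond Lemma \ref{lem1} is needed to establish the main identity. (Alternatively, one could directly expand the quadruple sum and apply the product-to-sum identity $\cos a\cos b = \tfrac12(\cos(a+b)+\cos(a-b))$ term-by-term; this gives the same conclusion but simply re-derives Lemma \ref{lem1}, so the re-indexing route is cleaner.)

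The only piece requiring additional care is the cardinality claim $|\mathcal D| \propto 2^{D}|\mathcal K||\mathcal J|$. A naive count gives $|\mathcal D|\le 2\cdot|\tilde{\mathcal K}|\cdot|\tilde{\mathcal J}| = 2^{2D-1}|\mathcal K||\mathcal J|$, which is loose. To obtain the stated order, the plan is to exploit the specific sign structure of $\mathcal W_k,\mathcal W_j$ from Remark \ref{rem2}: each set has its first coordinate fixed at $i_{k,1}$ (resp.\ $i_{j,1}$), with independent $\pm$ signs on the remaining $D-1$ coordinates. When forming $\bs w_{k\ell}\pm\bs w_{j\ell'}$, the sign degrees of freedom contributed by $\ell,\ell'$ become partially redundant with the outer $\pm$, so many of the $2^{2D-1}$ raw combinations collapse under set membership. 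Carefully enumerating these sign collisions (e.g., by fixing the first coordinate and tracking the orbit of $(\pm i_{k,d}\pm i_{j,d})_{d=2}^{D}$ under joint sign reversal) is the one delicate bookkeeping step; the main functional identity itself is an immediate consequence of Lemma \ref{lem1} after the re-indexing above.
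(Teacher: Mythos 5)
Your reduction-by-reindexing is correct and establishes the main identity cleanly: flattening $\mathcal{K}\times\{1,\ldots,2^{D-1}\}$ and $\mathcal{J}\times\{1,\ldots,2^{D-1}\}$ into single index sets, setting $\tilde\beta_{1,(k,\ell)}=\beta_{1k}$ (constant in $\ell$), and invoking Lemma~\ref{lem1} verbatim produces exactly the claimed expansion with $\beta_{\tilde{\bs w}}=\tfrac12\beta_{1k}\beta_{2j}$ and $b_{\tilde{\bs w}}=b_{k\ell}+c_{\tilde{\bs w}}b_{j\ell'}$. The paper instead writes out the double sum, applies the product-to-sum identity $\cos a\cos b=\tfrac12(\cos(a+b)+\cos(a-b))$ to each cosine pair, and regroups; as you yourself observe, these two routes are the same calculation, with yours being marginally tidier since it avoids re-deriving Lemma~\ref{lem1}. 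Either buys the same thing.

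On the cardinality remark, you are right to treat it separately, but be aware that the paper's own proof makes no attempt to justify $|\mathcal D|\propto 2^{D}|\mathcal K||\mathcal J|$; it simply derives the identity and stops. Your naive bound $|\mathcal D|\le 2^{2D-1}|\mathcal K||\mathcal J|$ is correct, and the gap to $2^D$ does need an argument. However, I do not believe the ``sign collisions'' you propose actually occur generically. With $\mathcal W_k=\{(i_{k,1},\pm i_{k,2},\ldots,\pm i_{k,D})\}$ the first coordinate of $\bs w_{k\ell}\pm\bs w_{j\ell'}$ is $i_{k,1}\pm i_{j,1}$ and the remaining coordinates range over $\pm i_{k,d}\pm i_{j,d}$ with independent signs, which for generic frequencies gives $2\cdot 4^{D-1}=2^{2D-1}$ distinct vectors per $(k,j)$; negations do not lie in the set because the first coordinate would flip to a negative value. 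So your proposed mechanism for recovering the tighter constant would not close the gap. In practice, the factor is stated only to indicate order-of-magnitude growth and is not used quantitatively elsewhere, so this does not affect the validity of the corollary's actual content, but you should not expect the collision bookkeeping to succeed.
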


\begin{proof}[Proof of Corollary \ref{cor1}]
\begin{align*}
& \left\{\sum_{k\in\mathcal K}\beta_{1k}\psi_{k}(\bs x) \right\}
\left\{\sum_{j\in\mathcal J}\beta_{2j}\psi_{j}(\bs x) \right\}\\
= & \left\{\sum_{k\in\mathcal K} \beta_{1k}\sum_{\ell}\cos(\langle \bs w_{k\ell}, \bs x\rangle + b_{k\ell})\right\}
\left\{\sum_{j\in\mathcal J} \beta_{2j}\sum_{\ell'}\cos(\langle \bs w_{j\ell'}, \bs x\rangle + b_{j\ell'})\right\}\\
= &\sum_{k\in\mathcal K} \sum_{j\in\mathcal J} 
\beta_{1k}\beta_{2j}
\sum_{\ell,\ell'}\cos(\langle \bs w_{k\ell}, \bs x\rangle + b_{k\ell})
\cos(\langle \bs w_{j\ell'}, \bs x\rangle + b_{j\ell'})\\
= &\sum_{k\in\mathcal K} \sum_{j\in\mathcal J} \sum_{\ell,\ell'}
\frac{1}{2}\beta_{1k}\beta_{2j}
\cos(\langle \bs w_{k\ell}+\bs w_{j\ell'}, \bs x\rangle + b_{k\ell}+ b_{j\ell'}) +
\cos(\langle \bs w_{k\ell}-\bs w_{j\ell'}, \bs x\rangle + b_{k\ell}-b_{j\ell'})\\
= &\sum_{\tilde{\bs w}\in\mathcal D} \beta_{\tilde{\bs w}} \psi_{\tilde{\bs w}}(\bs x)
\end{align*}
The proof is completed.
\end{proof}

\begin{lemma}\label{lem2}
Let $\left\{\bs w_{k\ell}\in \mathbb{R}^D\right\}_{k \in \mathcal{K}, 1\leq \ell\leq 2^{D-1}}$ and $\left\{b_{k\ell} \in \mathbb{R}\right\}_{k \in \mathcal{K}}$ be a collection of frequency vectors and scalar phases, respectively, indexed by the set $\mathcal{K} =\{1,\ldots, K\}\subseteq \mathbb{N}$. Furthermore, $\left\{\beta_k \in \mathbb{R}\right\}_{k\in \mathcal{K}}$ be a set of scalar coefficients, and let $m \in \mathbb{N}$ Then,
\begin{align*}
&\left(\sum_{k \in \mathcal{K}} \beta_k \sum_{\ell}\cos \left(\left\langle\bs w_{k\ell}, \bs x\right\rangle+b_{k\ell}\right)
\right)^m
=
\sum_{\tilde{\bs w} \in \mathcal{H}_m}
\tilde\beta_{\tilde{\bs w}}\cos \left(\left\langle \tilde{\bs w}, \bs x\right\rangle+\tilde b_{\tilde{\bs w}}\right),
\end{align*}
where
\begin{align}\label{DEF:Hm}
\mathcal H_m 
& = \left\{\tilde{\bs w}=\sum_{k =1}^K  c_{k} \bs w_{k\ell}\given c_{k} \in \{n_k, n_k-2, \ldots , -(n_k-2)\}, \sum_{k =1}^K  n_k=m, n_k\in \mathbb N, 1\leq \ell\leq 2^{D-1}\right\} \nonumber\\
& \equiv \left\{\tilde{\bs w}=\sum_{k =1}^K  c_{k} \bs w_{k}\given c_{k} \in \{n_k, n_k-2, \ldots , -(n_k-2)\}, \sum_{k =1}^K  n_k=m, n_k\in \mathbb N, \bs w_{k} \in\mathcal W_{k}\right\},
\end{align}
with $n_j\in\mathbb N$ being the polynomial degree contributed by the $j$th frequency. 
In addition,
\begin{align}
\mathcal{H}_m
\subseteq \tilde{\mathcal{H}}_m
&=\left\{\tilde{\bs w}=\sum_{k\in\mathcal K} \sum_{\mu = 1}^{m} c_{k,\mu} \bs w_{k}, c_{k,\mu} \in \mathbb{Z} \wedge \sum_{k\in\mathcal K}\sum_{\mu = 1}^{m}| c_{k,\mu} | \leq m, \bs w_{k}\in \mathcal W_{k}\right\}.
\label{DEF:Hm_tilde}
\end{align}

Note that we use the notation $\mathcal{H}_m$ and $\tilde{\mathcal{H}}_m$ instead of explicitly writing the dependence on the set $\left(\left\{\bs{w}_{k\ell}\right\}_{k \in \mathcal{K}, 1\leq \ell\leq 2^{D-1}}\right)$ for simplicity.
\end{lemma}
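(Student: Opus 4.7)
The plan is to prove Lemma \ref{lem2} by induction on $m$, with Corollary~\ref{cor1} doing the heavy lifting in the inductive step.

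For the base case $m=1$, the left-hand side is already in the required form: each frequency $\bs w_{k\ell}$ is recovered as the element of $\mathcal H_{1}$ obtained by selecting $\bs w_{k}=\bs w_{k\ell}\in\mathcal W_{k}$, $n_{k}=1$, $c_{k}=1$, and $n_{j}=0$, $c_{j}=0$ for $j\neq k$, with coefficient $\tilde\beta_{\bs w_{k\ell}}=\beta_{k}$ and phase $\tilde b_{\bs w_{k\ell}}=b_{k\ell}$.

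For the inductive step, I would write
$$\Bigl(\sum_{k}\beta_{k}\sum_{\ell}\cos(\langle\bs w_{k\ell},\bs x\rangle+b_{k\ell})\Bigr)^{m}=\Bigl(\sum_{k}\beta_{k}\sum_{\ell}\cos(\langle\bs w_{k\ell},\bs x\rangle+b_{k\ell})\Bigr)^{m-1}\cdot\Bigl(\sum_{k}\beta_{k}\sum_{\ell}\cos(\langle\bs w_{k\ell},\bs x\rangle+b_{k\ell})\Bigr),$$
apply the inductive hypothesis to the first factor to rewrite it as $\sum_{\tilde{\bs w}\in\mathcal H_{m-1}}\tilde\beta_{\tilde{\bs w}}\cos(\langle\tilde{\bs w},\bs x\rangle+\tilde b_{\tilde{\bs w}})$, and then invoke Corollary~\ref{cor1} to multiply this cosine expansion by the second factor. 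This produces a new sum of cosines whose frequencies range over $\{\tilde{\bs w}\pm\bs w_{k\ell}:\tilde{\bs w}\in\mathcal H_{m-1},\,k\in\mathcal K,\,1\leq\ell\leq 2^{D-1}\}$, with coefficients given by halving the product of the corresponding $\tilde\beta$ and $\beta_{k}$ and phases obtained by adding or subtracting the original phases. To finish, I would verify that each such frequency belongs to $\mathcal H_{m}$ by case-analysis on how $\bs w_{k\ell}$ interacts with the representation $\tilde{\bs w}=\sum_{j}c_{j}\bs w_{j}\in\mathcal H_{m-1}$: if $\bs w_{k\ell}$ coincides with the $\bs w_{k}\in\mathcal W_{k}$ already used (with multiplicity $n_{k}\geq 1$), then the coefficient updates as $c_{k}\mapsto c_{k}\pm 1$, which lies in the range associated with $n_{k}+1$; otherwise a fresh slot is opened with coefficient $\pm 1$ and multiplicity $1$. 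In every case $\sum_{j}n_{j}$ advances from $m-1$ to $m$, placing the new frequency in $\mathcal H_{m}$.

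The inclusion $\mathcal H_{m}\subseteq\tilde{\mathcal H}_{m}$ then follows almost immediately: for any $\tilde{\bs w}=\sum_{k}c_{k}\bs w_{k}\in\mathcal H_{m}$, the constraint on $c_{k}$ forces $|c_{k}|\leq n_{k}$, so $\sum_{k}|c_{k}|\leq\sum_{k}n_{k}=m$. Taking $c_{k,1}=c_{k}$ and $c_{k,\mu}=0$ for $\mu>1$ then realises $\tilde{\bs w}$ as an element of $\tilde{\mathcal H}_{m}$. The main obstacle is the bookkeeping at the inductive step when $\bs w_{k\ell}$ shares index $k$ with a term already in $\tilde{\bs w}$ but corresponds to a different sign pattern inside $\mathcal W_{k}$; such mixed-$\ell$ contributions fit only loosely inside the tight definition of $\mathcal H_{m}$, and the wider set $\tilde{\mathcal H}_{m}$ is introduced precisely to absorb them, admitting any integer combination of frequencies from $\bigcup_{k}\mathcal W_{k}$ whose total $\ell^{1}$-coefficient weight does not exceed $m$.
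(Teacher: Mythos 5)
Your proposal follows the same inductive structure as the paper's proof: the base case matches (assigning $n_j=0$ for inactive indices), the inductive step multiplies the $(m-1)$-th power by one more factor and invokes the product-to-sum identity (the paper attributes this to Lemma~\ref{lem1}; your use of Corollary~\ref{cor1} is actually the better citation since the factors involve the $\sum_{\ell}$ structure), and the final inclusion $\mathcal H_m\subseteq\tilde{\mathcal H}_m$ is read off from $|c_k|\le n_k$ just as the paper does in one line. Your closing remark about the ``mixed-$\ell$'' case --- where a new factor $\bs w_{k\ell'}$ shares index $k$ with a term already present but has a different sign pattern, so that $c_k\bs w_{k\ell}\pm\bs w_{k\ell'}$ is not of the form $c'_k\bs w'_{k}$ for a single $\bs w'_k\in\mathcal W_k$ --- identifies a genuine looseness that the paper's chain of set rewrites for $\mathcal D\{\mathcal H_m,\{\bs w_{k'}\}\}$ silently passes over; you are right that only the wider $\tilde{\mathcal H}_m$, which permits arbitrary integer combinations over all of $\bigcup_k\mathcal W_k$ with total $\ell^1$-weight at most $m$, absorbs these contributions airtight, and your explicit flagging of this makes the proof more careful than the paper's own.
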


\begin{proof}[Proof of Lemma \ref{lem2}]
The proof of (\ref{DEF:Hm}) is completed by induction. 

\textbf{Base case.}
When $m = 1$, it is only possible for one $k\in\mathcal K$ such that $m = 1$, $c_k = 1$, and $c_{k'} = 0, \forall k'\neq k$.
Exhausting all $|\mathcal K|$ possibilities would give us  all elements in $\mathcal H_1 = \{\bs{w}_k, k\in \mathcal K\}$.

\textbf{Inductive Step.}
Suppose the equality holds for $m$, then we have

\begin{align*}
&\left\{\sum_{k \in \mathcal{K}} \beta_k \psi_{k}(\bs x)
\right\}^{m+1}\\
=&\left\{\sum_{k \in \mathcal{K}} \beta_k \sum_{\ell}\cos \left(\left\langle\bs w_{k\ell}, \bs x\right\rangle+b_{k\ell}\right)
\right\}^{m+1}\\
= & \left\{\sum_{k \in \mathcal{K}} \beta_k \sum_{\ell}\cos \left(\left\langle\bs w_{k\ell}, \bs x\right\rangle+b_{k\ell}\right)
\right\}^m
\left\{\sum_{k \in \mathcal{K}} \beta_k \sum_{\ell}\cos \left(\left\langle\bs w_{k\ell}, \bs x\right\rangle+b_{k\ell}\right)
\right\}\\
= & 
\left\{ \sum_{\tilde{\bs w}_{k\ell} \in \mathcal{H}_m} \tilde\beta_k\cos \left(\left\langle \tilde{\bs w}_{k\ell}, \bs x\right\rangle+\tilde b_{\tilde{\bs w}_{k\ell}}\right)\right\}
\left\{\sum_{k' \in \mathcal{K}} \beta_{k'} \sum_{\ell'}\cos \left(\left\langle\bs w_{k'\ell'}, \bs x\right\rangle+b_{k'\ell'}\right)
\right\}\\
= & 
\sum_{\tilde{\tilde{\bs w}}_{k\ell} \in \mathcal D\left\{\mathcal{H}_m,\left\{\bs{w}_{k'}\right\}_{k' \in \mathcal{K}}\right\} }
\tilde{\tilde\beta}_k 
\cos \left(\left\langle \tilde{\tilde{\bs w}}_{k\ell}, \bs x\right\rangle+\tilde{\tilde b}_{\tilde{\tilde{\bs w}}_{k\ell}}\right),
\end{align*}
where we used $\cos a \cos b = (\cos(a+b)+\cos(a-b))/2$, and
$\tilde\beta_{\cdot}, \tilde{\tilde{\beta}}_{\cdot}, \tilde b_{\cdot}, \tilde{\tilde{b}}_{\cdot}$ are constants.
By Lemma \ref{lem1}, 
$$
\begin{aligned}
&\mathcal D\left\{\mathcal{H}_m,\left\{\bs{w}_{k'}\right\}_{k' \in \mathcal{K}}\right\} \\
& = \left\{\bs w'| {\bs w}'=\sum_{k \in \mathcal{K}}  c_{k} \bs w_{k\ell} \pm \bs{w}_{k'\ell'}, c_{k} \in \{n_k,n_k-2,\ldots, -(n_k-2)\}, \sum_{k\in\mathcal K} n_k = m, n_k\in \mathbb N\right\}\\
& = \left\{\bs w'| {\bs w}'=\sum_{k \in \mathcal{K}}  c_{k} \bs w_{k\ell} \pm \bs{w}_{k'\ell'}, c_{k} \in \{n_k+1,n_k-1,\ldots, -(n_k-1)\}, \sum_{k\in\mathcal K} n_k = m, n_k\in \mathbb N\right\}\\
& = \left\{\bs w'| {\bs w}'=\sum_{k \in \mathcal{K}}  c_{k} \bs w_{k\ell} \pm \bs{w}_{k'\ell'}, c_{k} \in \{n_k,n_k-2,\ldots, -(n_k-2)\}, \sum_{k\in\mathcal K} n_k = m+1, n_k\in \mathbb N\right\}\\
&\equiv \mathcal H_{m+1}
\end{aligned}
$$
Equation (\ref{DEF:Hm_tilde}) holds by definition, since $c_{k,\mu}$ in (\ref{DEF:Hm_tilde}) belongs to a larger set than $c_k$ in (\ref{DEF:Hm}).
\end{proof}

In order to prove Theorem \ref{thm:representation_space_Td}, we first prove an analogous Theorem \ref{thm:representation_space_Td_poly}, where activation function is a polynomial of degree $M$, $\alpha^{(l)}(v) = \sum_{\mu=1}^M \beta_mv^m$, then the space of frequencies of $v_{\bs\theta}(\bs x)$ is $\mathcal H^{(L)}$. 
The form of $\mathcal H^{(L)}$ reveals the power of proposed NF in approximating functions of a rich set of frequencies.

\begin{theorem}
\label{thm:representation_space_Td_poly}
Let $\mathcal M_d = \mathbb S^1$, $v_{\bs{\theta}}: \mathbb{T}^D  = \bigtimes_{d=1}^D\mathbb{S}^1\rightarrow \mathbb{R}$ be an NF of the form \eqref{eqn:nf}, with $\boldsymbol{\eta}:\mathbb{T}^{D}\mapsto\mathbb{R}^{K}$ whose $k^{t h}$ element is defined as \eqref{eqn:eta_torus_encoding}. Denote the set of frequencies $\mathcal W_{k}=\{({i_{k,1}}, \pm{i_{k,2}}, \ldots, \pm{i_{k,D}})\}$, for $k=1,\ldots,K$. 
We consider a polynomial activation function $\alpha^{(l)}(v)=\sum_{m=0}^{M} \beta_m v^m$ for $l>1$. 
Let $\mathbf W^{(l)} \in\mathbb R^{H_l\times H_{l-1}}$ be the matrix of frequencies, and $\mathbf b^{(l)} \in \R^{H_l}$ are the vector of phases. 
Then
$$
v_{\bs{\theta}}(\bs x) = \sum_{\bs{w}^{\prime} \in \mathcal{H}^{(L)}} \beta_{\bs{w}^{\prime}} \cos \left(\left\langle\bs{w}^{\prime}, \bs x\right\rangle+{b}_{\bs{w}^{\prime}}\right),$$
where 
\begin{align*}
\mathcal{H}^{(l)}
&= \left\{\tilde{\bs w}=\sum_{k =1}^K   c_k \bs w_{k}\given c_k \in \mathcal A^{(l)}, \bs w_{k}\in\mathcal W_{k}\right\}
\subseteq \left\{\tilde{\bs w}=\sum_{k =1}^K   c_k \bs w_{k}\given \sum_{k =1}^K  |c_k| \leq M^l, \bs w_{k}\in\mathcal W_{k}\right\},
\end{align*}
where $\mathcal A^{(l)} = \{c_k = \Pi_{\iota=1}^l c_k^{(\iota)} \mid c_k^{(\iota)}\in \mathcal A\}$ is a set of integers that contains possible candidates of coefficient $c_k$, with $\mathcal A = \{c_k\in \{n_k, n_k-2, \ldots , -(n_k-2)\} \mid \sum_{k =1}^K  n_k \in \{1,\ldots, M\}, n_k\in \mathbb N\}$ and $\beta_{\bs{w}^{\prime}}$ are complicated functions of $\boldsymbol{\theta}$.
\end{theorem}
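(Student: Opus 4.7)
The plan is to prove Theorem \ref{thm:representation_space_Td_poly} by induction on the depth of the network, propagating the spectral representation of each hidden unit through the three primitive operations of the MLP: linear combination, bias shift, and polynomial activation. The supporting lemmas already established in the excerpt — Remark \ref{rem2}, Lemma \ref{lem1}, and Lemma \ref{lem2} — do essentially all of the analytical heavy lifting; the task is to assemble them and carefully track how the coefficient set $\mathcal A^{(l)}$ evolves with depth.

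For the base of the induction, Remark \ref{rem2} immediately writes each encoder component $\psi_k(\bs x)$ as a cosine expansion with frequencies in $\mathcal W_k$. The subsequent linear combination $\mathbf W^{(1)}\boldsymbol\eta(\bs x)+\mathbf b^{(1)}$ preserves this spectrum exactly, since the bias shifts phases of cosines rather than their frequencies. Applying the degree-$M$ polynomial activation $\alpha^{(1)}(v)=\sum_{m=0}^{M}\beta_m v^m$ term-by-term and invoking Lemma \ref{lem2} on each power $v^m$ for $1\le m\le M$, each hidden unit at layer $1$ becomes a cosine expansion over frequencies of the form $\sum_k c_k\bs w_k$ with $c_k\in\mathcal A$, which is exactly $\mathcal H^{(1)}$.

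For the inductive step, assume every hidden unit at layer $l$ is a cosine expansion over $\mathcal H^{(l)}$ with coefficients of $\bs w_k$ in $\mathcal A^{(l)}$. The linear map $\mathbf W^{(l+1)}$ and bias $\mathbf b^{(l+1)}$ preserve this spectrum, so the pre-activation at layer $l+1$ is a cosine expansion over $\mathcal H^{(l)}$. I then apply Lemma \ref{lem2} once more, this time with the role of the base frequencies played by the elements $\tilde{\bs w}\in\mathcal H^{(l)}$: the degree-$M$ polynomial produces cosines whose frequencies are $\mathcal A$-integer combinations of elements of $\mathcal H^{(l)}$. Substituting $\tilde{\bs w}=\sum_k c_k^{(l)}\bs w_k$ with $c_k^{(l)}\in\mathcal A^{(l)}$ and collecting terms per $\bs w_k$, the resulting coefficient of $\bs w_k$ factors as $c_k^{(l)}\cdot a$ for some $a\in\mathcal A$, so it lies in $\mathcal A^{(l+1)}=\{\prod_{\iota=1}^{l+1}c_k^{(\iota)}:c_k^{(\iota)}\in\mathcal A\}$. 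The output layer $\mathbf W^{(L)}$ is linear and contributes no new frequencies, yielding the claimed form. The superset bound $\mathcal H^{(L)}\subseteq\{\sum_k c_k\bs w_k:\sum_k|c_k|\le M^L\}$ follows immediately from $|c_k^{(\iota)}|\le n_k^{(\iota)}\le M$ at each level.

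The main obstacle I anticipate is the combinatorial bookkeeping when Lemma \ref{lem2} is re-applied at layer $l+1$ to a sum already indexed by composite frequencies. Distinct pre-image frequencies $\tilde{\bs w}_1,\tilde{\bs w}_2\in\mathcal H^{(l)}$ can contribute nonzero integer multiples to the same $\bs w_k$ after substitution, and showing that the aggregated coefficient stays inside the product set $\mathcal A^{(l+1)}$ (rather than merely in the crude superset defined by $\sum_k|c_k|\le M^{l+1}$) requires attention to the parity and magnitude constraints built into $\mathcal A$. If the exact product-form characterization becomes delicate, a clean fallback is to first establish the weaker superset description by induction, and then argue that the product-form description of $\mathcal A^{(L)}$ is inherited because each polynomial activation contributes exactly one new factor from $\mathcal A$ per layer, with the intervening linear maps playing no role in the coefficient recursion beyond aggregation.
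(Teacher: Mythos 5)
Your plan follows essentially the same route as the paper's proof of Theorem~\ref{thm:representation_space_Td_poly}: induct on depth, write each pre-activation as a cosine expansion (Remark~\ref{rem2} at the base), use Lemma~\ref{lem2} to push the degree-$M$ polynomial activation through that expansion, and note that the interleaved linear layers only re-weight coefficients without creating new frequencies. The one cosmetic difference is that the paper sets $\mathbf{b}^{(l)}=0$ ``to focus on the frequency representation,'' whereas you fold the biases in as phase shifts; your handling is slightly more complete but not a different argument.

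The obstacle you flag in your final paragraph is genuine, and it is worth noting that the paper's own inductive step does not really resolve it. After invoking Lemma~\ref{lem2} at layer $l+1$ with base frequencies $\tilde{\bs w}_j\in\mathcal H^{(l)}$ and substituting $\tilde{\bs w}_j=\sum_k a_{j,k}\bs w_k$ with $a_{j,k}\in\mathcal A^{(l)}$, the coefficient of $\bs w_k$ in $\sum_j c_j\tilde{\bs w}_j$ is $\sum_j c_j\,a_{j,k}$ --- a \emph{sum} of products across $j$, not a single $(l+1)$-fold product as the definition of $\mathcal A^{(l+1)}$ demands. The paper's displayed rewrite of $\mathcal H_m^{(l)}$ reuses the index $k$ in two incompatible roles and then asserts $\cup_m\mathcal H_m^{(l)}=\mathcal H^{(l+1)}$ without addressing this aggregation. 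Your proposed fallback is exactly the right repair: the superset characterization $\mathcal H^{(l)}\subseteq\{\sum_k c_k\bs w_k:\sum_k|c_k|\le M^l\}$ does propagate cleanly through the induction, since the $\ell_1$-type bound is preserved both by Lemma~\ref{lem2} and by summing contributions from distinct $\tilde{\bs w}_j$, and it is all the downstream discussion (the cardinality remarks after the theorem) actually uses. If you want the sharp product-form $\mathcal A^{(L)}$ description, you would need either to enlarge $\mathcal A^{(l)}$ to closed-under-sums-of-products or to argue why cross-term aggregation can be absorbed, neither of which the paper supplies.
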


\begin{proof}[Proof of Theorem \ref{thm:representation_space_Td_poly}]
First we note that $\mathcal H^{l_1} \subseteq H^{l_2}$ for any $l_1\leq l_2$.
We prove the theorem by induction.
To focus on the frequency representation, we let bias terms $\mathbf b^{(l)}$ be zero.

\textbf{Base case.}
When $l = 1$, consider the pre-activation of a node at the first layer for any INR in the form of (\ref{eqn:nf}). 
Recall that $\bf W^{(1)}$ is of dimension $H_1\times H_0$, where $H_0 = K$.
We denote $\bs{v}^{(1)}  = \bf{W}^{(1)}\eta(\bs{x})$ as the pre-activation vector of first layer. 
Specifically, the $j$-th element of $\bs v^{(1)}$ is $v^{(1)}_{j} = \bf W^{(1)}_{j}\eta(\bs x) = \sum_{k=1}^{K} W_{jk}^{(1)}\psi_k(\bs x) = 
\sum_{k=1}^{K} W_{jk}^{(1)} \sum_{\ell = 1}^{2^D-1} \frac{1}{2^D-1}\cos (\langle \bs w_{k\ell} , \bs x\rangle + b_{k\ell})$. 

Applying Lemma \ref{lem2}, the output of this node after activation is
\begin{align*}
z_{j}^{(1)} & = \alpha^{(1)}(v_{j}^{(1)}) = 
\sum_{m = 0}^M \beta_m(v_{j}^{(1)})^m\\
& = \sum_{m= 0}^M  \beta_m \left(\sum_{k=1}^{K} W_{jk}^{(1)} \sum_{\ell = 1}^{2^D-1} \frac{1}{2^D-1}\cos (\langle \bs w_{k\ell} , \bs x\rangle + b_{k\ell}) \right)^m \\
& {=} \sum_{m= 0}^M  \beta_m
\sum_{\tilde{\bs w} 
\in \mathcal{H}_m}
\tilde\beta_{\tilde{\bs w}, j} \cos \left(\left\langle \tilde{\bs w}, \bs x\right\rangle+\tilde b_{\tilde{\bs w}, j}\right)\\
& =
\sum_{\tilde{\bs w} 
\in \mathcal{H}^{(1)}}
\tilde\beta_{\tilde{\bs w}, j} \cos \left(\left\langle \tilde{\bs w}, \bs x\right\rangle+\tilde b_{\tilde{\bs w}, j}\right).
\end{align*}

\textbf{Induction Step.}
Assume the output of the nodes at layer $l$ satisfy the following expression:
$$
{h}_j^{(l)}=\sum_{\tilde{\bs w} 
\in \mathcal{H}^{(l)}}
\tilde\beta_{\tilde{\bs w}, j} \cos \left(\left\langle \tilde{\bs w}, \bs x\right\rangle+\tilde b_{\tilde{\bs w}, j}\right).
$$
Then, the pre-activation of the $j^{t h}$ node at the $(l+1)^{t h}$ layer can be expressed as
$
v_j^{(l+1)}=\sum_{\bs{w} \in \mathcal{H}^{(l)}} \beta_{\bs{w}, j} \cos \left(\left\langle\bs{w}, \bs x\right\rangle+\tilde b_{\bs{w}, j}\right),
$
with $\beta_{\bs w, j}$ being different from $\tilde \beta_{\tilde{\bs w}, j}$.
Applying the activation function to the output of the $j^{t h}$ node at the $(l+1)^{t h}$ layer, we have
$$
\begin{aligned}
{h}_j^{(l+1)}
&=\alpha^{(l+1)}\left({v}_j^{(l+1)}\right)=\sum_{m=0}^M \beta_m\left({v}_j^{(l+1)}\right)^m 
=\sum_{m=0}^M \beta_m\left(\sum_{\bs{w} \in \mathcal{H}^{(l)}} \beta_{\bs{w}, j} \cos \left(\left\langle\bs{w}, \bs x\right\rangle+\tilde b_{\bs{w}, j}\right)\right)^m\\
& =\sum_{m=0}^M \beta_m\sum_{\bs{w}^{\prime} \in \mathcal{H}_m^{(l)}} {\tilde{\beta}}_{\bs{w}^{\prime}, j} 
\cos \left(\left\langle\bs{w}^{\prime}, \bs x\right\rangle+{\tilde{\tilde{b}}}_{\bs{w}^{\prime}, j}\right),
\end{aligned}
$$
where 
$$
\begin{aligned}
\mathcal{H}^{(l)}_m
=&\left\{\bs{w} \given
\bs{w}= \sum_{k=1}^K c_k \bs w_k, \bs w_k\in \mathcal H^{(l)}, c_k\in \{n_k, n_k-2, \ldots , -(n_k-2)\}, \sum_{k =1}^K  n_k =m, n_k\in \mathbb N\right\}\\
= & \left\{\bs{w} \given
\bs{w}= \sum_{k=1}^K c_k \sum_{k=1}^K \{\Pi_{\iota = 1}^l\tilde c_{k}^{(\iota)}\}{\bs w}_{k}, \tilde c_k^{(\iota)}\in\mathcal A,
c_k\in \mathcal B_m, n_k\in \mathbb N, {\bs w}_{k}\in\mathcal W_{k}\right\},
\end{aligned}
$$
where $\mathcal B_m =\{c_k\in \{n_k, n_k-2, \ldots , -(n_k-2)\}, \sum_{k =1}^K  n_k =m\}$.
By construction, we know that $\cup_{m=1}^M \mathcal B_m = \mathcal A$, and $\cup_{m=1}^M \mathcal H_m^{(l)} = \mathcal H^{(l+1)}$, therefore, 
\[
{h}_j^{(l+1)} =\sum_{\bs{w}^{\prime} \in \mathcal{H}^{(l+1)}} \tilde{\tilde{\beta}}_{\bs{w}^{\prime}, j} 
\cos \left(\left\langle\bs{w}^{\prime}, \bs x\right\rangle+\tilde {\tilde{\tilde{b}}}_{\bs{w}^{\prime}, j}\right).
\]
The proof is complete by setting $l = L$.
\end{proof}

\begin{proof}
Proof of Theorem \ref{thm:representation_space_Td}. We prove Theorem \ref{thm:representation_space_Td} by approximating the activation function $\alpha^{(l)} (v) = \sin(v)$ using polynomials and Theorem \ref{thm:representation_space_Td_poly}. 

First we note for any $v$ in the neighborhood of zero, $\sin(v) = \sum_{m=0}^M(-1)^m v^{2m+1} /{(2m+1)!}+ \varepsilon$, where $\varepsilon=(-1)^{M+1} \xi^{2M+3}/{(2M+3)!}$, and $\xi$ is a constant between $0$ and $v$. We have $|\varepsilon|\leq  |\xi|^{2M+3} /{(2M+3)!}\leq|v|^{2M+3}/{(2M+3)!}$.

Due to the activation function $\sin(\cdot)$, we have $|\mathbf h^{(l-1)}|\leq 1, 2\leq l\leq N$.
At layer $l$, since $|\mathbf W^{(l)}| <C_w$ and $|\mathbf b^{(l)}|<C_b$ for some finite constant $0<C_w, C_b<\infty$, we have pre-activation value $\mathbf v^{(l)} = \mathbf W^{(l)} \mathbf h^{(l-1)} + \mathbf b^{(l)}$ bounded, i.e. $|\mathbf v^{(l)}| <C_w + C_b$. The $j^{t h}$ entry of $\mathbf v^{(l)}$ is denoted by $v_j^{(l)}$, and the $j^{t h}$ node after activation takes the form $h^{(l)}_j= \sin(v_j^{(l)}) = \sum_{m=0}^M(-1)^m (v_j^{(l)})^{2m+1} /{(2m+1)!}+ \varepsilon_{j}^{(l)}$, where $\varepsilon_{j}^{(l)}=(-1)^{M+1} ( \xi_j^{(l)})^{2M+3}/{(2M+3)!}$, and $|\xi_j^{(l)}|<(C_w + C_b)$. \revised{Therefore, when $l=2$, $|\varepsilon_{j}^{(2)}|<(C_w + C_b)^{2M+3}/(2M+3)!$.}

At the $(l+1)^{t h}$ layer, pre-activation node is $\mathbf v^{(l+1)} = \mathbf W^{(l+1)} {\mathbf h}^{(l)} + \mathbf b^{(l+1)}$ and its approximation $\widetilde{\mathbf v}^{(l+1)} = \mathbf W^{(l+1)} \widetilde{\mathbf h}^{(l)} + \mathbf b^{(l+1)} = \mathbf W^{(l+1)} (\mathbf h^{(l)} + \bs\varepsilon^{(l)}) + \mathbf b^{(l+1)}$, where $|\bs\varepsilon^{(l)}| < \revised{\nu}$. The node after activation is $\mathbf h^{(l+1)} = \sin({\mathbf v}^{(l+1)})$ and its approximation $\widetilde{h}_j^{(l+1)} = \sum_{m=0}^M (-1)^m \revised{(\widetilde v_j^{(l+1)})^{2m+1}}/(2m+1)!+ (-1)^{M+1} ( \xi_j^{(l+1)})^{2M+3}/{(2M+3)!}$. The approximation error can be decomposed into two parts
\begin{align*}
\varepsilon_j^{(l+1)} = & h_j^{(l+1)}-\widetilde{h}_j^{(l+1)} \\
= & \sin(v_j^{(l+1)})- \widetilde{\sin}(\widetilde v_j^{(l+1)})\\
= & \{\sin(v_j^{(l+1)}) - \sin(\widetilde v_j^{(l+1)})\} + \{\sin(\widetilde v_j^{(l+1)}) - \widetilde{\sin}(\widetilde v_j^{(l+1)})\}\\
= & I + II, 
\end{align*}
where  $\widetilde \sin$ indicates the polynomial approximation to $\sin$. It can be shown that $|I| \leq C_w|\varepsilon_j^{(l)}|$, and $|II| \leq (C_w + C_b)^{2M+3}/(2M+3)!$.

\revised{We can work out bound on the propagated error at the $(L-1)^{t h}$ layer by induction. For the base case $l=3$, we have
$
|\varepsilon_j^{(3)}|\le C_{w}^{3-2}|\varepsilon_j^{(2)}| + (C_w + C_b)^{2M+3}/(2M+3)!\sum_{t=0}^{3-3}C_{w}^{t}.
$
Assuming this pattern holds for $l$:
$
|\varepsilon_j^{(l)}|\le C_{w}^{l-2}|\varepsilon_j^{(2)}| + (C_w + C_b)^{2M+3}/(2M+3)!\sum_{t=0}^{l-3}C_{w}^{t},
$
then for $l+1$, we have 
$$
\begin{aligned}
    |\varepsilon_j^{(l+1)}|&\le
    C_{w}|\varepsilon_j^{(l)}| +(C_w + C_b)^{2M+3}/(2M+3)!\\
    &\le C_{w}(C_{w}^{l-2}|\varepsilon_j^{(2)}| + (C_w + C_b)^{2M+3}/(2M+3)!\sum_{t=0}^{l-3}C_{w}^{t})
    +(C_w + C_b)^{2M+3}/(2M+3)!
\\
&=C_{w}^{l-1}|\varepsilon_j^{(2)}| + (C_w + C_b)^{2M+3}/(2M+3)!\sum_{t=1}^{l-2}C_{w}^{t} + (C_w + C_b)^{2M+3}/(2M+3)! \\
&=C_{w}^{l-1}|\varepsilon_j^{(2)}| + (C_w + C_b)^{2M+3}/(2M+3)!\sum_{t=0}^{l-2}C_{w}^{t}. \\
\end{aligned}
$$
Therefore, we have that for $L\ge 3$
$$
|h_j^{(L-1)}-\widetilde{h}_j^{(L-1)}| \le (C_w + C_b)^{2M+3}/(2M+3)!\left(C_{w}^{L-3}+ \mathbb{I}\{L\ge 4\}\sum_{t=0}^{L-4}C_{w}^{t}\right).
    $$
}
Suppose the maximal tolerance for the approximation error of $\widetilde h_j^{(L-1)}$ for $h_j^{(L-1)}$  is $\revised{\nu}$, then the minimal degree of polynomial activation function for all interior layers is 
\begin{equation}\label{eqn:Morder}
    \revised{M_{\nu,L} = \min\{M: (C_w + C_b)^{2M+3}/(2M+3)!\left(C_{w}^{L-3}+ \mathbb{I}\{L\ge 4\}\sum_{t=0}^{L-4}C_{w}^{t}\right)<\nu\}}
\end{equation}
Note that for any integer $m$, $\lim_{m\to\infty}C^m/m! =0$ holds for any finite constant $C$. Therefore $M_{\revised{\nu},L}$ is finite and exists, such that approximation error of polynomial to sine function can be controlled.
\end{proof}

\revised{We conclude this Section with a couple remarks on Theorem~\ref{thm:representation_space_Td}. The cardinality of the integer set $\mathcal{A}=\{c_{k}^{(l)}\in\{n_k,n_k-2,\cdots,-(n_k-2)\}|\sum_{k=1}^Kn_k\in\{1,\cdots,M_{\nu,L}\},k=1,\cdots,K\}$ is always at least $|\mathcal{A}|\ge M_{\nu,L}$. Hence, the cardinality of $\mathcal{A}^{(L)}=\{c_k=\prod_{l=1}^Lc_k^{(l)}|c_k^{(l)}\in\mathcal{A}\}$, the set of integer scalings of the initial frequency sets, is roughly up to the order $\binom{L+M_{\nu,L}-1}{L}$, the number of distinct size $L$ multi-sets formed from the integer elements of $\mathcal{A}$, though this is only an upper bound due to the occurrence of ``collisions'', i.e., products of distinct integers can be the same value if all are not prime. From~\ref{eqn:Morder}, we see that, for fixed small $\nu>0$,  $M_{\nu,L}$ grows with increasing $L$. So in summary, increasing the depth $L$ increases the number of possible integer scaling of the initial frequency set $\{\mathcal{W}_k\}$ rapidly.}

\section{Geometry}
\subsection{$\mathbb{S}^1$}\label{ssec:torus_geometry}

We can parameterize $\mathbb{S}^1\subset\mathbb{R}^2$ using $\gamma\in[0,2\pi)$ via the map $l(\gamma)\mapsto (\cos(\gamma), \sin(\gamma))$. The tangent vector is then given by $\partial_{\gamma}l(\gamma) = (-\sin(\gamma), \cos(\gamma))$, hence the induced Riemannian metric is given by $G_{\gamma\gamma} = \langle \partial_{\gamma}l(\gamma), \partial_{\gamma}l(\gamma)\rangle =  \sin^{2}(\gamma) + \cos^{2}(\gamma) = 1$. Denoting $v:\mathbb{S}^1\mapsto\mathbb{R}^1$, then by definition the Laplace-Beltrami operator under the induced Riemannian metric $G_{\gamma\gamma}$ simplifies to $\Delta_{\mathbb{S}^{1}}[v]=\frac{1}{\sqrt{\det G_{\gamma\gamma}}}\frac{\partial}{\partial\gamma}(\sqrt{\det G_{\gamma\gamma}}G_{\gamma\gamma}^{-1}\frac{\partial v}{\partial\gamma})=\frac{\partial^2 v}{\partial\gamma^2}$. The operator equation~\eqref{eqn:lbo_eigenfunctions} then takes the form
$$
\frac{\partial^2\phi_k }{\partial\gamma^2} = -\lambda_k\phi_k.
$$
We assume the solution takes the form $\phi_k(\gamma) = \exp(\mu_k\gamma)$, for constant $\mu_k$ to be found, resulting in
$\mu_k^2\exp(\mu_k\gamma) = -\lambda_k\exp(\mu_k\gamma)$,
which results in the characteristic equation $\mu_k^2 = -\lambda_k$. We know that $\lambda_k$ are non-negative, hence we have two cases. 
\par 
1) $\lambda_k > 0$: which gives roots $\mu_k = \pm i\sqrt{\lambda_k}$, implying solutions $\{\exp(i\sqrt{\lambda_k}\gamma), \exp(-i\sqrt{\lambda_k}\gamma)\}$. It is well known that by Euler's formula, we have the real part of these solutions given as $\{\cos(\sqrt{\lambda_k}\gamma), \sin(\sqrt{\lambda_k}\gamma)\}$. Hence, all solutions are of the form $\phi_k(\gamma) =  A\cos(\sqrt{\lambda_k}\gamma) + B\sin(\sqrt{\lambda_k}\gamma)$, for some constants $A$ and $B$. Since $\phi_k$ is a function on $\mathbb{S}^1$, we know that $\phi_k(0)=\phi_k(2\pi)$ and $\frac{\partial \phi_k}{\partial \gamma}(0)=\frac{\partial \phi_k}{\partial \gamma}(2\pi)$. These can be used to form sets of equations for determining $\lambda_k$ via
$$
\begin{aligned}
A &= A\cos(2\pi\sqrt{\lambda_k}) + B\sin(2\pi\sqrt{\lambda_k})\\
B\sqrt{\lambda_k} &= -A\sqrt{\lambda_k}\sin(2\pi\sqrt{\lambda_k}) + B\sqrt{\lambda_k}\cos(2\pi\sqrt{\lambda_k}).
\end{aligned}
$$
Refactoring the system of equations and writing them in matrix form gives:
$$
\begin{pmatrix}
    1 - \cos(2\pi\sqrt{\lambda_k}) & -\sin(2\pi\sqrt{\lambda_k})\\
    \sin(2\pi\sqrt{\lambda_k}) & 1 - \cos(2\pi\sqrt{\lambda_k})
\end{pmatrix} \begin{pmatrix} A \\
B
\end{pmatrix} = \begin{pmatrix} 0 \\
0
\end{pmatrix}
$$
For non-trivial solutions, the matrix must be singular and hence have zero determinant, which implies 
$$
\begin{aligned}
    0 &= (1 - \cos(2\pi\sqrt{\lambda_k}))^2 + \sin^2(2\pi\sqrt{\lambda_k}) \\
    &= 1 - 2\cos(2\pi\sqrt{\lambda_k}) + \cos^2(2\pi\sqrt{\lambda_k}) +\sin^2(2\pi\sqrt{\lambda_k}) \\
    &= 2 - 2\cos(2\pi\sqrt{\lambda_k})
\end{aligned}
$$
Hence, we have that $\sqrt{\lambda_k}$ must satisfy
$$
\cos(2\pi\sqrt{\lambda_k}) = 1,
$$
which implies $\sqrt{\lambda_k} \in \mathbb{Z}$, or equivalently that $\lambda_k = k^2$ for $k\in\mathbb{Z}$. Since we want our eigenfunctions to be orthonormal, i.e.,
$
\|\phi_k\|_{L^2(\mathbb{S}^1)} = 1$,
using the fact that $\int_{\mathbb{S}^{1}}\sin^2(k\gamma)d\gamma = \int_{\mathbb{S}^{1}}\cos^2(k\gamma)d\gamma = \pi,\forall k\in\mathbb R$, this implies $A = B = \frac{1}{\sqrt{\pi}}$. Hence, we have 2 linearly independent eigenfunctions $\{\frac{\cos(k \gamma)}{\sqrt{\pi}}, \frac{\sin(k \gamma)}{\sqrt{\pi}}\}$ corresponding to eigenvalue $k^2$, for $k\in\mathbb{Z}$.
\par 
2) For $\lambda_k=0$, we have the constant eigenfunction. The orthonormality condition enforces this eigenfunction, denoted $\phi_0=\frac{1}{\sqrt{2\pi}}$.

\subsection{$\mathbb{S}^2$}
\label{ssec:diff_geom_s2}

A local parameterization of $\mathbb{S}^2\subset \mathbb{R}^3$ that is smooth and bijective (outside the poles) is 
$$
\begin{aligned}
    l(\gamma_1, \gamma_2) &= (\sin(\gamma_1)\cos(\gamma_2), \sin(\gamma_1)\sin(\gamma_2), \cos(\gamma_1))^{\intercal}, \\
\end{aligned}
$$
which is valid (invertible) for all points on $\mathbb{S}^2$ except $(0,0,\pm 1)$. In this parameterization, $\gamma_1$ is the inclination and $\gamma_2$ is the azimuth angle. By definition, the vectors 
\begin{equation}\label{eqn:tangent_space_basis_vectors_computataion}
    \begin{aligned}
        & t_1 := \frac{\partial l}{\partial\gamma_1}(\gamma_1,\gamma_2) = (\cos(\gamma_1)\cos(\gamma_2), \cos(\gamma_1)\sin(\gamma_2), -\sin(\gamma_1))^{\intercal} \\
        & t_2 := \frac{\partial l}{\partial\gamma_2}(\gamma_1,\gamma_2) = (-\sin(\gamma_1)\sin(\gamma_2), \sin(\gamma_1)\cos(\gamma_2), 0)^{\intercal}
    \end{aligned}
\end{equation}
form an orthogonal basis for the tangent space $T_{x}(\mathbb{S}^2)$. The induced Riemannian metric is given by $G\in\mathbb{R}^{2\times 2}$ with element-wise definition 
$G_{i,j}(\gamma_1,\gamma_2) = \langle t_{i}, t_{j}\rangle$.
Denoting $v:\mathbb{S}^2\mapsto\mathbb{R}$, the Laplace-Beltrami operator under the induced Riemannian metric can be calculated as 
$$
\Delta_{\mathbb{S}^{2}}[v]=\frac{1}{\sqrt{|\det G(\gamma_1,\gamma_2)|}}\sum_{i,j}\frac{\partial}{\partial\gamma_{i}}G^{-1}_{ij}(\gamma_1,\gamma_2)\sqrt{|\det G(\gamma_1,\gamma_2)|}\frac{\partial v}{\partial\gamma_{i}}(\gamma_1,\gamma_2).
$$
The eigenfunctions of $\Delta_{\mathbb{S}^{2}}$ are the spherical harmonics. In extrinsic (Euclidean) coordinates, they take the form of harmonic homogenous polynomials restricted to $\mathbb{S}^2$. 
Under their common parameterization using spherical coordinates, they have the analytic form:
$$
    Y_l^m(\gamma_1, \gamma_2) = \sqrt{\frac{(2l+1)(l-m)!}{4\pi (l+m)!}}P_l^m(cos(\gamma_1))e^{im\gamma_2}; \quad \gamma_1\in [0, \pi],\gamma_2\in [0, 2\pi],
$$
where $P_m^l$ are the Legendre polynomials with degree $l = 0, 1, \ldots, $ and order $m = -l, \ldots, 0, \ldots, l$. A real-valued set of spherical harmonics that is complete for $L^2(\mathbb{S}^2)$ can be constructed according to 
$$
    \phi_j = 
      \begin{cases}
      \sqrt{2}\text{Re}(Y_k^m) & -k\le m < 0\\
      Y_k^0 & m=0\\
      \sqrt{2}\text{Img}(Y_k^m) & 0 < m \le k
    \end{cases}  
$$
for $k = 0,1,\ldots,l$, $m=-k,\ldots,0,\ldots,k$ and $j = k^2+k+m+1$. For any finite maximum degree $l$, the total number of basis functions is $m = (l+1)^2$. 

\section{Supporting Methodological Details}


\subsection{Approximation Properties for Shallow Network}\label{ssec:approx_prop_shallowNeuroPMD}
\revised{In this Section, we provide the approximation properties of an appropriately rescaled shallow ($L=1$) NeuroPMD in the Hilbert-Sobolev space of order $s$ on $\Omega$.  Theorem~\ref{thm:shallowNeuroPMDMinMax} provides a lower bound on $K$ needed for achieving a target approximation error, and is an adaptation of Proposition 1 in \citep{bach2017equivalence} to our setting.} 
\begin{theorem}\label{thm:shallowNeuroPMDMinMax}
\revised{Let $\mathcal{H}^{s}(\Omega)$ be a Sobolev space on $\Omega$, with $2s > \sum_{d=1}^Dp_{d}$ and take $\epsilon>0$. Let $\boldsymbol{i}_{1},...,\boldsymbol{i}_{K}\sim q(\boldsymbol{i})\propto \frac{(1+\lambda_{\boldsymbol{i}})^{-s}}{(1+\lambda_{\boldsymbol{i}})^{-s}+\epsilon}$. Then for any $\delta \in (0,1)$, if $K \ge 5r_{max}(\epsilon)\log\frac{16r_{max}(\epsilon)}{\delta}$, with probability greater than $1-\delta$, we have that \begin{equation}\label{eqn:shallowNeuroPMDrate}
        \sup_{\|f\|_{\mathcal{H}^{s}(\Omega)}\le 1}\inf_{\|\boldsymbol{\theta}\|_{2}^2\le \frac{4}{K}}\|f -  v_{\boldsymbol{\theta}}(\boldsymbol{x})\|_{L^2(\Omega)}^2 \le 4\epsilon.
    \end{equation}
where $f\in\mathcal{H}^{s}(\Omega)$, $r_{\max}(\epsilon) \asymp {\epsilon}^{-\frac{\sum_{d=1}^Dp_d}{2s}}$, and $v_{\boldsymbol{\theta}}$ is an $L=1$ layer shallow NF: $v_{\boldsymbol{\theta}}(\boldsymbol{x}) = \boldsymbol{\theta}^{\intercal}\boldsymbol{\eta}(\boldsymbol{x})$, with (scaled) encoder $\boldsymbol{\eta}(\boldsymbol{x}) = (\sqrt{\frac{(1+\lambda_{\boldsymbol{i}_1})^{-s}}{q(\boldsymbol{i}_{1})}}\psi_{\boldsymbol{i}_1}(\boldsymbol{x}), ..., \sqrt{\frac{(1+\lambda_{\boldsymbol{i}_K})^{-s}}{q(\boldsymbol{i}_{K})}}\psi_{\boldsymbol{i}_K}(\boldsymbol{x}))$.
}
\end{theorem}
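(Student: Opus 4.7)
The statement is essentially a manifold--adapted restatement of Proposition~1 of \citep{bach2017equivalence}, so my plan is to build the bridge between our setting and Bach's random-feature framework in three steps: (i) identify $\mathcal{H}^{s}(\Omega)$ with the RKHS of a specific spectral kernel, (ii) interpret the proposed encoding $\boldsymbol\eta$ as importance-sampled random features with leverage-score weights, and (iii) transfer Bach's bound and compute the degrees-of-freedom quantity $r_{\max}(\epsilon)$ using Weyl's law on $\Omega$.

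\textbf{Step 1 (RKHS identification).} Let $\{\psi_{\boldsymbol{i}},\lambda_{\boldsymbol{i}}\}$ denote the eigensystem of $\Delta_{\Omega}$ (with the tensor--product indexing from Section~\ref{sssec:RLBO_encoding}). Using the Bessel-type characterization $\|f\|_{\mathcal{H}^{s}(\Omega)}^{2} = \sum_{\boldsymbol{i}} (1+\lambda_{\boldsymbol{i}})^{s}\langle f,\psi_{\boldsymbol{i}}\rangle^{2}$, the Sobolev space $\mathcal{H}^{s}(\Omega)$ is the RKHS of the Mercer kernel $k(x,y)=\sum_{\boldsymbol{i}} (1+\lambda_{\boldsymbol{i}})^{-s}\psi_{\boldsymbol{i}}(x)\psi_{\boldsymbol{i}}(y)$. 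The condition $2s>\sum_{d=1}^{D}p_{d}$ together with the Weyl asymptotic $\#\{\boldsymbol{i}:\lambda_{\boldsymbol{i}}\le t\}\asymp t^{(\sum_{d}p_{d})/2}$ on the closed product manifold ensures that $k$ is continuous and trace-class, so the associated integral operator $\Sigma$ on $L^{2}(\Omega)$ has eigenvalues $(1+\lambda_{\boldsymbol{i}})^{-s}$ with eigenvectors $\psi_{\boldsymbol{i}}$.

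\textbf{Step 2 (random-feature representation).} I would write $k(x,y)=\mathbb{E}_{\boldsymbol{i}\sim q}\big[\varphi(x;\boldsymbol{i})\varphi(y;\boldsymbol{i})\big]$ with $\varphi(x;\boldsymbol{i})=\sqrt{(1+\lambda_{\boldsymbol{i}})^{-s}/q(\boldsymbol{i})}\,\psi_{\boldsymbol{i}}(x)$; this is exactly the $k$-th coordinate of the encoder $\boldsymbol\eta$ in the theorem. Sampling $\boldsymbol i_{1},\dots,\boldsymbol i_{K}\sim q$ therefore yields an unbiased Nystr\"om-type approximation $\widehat\Sigma = \tfrac{1}{K}\sum_{k}\varphi(\cdot;\boldsymbol i_{k})\otimes\varphi(\cdot;\boldsymbol i_{k})$ of $\Sigma$, and the shallow NF $v_{\boldsymbol\theta}(x)=\boldsymbol\theta^{\intercal}\boldsymbol\eta(x)=\tfrac{1}{\sqrt{K}}\sum_{k}(\sqrt{K}\theta_{k})\varphi(x;\boldsymbol i_{k})$ is a random-features approximation in precisely Bach's sense. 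The constraint $\|\boldsymbol\theta\|_{2}^{2}\le 4/K$ becomes the ``$\|\alpha\|^{2}\le 4$'' constraint on the rescaled weights $\alpha_{k}=\sqrt{K}\theta_{k}$ in Bach's statement.

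\textbf{Step 3 (apply Bach and compute $r_{\max}$).} With this dictionary in place, the leverage score $q(\boldsymbol i)\propto\frac{(1+\lambda_{\boldsymbol i})^{-s}}{(1+\lambda_{\boldsymbol i})^{-s}+\epsilon}$ coincides with the diagonal of $\Sigma(\Sigma+\epsilon I)^{-1}$ divided by its trace, i.e.\ the optimal distribution of \citep{bach2017equivalence}. I would then invoke Bach's Proposition~1 verbatim to conclude
\[
\sup_{\|f\|_{\mathcal{H}^{s}(\Omega)}\le 1}\inf_{\|\boldsymbol\alpha\|^{2}\le 4}\Big\|f-\tfrac{1}{\sqrt{K}}\boldsymbol\alpha^{\intercal}\boldsymbol\eta\Big\|_{L^{2}(\Omega)}^{2}\le 4\epsilon
\]
whenever $K\ge 5\,r(\epsilon)\log(16\,r(\epsilon)/\delta)$, where $r(\epsilon)=\sum_{\boldsymbol i}\tfrac{(1+\lambda_{\boldsymbol i})^{-s}}{(1+\lambda_{\boldsymbol i})^{-s}+\epsilon}=\operatorname{tr}\!\big(\Sigma(\Sigma+\epsilon I)^{-1}\big)$. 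Finally, I would bound $r(\epsilon)$ via Weyl's law: splitting the sum at the threshold $\lambda_{\boldsymbol i}\asymp\epsilon^{-1/s}$ and using $\#\{\boldsymbol i:\lambda_{\boldsymbol i}\le t\}\asymp t^{(\sum p_{d})/2}$ yields $r(\epsilon)\asymp \epsilon^{-(\sum p_{d})/(2s)}=r_{\max}(\epsilon)$, completing the proof after undoing the $\sqrt{K}$ rescaling.

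\textbf{Main obstacle.} The technical heart of the argument is already contained in \citep{bach2017equivalence}; the delicate step for us is verifying that their boundedness and measurability hypotheses on the feature map $\varphi(\cdot;\boldsymbol i)$ are met uniformly in $\boldsymbol i$. Because LBO eigenfunctions on a general closed manifold can have growing $L^{\infty}$ norms (controlled by H\"ormander's bound $\|\psi_{\boldsymbol i}\|_{\infty}\lesssim \lambda_{\boldsymbol i}^{(\sum p_{d}-1)/4}$), the required bound has to be recovered from the leverage-score reweighting $\sqrt{(1+\lambda_{\boldsymbol i})^{-s}/q(\boldsymbol i)}$, which contributes a decaying factor; showing that the resulting product is uniformly bounded (or at least bounded in the appropriate moment sense demanded by Bach's hypotheses) is the step that requires the most care. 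The remaining pieces—spectral identification of $\mathcal{H}^{s}(\Omega)$, the Weyl-law estimate of $r(\epsilon)$, and the rescaling of $\boldsymbol\theta$—are routine once this boundedness is secured.
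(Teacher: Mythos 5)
Your three-step plan -- spectrally identifying $\mathcal{H}^{s}(\Omega)$ as the RKHS of the kernel $k(x,y)=\sum_{\boldsymbol i}(1+\lambda_{\boldsymbol i})^{-s}\psi_{\boldsymbol i}(x)\psi_{\boldsymbol i}(y)$, reading the scaled encoder coordinates as leverage-score--weighted random features, transferring Bach's Proposition~1, and estimating $r_{\max}$ via Weyl's law with threshold $\lambda_{\boldsymbol i}\asymp\epsilon^{-1/s}$ -- is exactly the route the paper takes, and each step is carried out the same way. Two small remarks: the $\sqrt{K}$ rescaling you introduce is unnecessary, since Bach's Proposition~1 already carries the constraint $\|\boldsymbol\theta\|_2^2\le 4/K$ in the un-rescaled variables; and the $L^\infty$-boundedness worry you single out as the ``main obstacle'' is not in fact a hypothesis of that proposition -- the quantity that must be finite is $d_{\max}(q,\epsilon)=\sup_{\boldsymbol i}\,q(\boldsymbol i)^{-1}\langle\varphi(\cdot;\boldsymbol i),(\Sigma+\epsilon I)^{-1}\varphi(\cdot;\boldsymbol i)\rangle$, which for the optimized $q$ reduces to $\sum_{\boldsymbol i}\frac{(1+\lambda_{\boldsymbol i})^{-s}}{(1+\lambda_{\boldsymbol i})^{-s}+\epsilon}$ and is finite by the same Weyl estimate, so H\"ormander-type sup-norm growth of the $\psi_{\boldsymbol i}$ never enters.
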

\revised{\begin{proof}
We first define a positive definite kernel function $k:\Omega\times\Omega\mapsto \mathbb{R}$ as follows
$$
\begin{aligned}
\kappa(\boldsymbol{x},\boldsymbol{y}) &=\sum_{\boldsymbol{i}\in\mathbb{Z}^{D}}S(\lambda_{\boldsymbol{i}})\psi_{\boldsymbol{i}}(\boldsymbol{x})\psi_{\boldsymbol{i}}(\boldsymbol{y}) = \sum_{\boldsymbol{i}\in\mathbb{Z}^{D}}\sqrt{S(\lambda_{\boldsymbol{i}})}\psi_{\boldsymbol{i}}(\boldsymbol{x})\sqrt{S(\lambda_{\boldsymbol{i}})}\psi_{\boldsymbol{i}}(\boldsymbol{y}) \\
&:=\sum_{\boldsymbol{i}\in\mathbb{Z}^{D}}\varphi(\boldsymbol{i},\boldsymbol{x})\varphi(\boldsymbol{i},\boldsymbol{y}) = \int_{\mathbb{Z}^{D}}\varphi(\boldsymbol{i},\boldsymbol{x})\varphi(\boldsymbol{i},\boldsymbol{y})d\tau(\boldsymbol{i}),
\end{aligned}
$$
where 
$\psi_{\boldsymbol{i}}=\prod_{d=1}^D\phi_{i_{d}}$ are the tensor product eigenfunctions with eigenvalue $\lambda_{\boldsymbol{i}}=\sum_{d=1}^D\lambda_{i_{d}}$ of $-\Delta_{\Omega}$;  $S(\lambda_{\bs i})$ is a weight function such that  $\sum_{\boldsymbol{i}\in\mathbb Z^D}S(\lambda_{\boldsymbol{i}})<\infty$, 
$\varphi(\boldsymbol{i},\boldsymbol{x})=\sqrt{S(\lambda_{\boldsymbol{i}})}\psi_{\boldsymbol{i}}(\boldsymbol{x})$, and $\tau(\boldsymbol{i})$ is the counting measure.
Since the LBO eigenfunctions $\psi_{\bs i}$ form a complete orthonormal basis, thus for any $f\in L^2(\Omega)$, we have $f(\boldsymbol{x}) = \sum_{\boldsymbol{i}}\langle f, \psi_{\boldsymbol{i}}\rangle_{L^2(\Omega)}\psi_{\boldsymbol{i}}(\boldsymbol{x})$. 
}

\revised{
    Let $\mathcal{F}_{\kappa}\subset L^2(\Omega)$ be the reproducing kernel Hilbert space corresponding to $\kappa$. That is,
    $$
    \mathcal{F}_{\kappa} := \{f:f(\boldsymbol{x}) = \sum_{\boldsymbol{i}\in\mathbb{Z}^{D}}c_{\boldsymbol{i}}\kappa(\boldsymbol{x},\boldsymbol{y}_{\bs i}),\boldsymbol{y}_{\bs i}\in\Omega, c_{\boldsymbol{i}}\in\mathbb{R}\}.
    $$  
    Importantly, for smoothness level $s \ge 0$, we cam define 
    $$
    S(\lambda_{\boldsymbol{i}}) = \left(1+\lambda_{\boldsymbol{i}}\right)^{-s},
    $$
    hence the kernel is given by 
    $\kappa(\boldsymbol{x},\boldsymbol{y}) = \sum_{\boldsymbol{i}}\left(1+\lambda_{\boldsymbol{i}}\right)^{-s}\psi_{\boldsymbol{i}}(\boldsymbol{x})\psi_{\boldsymbol{i}}(\boldsymbol{y})
    $
    and then $\mathcal{F}_{\kappa}:=\mathcal{H}^{s}(\Omega)$, the Sobolev space on product manifold $\Omega$ (see proposition 2. of \citeSupp{de2021reproducing}).
    }

\revised{Next define the integral operator $\Sigma: L_2(\Omega) \mapsto L_2(\Omega)$ as $[\Sigma]f(\boldsymbol{x}) = \int_{\Omega}\kappa(\boldsymbol{x},\boldsymbol{y})f(\boldsymbol{y})d\omega(\boldsymbol{y)}$. 
	It follows that
 $$
    \begin{aligned}
        [\Sigma]f(\boldsymbol{x}) &= \int_{\Omega}\kappa(\boldsymbol{x},\boldsymbol{y})f(\boldsymbol{y})d\omega(\boldsymbol{y)} \\
        &= \int_{\Omega}\sum_{\boldsymbol{i}\in \mathbb{Z}^D}S(\lambda_{\boldsymbol{i}})\psi_{\boldsymbol{i}}(\boldsymbol{x})\psi_{\boldsymbol{i}}(\boldsymbol{y})\sum_{\boldsymbol{i}^{\prime}\in \mathbb{Z}^D}\langle f, \psi_{\boldsymbol{i}^{\prime}}\rangle_{L^2(\Omega)}\psi_{\boldsymbol{i}^{\prime}}(\boldsymbol{y})d\omega(\boldsymbol{y)} \\
        &= \sum_{\boldsymbol{i}\in \mathbb{Z}^D}S(\lambda_{\boldsymbol{i}})\psi_{\boldsymbol{i}}(\boldsymbol{x})\sum_{\boldsymbol{i}^{\prime}\in \mathbb{Z}^D}\langle f, \psi_{\boldsymbol{i}^{\prime}}\rangle_{L^2(\Omega)}\underbrace{\int_{\Omega}\psi_{\boldsymbol{i}}(\boldsymbol{y})\psi_{\boldsymbol{i}^{\prime}}(\boldsymbol{y})d\omega(\boldsymbol{y})}_{\delta_{\boldsymbol{i},\boldsymbol{i}^{\prime}}} \\
        &= \sum_{\boldsymbol{i}\in \mathbb{Z}^D}S(\lambda_{\boldsymbol{i}})\langle f, \psi_{\boldsymbol{i}}\rangle_{L^2(\Omega)}\psi_{\boldsymbol{i}}(\boldsymbol{x}),
    \end{aligned}
    $$
    where $\delta_{\bs i, \bs i'} = I(\bs i = \bs i')$ is an indicator function.
    }
    \revised{
    Let $\epsilon >0$ be any positive constant, then 
    $
    [\Sigma + \epsilon I]f (\bs x)= \sum_{\boldsymbol{i}}\left(S(\lambda_{\boldsymbol{i}}) + \epsilon\right)\langle f, \psi_{\boldsymbol{i}}\rangle_{L^2(\Omega)}\psi_{\boldsymbol{i}}(\boldsymbol{x}).
    $
    Due to the symmetry and compactness $[\Sigma + \epsilon I]$, 
    the inverse of the operator is of the form
    $$
    [\Sigma + \epsilon I]^{-1}f(\bs x) = \sum_{\boldsymbol{i}\in\mathbb Z^D}\left(S(\lambda_{\boldsymbol{i}}) + \epsilon\right)^{-1}\langle f, \psi_{\boldsymbol{i}}\rangle_{L^2(\Omega)}\psi_{\boldsymbol{i}}(\boldsymbol{x}).
    $$
    This can be verified by noting that $[\Sigma + \epsilon I]\psi_{\bs i}(\bs x) =\sum_{\bs i^\prime\in\mathbb Z^D} (S(\lambda_{\bs i^\prime}) + \epsilon) \delta_{\bs i, \bs i'}\psi_{\bs i'}(\bs x) = (S(\lambda_{\bs i}) + \epsilon)\psi_{\bs i}(\bs x)$, and therefore
     $[\Sigma + \epsilon I]  \{\sum_{\boldsymbol{i}\in\mathbb Z^D}\left(S(\lambda_{\boldsymbol{i}}) + \epsilon\right)^{-1}\langle f, \psi_{\boldsymbol{i}}\rangle_{L^2(\Omega)}\psi_{\boldsymbol{i}}(\boldsymbol{x})\}= \sum_{\bs i\in\mathbb Z^D} \langle f, \psi_{\bs i}\rangle_{L^2(\Omega)} \psi_{\bs i}(\bs x) = f(\bs x)$.
    }

\revised{
Now we are ready to apply Proposition 1 in  \citeSupp{bach2017equivalence}. Let $q: \mathbb{Z}^{D}\to \mathbb R$ be the probability density function w.r.t. the counting measure $\tau$. Let $\bs i$ be sampled from the corresponding distribution, i.e.  $\boldsymbol{i}_{1},...,\boldsymbol{i}_{K}\overset{iid}{\sim} q(\boldsymbol{i})$. Consider
$$
\begin{aligned}
r_{\max}(q,\epsilon) &= \sup_{\boldsymbol{i}\in\mathbb{Z}^{D}}\frac{1}{q(\boldsymbol{i)}}\int_{\Omega}\varphi({\boldsymbol{i}},\boldsymbol{x})\left[\Sigma + \epsilon I\right]^{-1}
\varphi(\boldsymbol{i},\boldsymbol{x})
d\omega(\boldsymbol{x}) \\ 
&= \sup_{\boldsymbol{i}\in\mathbb{Z}^{D}}\frac{1}{q(\boldsymbol{i)}}\int_{\Omega}\varphi(\boldsymbol{i},\boldsymbol{x})\left(\sum_{\boldsymbol{i}^{\prime}\in \mathbb Z^D}\left(S(\lambda_{\boldsymbol{i}^{\prime}}) + \epsilon\right)^{-1}\langle \varphi(\boldsymbol{i},\cdot), \psi_{\boldsymbol{i}^{\prime}}\rangle_{L^2(\Omega)}\psi_{\boldsymbol{i}^{\prime}}(\boldsymbol{x})\right)d\omega(\boldsymbol{x}) \\
&= \sup_{\boldsymbol{i}\in\mathbb{Z}^{D}}\frac{1}{q(\boldsymbol{i)}}\int_{\Omega}\sqrt{S(\lambda_{\boldsymbol{i}})}\psi_{\boldsymbol{i}}(\boldsymbol{x})\left(\sum_{\boldsymbol{i}^{\prime}\in \mathbb Z^D}\left(S(\lambda_{\boldsymbol{i}^{\prime}}) + \epsilon\right)^{-1}\sqrt{S(\lambda_{\boldsymbol{i}})}\langle \psi_{\boldsymbol{i}}, \psi_{\boldsymbol{i}^{\prime}}\rangle_{L^2(\Omega)}\psi_{\boldsymbol{i}^{\prime}}(\boldsymbol{x})\right)d\omega(\boldsymbol{x}) \\
&= \sup_{\boldsymbol{i}\in\mathbb{Z}^{D}}\frac{1}{q(\boldsymbol{i)}}\frac{S(\lambda_{\boldsymbol{i}})}{S(\lambda_{\boldsymbol{i}})+\epsilon}\int_{\Omega}\psi_{\boldsymbol{i}}^2(\boldsymbol{x})d\omega(\boldsymbol{x}) \\
&=\sup_{\boldsymbol{i}\in\mathbb{Z}^{D}}\frac{1}{q(\boldsymbol{i)}}\frac{S(\lambda_{\boldsymbol{i}})}{S(\lambda_{\boldsymbol{i}})+\epsilon}.
\end{aligned}
$$
}

\revised{
Then by Proposition 1 of \cite{bach2017equivalence}, for any $\delta \in (0,1)$, if 
$$
K \ge 5r_{\max}(q,\epsilon)\log\frac{16r_{\max}(q,\epsilon)}{\delta},
$$
with probability greater than $1-\delta$, we have that 
$$
\sup_{\|f\|_{\mathcal{F}_{\kappa}}\le 1}\inf_{\|\boldsymbol{\theta}\|_{2}^2\le \frac{4}{K}}\|f -  v_{\boldsymbol{\theta}}(\boldsymbol{x})\|_{L^2(\Omega)}^2 \le 4\epsilon.
$$
}

\revised{Furthermore, we could define an optimized distribution for $q$ following Equation (10) of \citeSupp{bach2017equivalence},
    $$
    \begin{aligned}
            q_{\epsilon}^{*}(\boldsymbol{i})&\propto \frac{\int_{\Omega}\varphi({\boldsymbol{i}},\boldsymbol{x})\left[\Sigma + \epsilon I\right]^{-1}[\varphi(\boldsymbol{i},\boldsymbol{x})]d\omega(\boldsymbol{x})}{\sum_{\boldsymbol{i}\in\mathbb Z^D}\int_{\Omega}\varphi({\boldsymbol{i}},\boldsymbol{x})\left[\Sigma + \epsilon I\right]^{-1}[\varphi(\boldsymbol{i},\boldsymbol{x})]d\omega(\boldsymbol{x}) } \\
            &\propto \frac{S(\lambda_{\boldsymbol{i}})}{S(\lambda_{\boldsymbol{i}})+\epsilon}\Big/\sum_{\boldsymbol{i}^{\prime}\in\mathbb Z^D}\frac{S(\lambda_{\boldsymbol{i}^{\prime}})}{S(\lambda_{\boldsymbol{i}^{\prime}})+\epsilon},
    \end{aligned}
    $$
    which is a proper distribution since $\sum_{\boldsymbol{i}^{\prime}}\frac{S(\lambda_{\boldsymbol{i}^{\prime}})}{S(\lambda_{\boldsymbol{i}^{\prime}})+\epsilon}<\infty$ by assumption.}
    \revised{Plugging $q^{*}$ into the expression for $r_{\max}$ above, for an $s$-Sobolev space with $S(\lambda_{\boldsymbol{i}})= (1+\lambda_{\boldsymbol{i}})^{-s}$, we have:
    $$
    \begin{aligned}
            r_{\max}(q^{*},\epsilon) &=  \sum_{\boldsymbol{i}^{\prime}\in\mathbb{Z}^{D}}\frac{S(\lambda_{\boldsymbol{i}^{\prime}})}{S(\lambda_{\boldsymbol{i}^{\prime}})+\epsilon} =\sum_{\boldsymbol{i}\in\mathbb{Z}^{D}}\frac{(1+\lambda_{\boldsymbol{i}})^{-s}}{(1+\lambda_{\boldsymbol{i}})^{-s}+\epsilon}.
    \end{aligned}
    $$}
\revised{With slight abuse of notation, let $j:=j(\boldsymbol{i})\in\mathbb{N}$ be a flattened indexing of multi-index $\boldsymbol{i}$ that preserves order, i.e. 
$\lambda_1\le \lambda_2\le \ldots$. Recall that each marginal domain $\mathcal{M}_{d}$ is a closed Riemannian manifold of dimension $p_{d}\ge 1$, and thus $\Omega$ is of dimension $p^* := \sum_{d=1}^D p_d$. By Weyl’s law (Theorem 72 \citeSupp{Canzani2013}), we have that }
\revised{  
$$
\lambda_{j} \asymp \frac{\sqrt{2\pi}}{\{\omega^* \text{Vol}(\Omega)\}^{2/p^*}}j^{2/p^*}, \quad j\rightarrow\infty,
$$
where $\omega^* = \frac{2\pi^{p^* }}{p^* \Gamma(p^*/2)}$,
therefore, we have that 
$$
S(\lambda_j) = (1+\lambda_j)^{-s}\asymp j^{-2s/p^*}, \quad j \to \infty.
$$ 
Now, let $j^{*}(\epsilon) = \max\{j\ge 1: S(\lambda_j)\ge \epsilon\}$. By asymptotic formula for S($\lambda_j)$, we have
$$
\begin{aligned}
    &j^{-2s/p^*}\ge \epsilon 
    \iff 
    \epsilon^{-p^*/2s} \ge j \implies j^{*}(\epsilon)\asymp\epsilon^{-p^*/2s}
\end{aligned}
$$
for sufficiently small $\epsilon>0$.
}
\revised{By \citeSupp{bach2017equivalence} Section 4.2,
$
r_{\max}(q^{*},\epsilon) \asymp j^{*}(\epsilon).
$ 
Therefore, we have 
$
r_{\max}(q^{*},\epsilon) \asymp j^{*}(\epsilon)
= {\epsilon}^{-\frac{\sum_{d=1}^Dp_d}{2s}}.
$
The proof is completed following Proposition 1 of \cite{bach2017equivalence}.
\end{proof}
}
\revised{In practice, we sample the encoding basis uniformly
over multi-index set $\underset{\lambda \le \lambda_{\text{max}}}{\bigcup}\mathcal{I}_{\lambda}$, but since this uniform sampling scheme is not well-defined asymptotically, we use $q(\boldsymbol{i})$ to recover standard results. 
A consequence of Theorem~\ref{thm:shallowNeuroPMDMinMax} is that a shallow NeuroPMD is a universal approximator for a sufficiently large $K$. This is formalized below as a corollary.} 
\revised{
\begin{corollary}\label{thm:shallowNeuroPMDUAP}
For any $f\in\mathcal{H}^{s}(\Omega)$ and $\epsilon>0$, there exists $K\in\mathbb{N}$, multi-indices $\boldsymbol{i}_{1},...,\boldsymbol{i}_{K}\in\mathbb{Z}^{D}$ (defining encoder $\boldsymbol{\eta}$), and parameters $\boldsymbol{\theta}\in\Theta$ such that the $L=1$ layer $v_{\boldsymbol{\theta}}$ defined in \eqref{eqn:nf} satisfies 
\begin{equation}\label{eqn:shallowNeuroPMDrate}
\|f -  v_{\boldsymbol{\theta}}(\boldsymbol{x})\|_{L^2(\Omega)}^2\le \epsilon
\end{equation}
\end{corollary}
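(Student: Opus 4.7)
The plan is to deduce this universal approximation result as a direct consequence of Theorem~\ref{thm:shallowNeuroPMDMinMax} via a normalization and a probabilistic existence argument. The statement of the theorem is uniform over the unit ball of $\mathcal{H}^s(\Omega)$ and supplies a high-probability bound, while the corollary merely asserts the existence of \emph{some} encoder and parameter vector for each target $f$; both gaps are easily bridged.

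First I would dispense with the trivial case $f = 0$ by taking $\boldsymbol{\theta} = \boldsymbol{0}$. Otherwise, set $M = \|f\|_{\mathcal{H}^s(\Omega)} > 0$ and define the normalized target $\tilde f := f / M$, which lies in the unit ball of $\mathcal{H}^s(\Omega)$. Choose $\epsilon' := \epsilon / (4 M^2) > 0$ and $\delta \in (0,1)$ arbitrary (e.g.\ $\delta = 1/2$). Apply Theorem~\ref{thm:shallowNeuroPMDMinMax} with this $\epsilon'$ and $\delta$, which produces an integer $K \ge 5 r_{\max}(\epsilon')\log\{16 r_{\max}(\epsilon')/\delta\}$ such that, when $\boldsymbol{i}_1,\ldots,\boldsymbol{i}_K$ are drawn i.i.d.\ from the proposal $q$, with probability at least $1-\delta > 0$ there exists $\tilde{\boldsymbol{\theta}}$ with $\|\tilde{\boldsymbol{\theta}}\|_2^2 \le 4/K$ satisfying $\|\tilde f - v_{\tilde{\boldsymbol{\theta}}}\|_{L^2(\Omega)}^2 \le 4\epsilon'$.

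Since this event has positive probability, there must exist at least one realization of $(\boldsymbol{i}_1,\ldots,\boldsymbol{i}_K)$ for which such a $\tilde{\boldsymbol{\theta}}$ exists. Fix such a realization to obtain the encoder $\boldsymbol{\eta}$ in the statement. Now use the fact that for $L=1$ the network $v_{\boldsymbol{\theta}}(\boldsymbol{x}) = \boldsymbol{\theta}^{\intercal}\boldsymbol{\eta}(\boldsymbol{x})$ is linear in $\boldsymbol{\theta}$: setting $\boldsymbol{\theta} := M\tilde{\boldsymbol{\theta}}$ yields $v_{\boldsymbol{\theta}} = M v_{\tilde{\boldsymbol{\theta}}}$, so
\begin{equation*}
\|f - v_{\boldsymbol{\theta}}\|_{L^2(\Omega)}^2 \;=\; M^2\,\|\tilde f - v_{\tilde{\boldsymbol{\theta}}}\|_{L^2(\Omega)}^2 \;\le\; 4 M^2 \epsilon' \;=\; \epsilon,
\end{equation*}
which is the desired bound.

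No step here is a genuine obstacle; the only subtlety is the probabilistic existence step, which is standard but worth stating cleanly (a positive-probability event is non-empty, so a deterministic choice of multi-indices realizing the bound must exist). One should also note that the embedding condition $2s > \sum_d p_d$ inherited from Theorem~\ref{thm:shallowNeuroPMDMinMax} is already part of the setup there, so no additional assumption on $f$ is needed beyond $f \in \mathcal{H}^s(\Omega)$.
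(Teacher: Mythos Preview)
Your proposal is correct and follows essentially the same approach as the paper's own proof: normalize $f$ to the unit ball of $\mathcal{H}^s(\Omega)$, apply Theorem~\ref{thm:shallowNeuroPMDMinMax} with the rescaled tolerance $\epsilon/(4\|f\|_{\mathcal{H}^s(\Omega)}^2)$, use the positive-probability event to extract a deterministic choice of multi-indices, and then exploit the linearity of the $L=1$ network in $\boldsymbol{\theta}$ to rescale back. Your version is arguably a bit cleaner in that you explicitly handle the $f=0$ case and fix a concrete $\delta$, but there is no substantive difference.
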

}
\revised{\begin{proof} This is a direct consequence of Theorem~\ref{thm:shallowNeuroPMDMinMax}. To see this, denote $f_0 =f\Big/\|f\|_{\mathcal{H}^{s}(\Omega)}$, let
$\epsilon^{*} = \epsilon\Big/4\|f\|_{\mathcal{H}^{s}(\Omega)}^2$
and $\delta\in (0,1)$. We assume $K\ge 5r_{max}(\epsilon^{*} )\log\frac{16r_{max}(\epsilon^{*})}{\delta}$, 
where $r_{max}(\epsilon^{*})\asymp{\epsilon^{*}}^{-\frac{\sum_{d=1}^Dp_d}{2s}}$. Define 
$\widehat{\boldsymbol{\theta}} \in \text{arginf}_{\|\boldsymbol{\theta}\|_{2}^2\le \frac{4}{K}}\|f_0 - v_{\boldsymbol{\theta}}\|_{L^2(\Omega)}^2. 
$
By Theorem~\ref{thm:shallowNeuroPMDMinMax}, since $\|f_0\|_{\mathcal{H}^{s}(\Omega)}\le 1$, with probability greater than $1-\delta$,
$\|f_0 - v_{\widehat{\boldsymbol{\theta}}}\|_{L^2(\Omega)}^2\le 4\epsilon^{*}$.
Denote the re-scaled parameters $\tilde{\boldsymbol{\theta}}=\|f\|_{\mathcal{H}^{s}(\Omega)}\boldsymbol{\theta}$, and notice that since $L=1$, $v_{\tilde{\boldsymbol{\theta}}}:= \|f\|_{\mathcal{H}^{s}(\Omega)}v_{\boldsymbol{\theta}}$, due to linearity, and denote $\widehat{\tilde{\boldsymbol{\theta}}} = \|f\|_{\mathcal{H}^{s}(\Omega)}\widehat{\boldsymbol{\theta}}$. Now, notice that with probability greater than $1-\delta$, we have 
\begin{equation}\label{eqn:prob_bounds}
\begin{aligned}
    \|f-v_{\widehat{\tilde{\boldsymbol{\theta}}}}\|_{L^2(\Omega)}^2 &= \|\|f\|_{\mathcal{H}^{s}(\Omega)}f_0-\|f\|_{\mathcal{H}^{s}(\Omega)}v_{\widehat{\boldsymbol{\theta}}}\|_{L^2(\Omega)}^2 = \|f\|_{\mathcal{H}^{s}(\Omega)}^2\|f_0-v_{\widehat{\boldsymbol{\theta}}}\|_{L^2(\Omega)}^2 \\
    & \le \|f\|_{\mathcal{H}^{s}(\Omega)}^2 4\epsilon^{*} = \epsilon.
\end{aligned}
\end{equation}
Since $1-\delta \in (0,1)$, the bound in \eqref{eqn:prob_bounds} always has positive probability, hence there exists some set of multi-indices $(\boldsymbol{i}_{1}^{*},...,\boldsymbol{i}_{K}^{*})$ defining an encoder $\boldsymbol{\eta}$ such that the correspond  $v_{\widehat{\tilde{\boldsymbol{\theta}}}}$ achieves the desired accuracy.
\end{proof}}

\subsection{Algorithm Implementation Details}\label{ssec:algo_implementation_detials}
We provide several remarks below to address key implementation details and practical considerations for the Algorithm~\ref{alg:sgd}’s successful deployment.
\par 
\noindent{\textbf{Learning Rate Schedule}: In our experiments, for relatively low dimensional cases ($p_{d}=1$, $D\le 3$), a small fixed learning rate was typically sufficient for fast and stable convergence. For the higher dimensional cases $p_{d}\ge 2$ and/or $D > 3$, we found a fixed learning rate often leads to relatively slow algorithmic convergence. To accelerate convergence in these cases, we employ a cyclic learning rate schedule under the triangular policy from \citeSupp{Smith2017}, which adaptively cycles the learning rate between upper and lower bounds on a fixed schedule and has been observed empirically to improve training speeds dramatically \citepSupp{smith2019}.}
\par 
\noindent{\textbf{Monte-Carlo Integration Sampling}: Line~\ref{alg:manifold_sampling} in Algorithm~\ref{alg:sgd} requires sampling the uniform distribution over each $\mathcal{M}_{d}$. For many manifolds of interest, e.g. $\mathbb{S}^{p}$, fast exact procedures are available. If $\mathcal{M}_{d}$ is represented via triangulation, fast approximate methods exist that only require uniform sampling within the triangles and then weighting by their volumes \citepSupp{Osada2002}. 
Higher dimensional and non-standard marginal manifolds may require more computationally intensive methods for sampling, e.g. MCMC \citepSupp{Zappa2018}. However, since the gradients in Equation \eqref{eqn:unbiased_gradient_estimates} are unbiased for any $q_1$ and $q_2$, smaller values can potentially be used in such cases to reduce computational demands, albeit at the expense of increased variance.} \par 
\noindent{\revised{\textbf{Roughness Penalty Computation:}} While automatic differentiation can be used in the calculation of the roughness penalty gradient as outlined in Section~\ref{ssec:algo}, this approach requires including the spatial locations of the evaluation points in the computational graph, which may become inefficient for large (deep) networks and large $q_2$. In such cases, it may be preferable to approximate the operators using standard Euclidean finite difference schemes, as this only requires forward passes through the network. \revised{As an additional note, extrinsic formulations of the full manifold Hessian \citep{absil2013extrinsic} may provide a route towards incorporating more general roughness penalties into our framework, (e.g., involving mixed partial derivatives), which we leave as an interesting direction for future work}.}
\par 
\noindent{\revised{\textbf{Non-Analytic Spectrum}: In some applications, the marginal manifold $\mathcal{M}_{d}$ is represented by a triangulation. This is common in imaging application, where typically $p_d=2/3$ and $m_d=p_d+1$. For example, the raw structural connectome data analyzed in Section~\ref{ssec:rda_cc} lives on a product of two cortical $2$-d surfaces embedded in $\mathbb{R}^3$ and discretized using a dense triangular mesh. In such cases, Laplace–Beltrami eigenfunctions are not available in closed form. However, algorithms exist for approximating these eigenfunctions on the mesh via finite element basis functions \citep{reuter2009}, which can then be sampled from to form the encoding function $\boldsymbol{\eta}$. In order to calculate the roughness penalty in Proposition~\ref{prop:LBO_via_ambient_projection}, we need to form the projection matrices $\boldsymbol{P}_{d}$ onto the tangent space $T_{x_{d}}(\mathcal{M}_d)$. For the setting $m_{d}=p_{d}+1$, the projection matrix can be defined by $\boldsymbol{P}_{d} = (\boldsymbol{I}-\boldsymbol{n}_{x_{d}}\boldsymbol{n}_{x_{d}}^{\intercal})$, where $\boldsymbol{n}_{x_{d}}\in\mathbb{R}^{m_{d}}$ is the normal vector to $\mathcal{M}_{d}$ at $x_{d}$ \citep{fuselier2013high}. On the mesh, the normal $\boldsymbol{n}_{x_{d}}$ can be easily approximated by identifying normal of the triangle containing $x_{d}$.}}

\subsection{Non-Separable First-Layer Encoding for $\mathbb{T}^{D}$}\label{ssec:sep_vs_nonsep_encoding}

Let $\mathbb{T}^D = \bigtimes_{d=1}^D\mathbb{S}^1$.  For a fixed eigenvalue $\lambda$ of $\Delta_{\mathbb{T}^{D}}$, taking the tensor products of the marginal eigenfunctions, we have that the eigenfunctions are given by:
\begin{equation}\label{eqn:sep_eigenfuncs_TD}
\boldsymbol{\psi}_{\lambda} := \left(\pi^{-D/2}\prod_{d=1}^D\cos(i_{d}x_d - \frac{z_{d}\pi}{2}), (z_{1},\ldots,z_{D})\in\{0,1\}^{D}, \boldsymbol{i}\in \mathcal{I}_{\lambda}\right),
\end{equation}
with index set $\mathcal{I}_{\lambda}$ defined as in Section~\ref{sssec:RLBO_encoding}. Define the function
\begin{equation}\label{eqn:non_sep_eigenfuncs_TD}
        \tilde{\boldsymbol{\psi}}_{\lambda}:=\left(\pi^{-D/2}h(\sum_{d=1}^Di_{d}x_d):\boldsymbol{i}\in\mathcal{I}, h\in\{\sin,\cos\}\right).
\end{equation}
We want to show that there exists a rotation matrix $\boldsymbol{A}_{\lambda}\neq \boldsymbol{I}$ such that $\tilde{\boldsymbol{\psi}}_{\lambda} = \boldsymbol{A}_{\lambda}\boldsymbol{\psi}_{\lambda}$. That is, $\tilde{\boldsymbol{\psi}}_{\lambda}$ is a non-separable basis for eigenspace $\lambda$ with $\text{span}(\tilde{\boldsymbol{\psi}}_{\lambda}) =\text{span}(\boldsymbol{\psi}_{\lambda})$. This can be established by proving the following four facts: 1) All functions in \eqref{eqn:sep_eigenfuncs_TD} and \eqref{eqn:non_sep_eigenfuncs_TD} are eigenfunctions of $\Delta_{\mathbb{T}^{D}}$ for fixed eigenvalue $\lambda$. 2) The rank of basis \eqref{eqn:sep_eigenfuncs_TD} and \eqref{eqn:non_sep_eigenfuncs_TD} are equal. 3) The functions in \eqref{eqn:non_sep_eigenfuncs_TD} are pairwise orthonormal. 4) Any element in \eqref{eqn:non_sep_eigenfuncs_TD} can be written as a linear combination of elements in \eqref{eqn:sep_eigenfuncs_TD}. The proofs for 1-4 proceed as follows:
\begin{enumerate}
    \item This follows directly from the facts $\frac{\partial^2}{\partial x_{d}^2}\sin(\sum_{d=1}^Di_{d}x_{d}) = - i_{d}^2\sin(\sum_{d=1}^Di_{d}x_{d})$ and $\frac{\partial^2}{\partial x_{d}^2}\cos(\sum_{d=1}^Di_{d}x_{d}) = - i_{d}^2\cos(\sum_{d=1}^Di_{d}x_{d})$.
    Hence, all coordinate functions in \eqref{eqn:sep_eigenfuncs_TD} and \eqref{eqn:non_sep_eigenfuncs_TD} are eigenfunctions of $\Delta_{\mathbb{T}^{D}}$ with the same eigenvalue, namely $\lambda=\sum_{d=1}^Di_{d}^2$.
    \item  \eqref{eqn:non_sep_eigenfuncs_TD} and \eqref{eqn:sep_eigenfuncs_TD} have the same number of functions. This is due to the fact they are indexed by the same condition $i_{1}^2,\ldots,i_{d}^2=\lambda$, for $(i_{1}, \ldots, i_{D})\in\mathbb{Z}^{D}$, with a multiplicative factor of 2 coming from the choice between sin and cosine.
    \item The elements of \eqref{eqn:non_sep_eigenfuncs_TD} are pairwise orthonormal. This can be established from the following standard sum to product identities: 
            $$
            \sin(\sum_{d=1}^Di_{d}x_d)\sin(\sum_{d=1}^Di_{d}^{\prime}x_d) = \frac{1}{2}\left[\cos(\sum_{d=1}^D(i_{d}-i_{d}^\prime)x_d) - \cos(\sum_{d=1}^D(i_{d}+i_{d}^\prime)x_d)\right]
            $$
            $$
            \cos(\sum_{d=1}^Di_{d}x_d)\cos(\sum_{d=1}^Di_{d}^{\prime}x_d) = \frac{1}{2}\left[\cos(\sum_{d=1}^D(i_{d}+i_{d}^\prime)x_d) + \cos(\sum_{d=1}^D(i_{d}-i_{d}^\prime)x_d)\right]
            $$
            $$
            \sin(\sum_{d=1}^Di_{d}x_d)\cos(\sum_{d=1}^Di_{d}^{\prime}x_d) = \frac{1}{2}\left[\sin(\sum_{d=1}^D(i_{d}+i_{d}^\prime)x_d) + \sin(\sum_{d=1}^D(i_{d}-i_{d}^\prime)x_d)\right],
            $$
    and hence the integrals are clearly zero since $i_{d}-i_{d}^\prime\in\mathbb{Z}$.
    \item That any element in \eqref{eqn:non_sep_eigenfuncs_TD} can be written as a linear combination of elements in \eqref{eqn:sep_eigenfuncs_TD} can be established by induction. 
    \par 
    For $D=2$, this holds using standard  sum and difference formulas
    $$
    \begin{aligned}
    \cos(i_{1}x_{1}+i_{2}x_{2})&=\cos(i_{1}x_{1})\cos(i_{2}x_{2}) - \sin(i_{1}x_{1})\sin(i_{2}x_{2}) \\
    \sin(i_{1}x_{1}+i_{2}x_{2})&=\sin(i_{1}x_{1})\cos(i_{2}x_{2}) + \cos(i_{1}x_{1})\sin(i_{2}x_{2})
    \end{aligned}
    $$ 
    Assuming this holds for $D$. Then for $D+1$ we again invoke the sum and difference formulas
    \begin{equation}\label{eqn:sumAndDiff_D1}
    \begin{aligned}
    \cos(\sum_{d=1}^Di_{d}x_{d}+i_{D+1}x_{D+1})&=\cos(\sum_{d=1}^Di_{d}x_{d})\cos(i_{D+1}x_{D+1}) - \sin(\sum_{d=1}^Di_{d}x_{d})\sin(i_{D+1}x_{D+1}) \\
    \sin(\sum_{d=1}^Di_{d}x_{d}+i_{D+1}x_{D+1})&=\sin(\sum_{d=1}^Di_{d}x_{d})\cos(i_{D+1}x_{D+1}) + \cos(\sum_{d=1}^Di_{d}x_{d})\sin(i_{D+1}x_{D+1}).
    \end{aligned}
    \end{equation}
    From the induction hypothesis, $\sin(\sum_{d=1}^Di_{d}x_{d})$ and $\cos(\sum_{d=1}^Di_{d}x_{d})$ can be written as a linear combination of separable marginal functions, hence the RHS of \eqref{eqn:sumAndDiff_D1} is also a linear combination of separable marginal functions
\end{enumerate}
This establishes that there exists a rotation matrix $\boldsymbol{A}_{\lambda}$ such that the non-separable eigenbasis functions $\tilde{\boldsymbol{\psi}}_{\lambda} = \boldsymbol{A}_{\lambda}\boldsymbol{\psi}_{\lambda}$, hence forming a complete and orthonormal basis for the eigenspace corresponding to $\lambda$. 
\begin{figure}[!ht]
    \centering
    \includegraphics[width=0.8\linewidth]{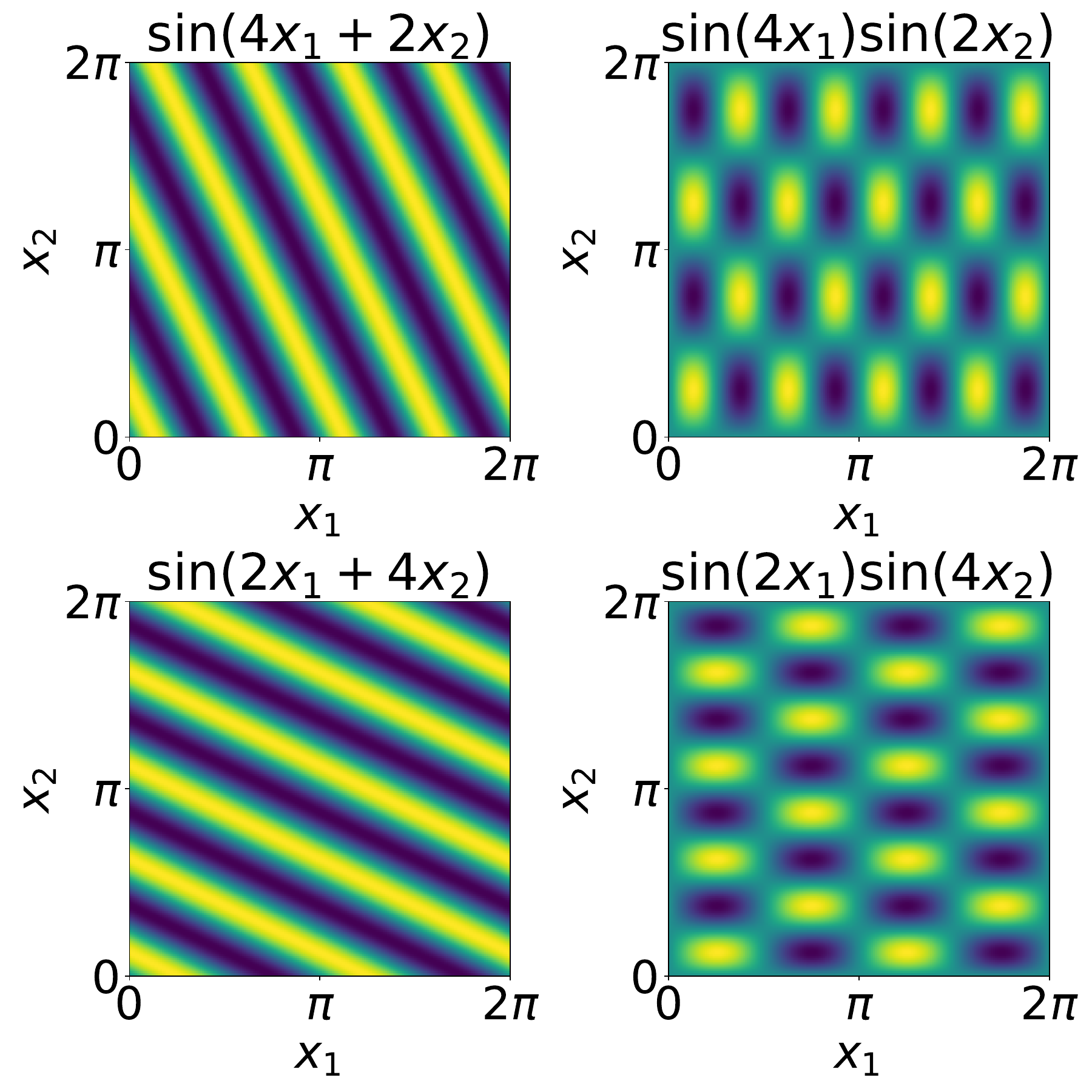}
    \caption{Two non-separable eigenfunctions (left column) and separable eigenfunctions (right column) of $\Delta_{\mathbb{T}^2}$  for the same eigenspace $\lambda = 20$.}
    \label{fig:separable_vs_non_seperable_encodings}
\end{figure}
\par 
Figure~\ref{fig:separable_vs_non_seperable_encodings} plots two randomly selected basis functions for both separable and non-separable formulations for $\mathbb{T}^2$. Notice that, due to the separable structure, all basis functions in \eqref{eqn:sep_eigenfuncs_TD} have oscillations only in the marginal directions, i.e. along the coorindate axis for each copy of $\mathbb{S}^1$, while the functions from \eqref{eqn:non_sep_eigenfuncs_TD} have oscillations over varying ``diagonal'' directions in $\mathbb{T}^{D}$. 
See Section~\ref{ssec:spec_bias} for an empirical comparison of the impact of separable and non-separable encodings on network convergence. 

\section{Competing Methods}

\subsection{Tensor Product Basis Density Estimator}\label{apx:tensor_product_learning}
With a slight change of notation for clarity, denote the truncation of the marginal eigenfunctions of $\Delta_{\mathcal{M}_{d}}$ as $\phi_{d} := (\phi_{d,1}, \ldots \phi_{d,r_{d}})$, where $r_{d}$ is the rank. 
The tensor product basis estimator is a special case of \eqref{eqn:nf} for the following specification:
\begin{enumerate}
    \item $\boldsymbol{\eta} := \bigotimes_{d=1}^D\phi_{d}$, i.e. the full set of tensor product functions with $K=\prod_{d=1}^Dr_{d}$
    \item $L=1$, so $\boldsymbol{\theta} := \boldsymbol{W}^{(1)}\in\mathbb{R}^{\prod_{d=1}^Dr_{d}\times 1}$ and  
    $$
    v_{\boldsymbol{\theta}}(x_1,\ldots,x_{D}) = \boldsymbol{\theta}^{\intercal}\boldsymbol{\eta}(x_1,\ldots,x_{D})$$
\end{enumerate}
and hence the task is to find the coefficients $\boldsymbol{\theta}$. WLOG, assume $r_1=r_2=\ldots=r_{D}=r$, so $\boldsymbol{\theta}\in\mathbb{R}^{r^{D}}$.  Plugging this representation into \eqref{eqn:MAP_pp_reparam}, we get the optimization problem:
\begin{equation}\label{eqn:lin_model_optimization_problem}
\begin{aligned}
    \widehat{\boldsymbol{\theta}} &= \max_{\boldsymbol{\theta}\in\mathbb{R}^{r^{D}}}  \Big(\frac{1}{n}\sum_{i=1}^n \boldsymbol{\theta}^\intercal \boldsymbol{\eta}(x_{1i},\ldots,x_{Di}) - \int_{\Omega}\exp\left(\boldsymbol{\theta}^\intercal \boldsymbol{\eta}\right)d\omega  - R_{\tau}(\boldsymbol{\theta}^\intercal \boldsymbol{\eta})\Big).\\
    \end{aligned}
\end{equation}
Under the linear basis expansion, the penalty term in \eqref{eqn:lin_model_optimization_problem} takes on a special form given by
\begin{equation}\label{eqn:lin_model_roughpenalty}
    \begin{aligned}
    R_{\tau}(\boldsymbol{\theta}^\intercal \boldsymbol{\eta}) &= \tau \int_{\Omega}[\Delta_{\Omega}\boldsymbol{\theta}^\intercal \boldsymbol{\eta}]^2d\omega = \tau \boldsymbol{\theta}^\intercal[\int_{\Omega}\Delta_{\Omega}\boldsymbol{\eta}[\Delta_{\Omega}\boldsymbol{\eta}]^{\intercal}d\omega]\boldsymbol{\theta} := \tau \boldsymbol{\theta}^{\intercal}\boldsymbol{F}\boldsymbol{\theta} \\
\end{aligned}
\end{equation}
where $\boldsymbol{F}\in\mathbb{R}^{r^{D}\times r^{D}}$ is the matrix of inner products of the LBO of the encoding function. Due to the fact that $\boldsymbol{\eta}$ are eigenfunctions, $\boldsymbol{F}$ is a diagonal matrix with elements given by the sums of eigenvalues $\sum_{d=1}^D\lambda_{i_{d}}$. Notice that $\exp(\boldsymbol{\theta}^{\intercal}\boldsymbol{\eta}(x_1,\ldots,x_{D}))$ is convex in $\boldsymbol{\theta}$, hence $\int_{\Omega}\exp(\boldsymbol{\theta}^{\intercal}\boldsymbol{\eta})d\omega$ is also convex in $\boldsymbol{\theta}$ \citepSupp{boyd2004convex}.
Using this fact and the convexity of the quadratic form of \eqref{eqn:lin_model_roughpenalty}, the optimization problem \eqref{eqn:lin_model_optimization_problem} is a sum of convex functions, and hence is convex (in $\boldsymbol{\theta}$). Then under some conditions on the decay of the learning rate \citepSupp{Bottou2018}, Algorithm \eqref{alg:sgd} is guaranteed to converge. 

\subsection{Product Squared Neural Families}\label{ssec:pSNF_explication}
\revised{For modeling density/intensity functions on product manifolds, \citep{tsuchida2024exact} propose to extend the squared neural families (SNF) model from \citep{tsuchida2023}, which models intensity/density using the following representation:
\begin{equation}\label{eqn:snef_model}
    \lambda(x) = \alpha\| V\psi(x)\|_{2}^2, \quad \psi(x) = \sigma(Wt(x) + b),
\end{equation}
where $x\in\mathcal{M}$, $\sigma$ is an activation function, $t:\mathcal{M}\mapsto\mathbb{R}^{H_{0}}$ is an encoding function, $W\in\mathbb{R}^{H_{1}\times H_{0}}$, $b\in\mathbb{R}^{n}$ is the weight and bias matrix, and $V\in\mathbb{R}^{H_{2}\times H_{1}}$ is a learnable matrix. The primary advantage of the representation \eqref{eqn:snef_model} is that through careful choices of encoding function $t$, activation function $\sigma$ and base measure $\mu:\mathcal{B}(\mathcal{M})\mapsto\mathbb{R}^{+}_0$, the normalization integral 
$$
\Lambda = \int_{\Omega}\lambda(x)d\mu(x),
$$
has an analytic solution. Specifically, the functions $t, \sigma, \mu$ must be chosen such that the inner product matrix 
$$
G = \int_{\mathcal{M}}[\sigma(Wt(x) + b)]^{\intercal}\sigma(Wt(x) + b)d\mu(x) \in \mathbb{R}^{H_1\times H_1}
$$
is an analytic function of parameters $W,b$. Note, this is where the depth restriction comes in, as it is very difficult to find combinations of hyperparameters that lead to closed form integrals under composition. The authors work out combinations of hyperparameters for exact integration for several standard manifolds \citep{tsuchida2023,tsuchida2024exact}.  Specifically, for $\mathbb{S}^{d-1}$, it is shown that for the uniform surface measure, identity warping ($t(x) := x$) and exponential activation ($\sigma=\exp$), the inner product matrix $G$ has closed form expression with element-wise definition 
$$
G(i,j) = \exp(b_{i} + b_{j})\frac{\Gamma(d/2)2^{d/2-1}J_{d/2-1}(\|w_i + w_j\|_2)}{\|w_i + w_j\|_2},
$$
where $J_{p}$ is the modified Bessell function of the first kind of order $p$, $w_i$ is the i'th row of $W$,  and $b_i$ is the $i$'th element of $b$. Note that for $\mathbb{S}^1$, this reduces to 
$$
G(i,j) := \exp(b_{i} + b_{j})J_0(\|w_i + w_j\|).
$$
}
\par 
\revised{In the case of product domains $\Omega = \bigtimes_{d=1}^D\mathcal{M}_{d}$, \citep{tsuchida2024exact} suggest the following element-wise product representation 
\begin{equation}\label{eqn:prod_snef_model}
\lambda(x_1,...,x_{D}) = \alpha\|V\psi(x_1,...,x_{D})\|_{2}^2, \quad \psi(x_1,...,x_{D}) = \bigodot_{d=1}^D\sigma_{d}(W_{d}t_d(x_d)+b_d),
\end{equation}
where $\odot$ is the Hadamard product. This product representation was chosen primarily to maintain the exact integration property, which is guaranteed so long as each of the marginal normalization integrals can be constructed analytically, since $G = \bigodot_{d=1}^D G_{d}$, where 
$$
G_{d} = \int_{\mathcal{M_{d}}}[\sigma(W_{d}t_{d}(x) + b_{d})]^{\intercal}\sigma(W_{d}t_{d}(x) + b_{d})d\mu_{d}(x) \in \mathbb{R}^{H_{1,d}\times H_{1,d}}.
$$  
Note that while the authors speculate on several other ways of ``mixing'' across the product domains, including replacing the elementwise product in \eqref{eqn:prod_snef_model} with an outer product, their suggestion is to use the elementwise product since it is their only proposal that both inherits the closed form integration property from the marginal models and avoids the computational curse of dimensionality. For example, the outer product approach would require differentiating through the matrix $G=\bigotimes_{d=1}^D G_{d}$ during training, resulting in exponential growth in the parameter $V$, since $H_{2} = \prod_{d=1}^DH_{1,d}$.}
\par 
\revised{For the experiments in Section~\ref{ssec:simulation_studies}, we set $H_{1}=H_{2}$ $\forall d=1,...,D$. To approximately match the parameter count in the NeuroPMD, for the $\mathbb{T}^2$ case we set $H_{1}=H_{2}=181$, resulting in a total of $33,486$. For the $\mathbb{T}^4$ case, to approximately match NeuroPMD we initially set $H_{1}=H_{2}=222$ for a total of $50,173$ trainable parameters, but found that the training was unstable and the network did not learn a reasonable estimate. Therefore, we increased the number of trainable parameters and ultimately found that $H_{1}=H_{2}=257$ (total parameters = 68,106) resulted in reliable convergence and reasonable estimates for this more challenging setting. Furthermore, we found that the default initialization from the \texttt{snefy} GitHub package  package did not lead to reliable convergence. After experimentation, we identified a stable initializing strategy: initializing the weight matrices $W_d$ with mean-zero normals with variance $\frac{4}{D}$ and the $V$ matrix with standard normal entries. The network was trained with the ADAM optimizer.}

\section{Additional Synthetic Data Experiments}\label{sec:additional_experiments}

\subsection{Synthetic Data Generation}

The true density functions were defined as equally--weighted mixtures of anisotropic wrapped normal distributions \citepSupp{mardia2009directional}. The covaraince matrices for each mixture component were defined using a parameter \textit{anisotropy-factor} $\in[0,\infty)$, which defines the ratio between the maximum and minimum eigenvalue, hence controlling the degree of local anisotropy. Specifying different   anisotropy-factors for different mixture components allows us to simulate density functions with spatially varying anisotropy. Specifically, the covariance matrices were  created for each mixture component as follows:
\begin{enumerate}
    \item A random $D\times D$ orthogonal matrix $\boldsymbol{Q}$ was sampled. 
    \item For a pre-defined anisotropy-factor, eigenvalues were calculated according to a logarithmically spaced grid from $\{1, ..., \text{anisotropy-factor}\}$ and collected to form the diagonal matrix $\boldsymbol{\Lambda}$.
    \item The covaraince matrix was calculated as $\boldsymbol{C} = \frac{\boldsymbol{Q}\boldsymbol{\Lambda}\boldsymbol{Q}^{\intercal}}{\text{max}_{ij}[\boldsymbol{Q}\boldsymbol{\Lambda}\boldsymbol{Q}^{\intercal}]_{ij}}$, where $\text{max}_{ij}[\boldsymbol{M}]_{ij}$ returns the maximum over all elements in the matrix $\boldsymbol{M}$.
\end{enumerate}

For $\mathbb{T}^2$, we set the number of mixture components to three with means $(\pi, \pi/2)$, $(\pi, 5\pi/3)$, $(\pi/4, \pi)$ and anisotropy-factors $100,100,20$, respectively. For $\mathbb{T}^4$, we used five components with mean vectors: $(0.5, 0.5, 0.5, 0.5)$, $(\pi, \pi, \pi, \pi)$, $(\pi/2, 3\pi/2, 3\pi/2, \pi/2)$, \\ $(3\pi/2, \pi/2, \pi/2, 3\pi/2)$, $(\pi/4, \pi/4, 7\pi/4, 7\pi/4)$, with anisotropy-factors $100,100,20,50,75$, respectively.

\subsection{Comparison to ReLU MLP Model}\label{sssec:baseline_MLP}
MLP-based neural fields with standard ReLU activations are known to suffer from spectral bias, often facing significant difficulties in training and exhibiting poor convergence in the high-frequency components of the underlying function \citepSupp{rahaman2019}. However, to our knowledge, most empirical evaluations of this effect have been conducted within the context of $L^2$-based learning, which is not our setting. To investigate this behavior in our case, we substituted the network architecture \eqref{eqn:nf} with a ``vanilla'' ReLU MLP, where all hidden activations are ReLU functions and the initial encoding layer is the identity:
$$
\begin{aligned}
\mathbf{h}^{(0)} &= \mathbf{x}, \\
\mathbf{h}^{(l)} &= \sigma \left( \mathbf{W}^{(l)} \mathbf{h}^{(l-1)} + \mathbf{b}^{(l)} \right), \quad \text{for } l = 1, \ldots, L-1, \\
v_{\theta}(\mathbf{x}) &= \mathbf{W}^{(L)} \mathbf{h}^{(L-1)},
\end{aligned}
$$
where $\sigma$ is the ReLU function. For a fair comparison, we used the same network depth, width, and training algorithm configuration outlined in Section~\ref{ssec:simulation_studies} and evaluated the model on the same synthetic data. The one difference was that, instead of applying a roughness penalty to enforce regularity, we utilized early stopping, as the ReLU network is no longer second-order differentiable. The early stopping iteration $T$ was selected using the same ISE criteria defined in equation \eqref{eqn:l2_selection_criteria} on a small held out validation set. Table~\ref{tab:sim_vanilla_MLP} provides the results for both $\mathbb{T}^2$ and $\mathbb{T}^4$ cases. We see that the ReLU MLP performs poorly in both cases compared to all estimators considered in the main text (see Table~\ref{tab:sim_1_2_MC_results}). Given that such ReLU networks are universal approximators in the wide-width limit, these results highlight the importance of carefully designing architectures to ensure fast and stable convergence to a quality solution in practice, effectively mitigating spectral bias.

\begin{table}
\centering
\begin{tabular}{llr}
\toprule
 & & ReLU MLP \\
\midrule
\multirow{2}{*}{$\mathbb{T}^2$} & FR & $1.665 \pm 4.37 \times 10^{-3}$ \\
 & nISE & $0.956 \pm 2.96 \times 10^{-3}$ \\
\midrule
\multirow{2}{*}{$\mathbb{T}^4$} & FR & $1.540 \pm 1.558 \times 10^{-2}$ \\
 & nISE & $0.980 \pm 4.70 \times 10^{-4}$ \\
\bottomrule
\end{tabular}
\caption{Monte Carlo average simulation results for ReLU MLP with encoding $\boldsymbol{\eta}(\boldsymbol{x})=\boldsymbol{x}$.}
\label{tab:sim_vanilla_MLP}
\end{table}

\subsection{Spectral Bias}\label{ssec:spectralBiasEval}
\revised{
In this section, we assess the spectral bias of NeuroPMD through two analyses: an ablation study comparing separable and non-separable first-layer encodings (Section~\ref{ssec:spec_bias}), and a comparison with the competing neural density estimator pSNF (Section~\ref{sssec:spec_bias_NeuroPMD_vs_pSNEF}).}
\subsubsection{Separable vs. Non-Separable First-Layer Encodings}\label{ssec:spec_bias}
In this section, we examine the spectral bias of the random separable LBO encoding \eqref{eqn:sep_eigenfuncs_TD} and the rotated random non-separable LBO encoding \eqref{eqn:non_sep_eigenfuncs_TD}, both integrated into identical network architectures 
and trained under the same parameterization of optimization Algorithm \ref{alg:sgd} for $T=5,000$ iterations, using an identical fixed learning rate schedule to ensure a fair comparison. We evaluate the convergence of the networks on randomly selected replications for both the $\mathbb{T}^{2}$ and $\mathbb{T}^{4}$ synthetic data discussed in Section~\ref{ssec:simulation_studies}. Due to the randonmenss in both Algorithm~\ref{alg:sgd} and network model \eqref{eqn:nf}, the optimization was repeated 5 times with different randomization seeds for both synthetic examples. Every $10$ iterations during each training run, the current parameter estimate $\boldsymbol{\theta}^c$ was used to calculate the FR error with the ground truth density. 
\begin{figure}
    \centering
    \includegraphics[width=\textwidth]{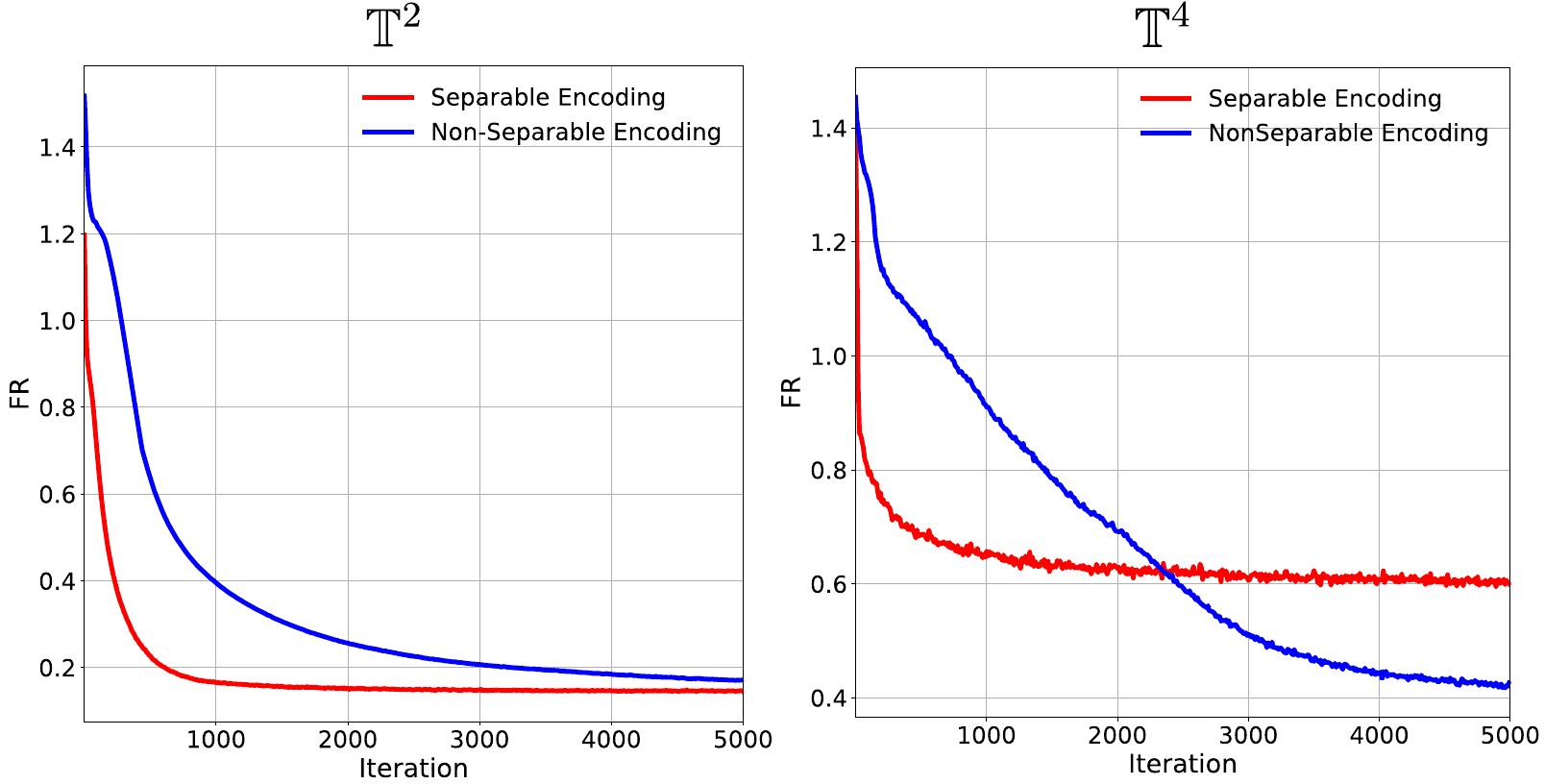}
    \caption{Median convergence profiles across runs of Algorithm~\ref{alg:sgd} in terms of the FR error for the $\mathbb{T}^{2}$ example (left) and $\mathbb{T}^{4}$ example (right).}
    \label{fig:spec_bias_comparison}
\end{figure}
\par 
Figure~\ref{fig:spec_bias_comparison} shows the resulting median convergence profiles across runs of Algorithm~\ref{alg:sgd} in terms of the FR error for the $\mathbb{T}^{2}$ example (left) and $\mathbb{T}^{4}$ example (right). For $\mathbb{T}^{2}$, we observe the separable encoding achieves faster convergence speed. In contrast, the convergence behavior in the high-dimensional
$\mathbb{T}^{4}$ case shows a more complex pattern: while separable encodings initially converge rapidly, their convergence rate slows considerably later in the training. The non-separable encodings are a bit slower to begin with, but ultimately exhibit superior convergence speed when considering the entirety of the training duration, making them more effective at controlling spectral bias in this regime.
\par 
As illustrated in Figure~\ref{fig:separable_vs_non_seperable_encodings}, the encodings appear to introduce different implicit biases. Specifically, the separable encodings concentrate all frequency oscillations along the same marginal axes, while the non-separable encodings distribute oscillations across multiple directions. Whether the implicit bias of the non-separable encodings consistently offer superior performance in high-dimensional domains remains an open question and an important avenue for future research.
\par 
Providing further context are the results of the ``vanilla'' ReLU MLP trained directly on spatial coordinates, presented in  Table~\ref{tab:sim_vanilla_MLP}. This baseline approach effectively forms a low-frequency encoding (the identity encoding) that does not account for manifold structure, and exhibits significantly worse performance. These results further highlight the critical role of first-layer encodings $\boldsymbol{\eta}$ in the performance of the network architecture \eqref{eqn:nf} estimated via Algorithm~\ref{alg:sgd}. 
\par 
In practice, the ISE criterion in \eqref{eqn:l2_selection_criteria} can be used to select between encoding types. Table~\ref{tab:selecting_encoding} reports the median ISE criterion at iteration $T=5,000$, calculated using a small held-out test set. The results show a lower value for the separable encoding in the $\mathbb{T}^2$ case and a lower value for the non-separable encoding for the $\mathbb{T}^4$ case, consistent with the ground truth performance observed in Figure~\ref{fig:spec_bias_comparison}. Since this criterion can be calculated without access to ground truth, it provides an effective method for selecting the $\boldsymbol{\eta}$ hyperparameter in practice.

\begin{table}
\centering
\begin{tabular}{|c|c|c|}
\hline
 & $\mathbb{T}^2$ & $\mathbb{T}^4$ \\
\hline
Separable Encoding & -6.876 & -32.393 \\
\hline
Non-Separable Encoding & -6.799 & -38.473 \\
\hline
\end{tabular}
\caption{Median ISE criteria (Equation~\ref{eqn:l2_selection_criteria}) at iteration $T=5,000$ for different encoding types, calculated on a held-out test set. }
\label{tab:selecting_encoding}
\end{table}

\subsubsection{NeuroPMD vs. pSNEF}\label{sssec:spec_bias_NeuroPMD_vs_pSNEF}
\begin{figure}[ht!]
    \centering
    \includegraphics[width=\textwidth]{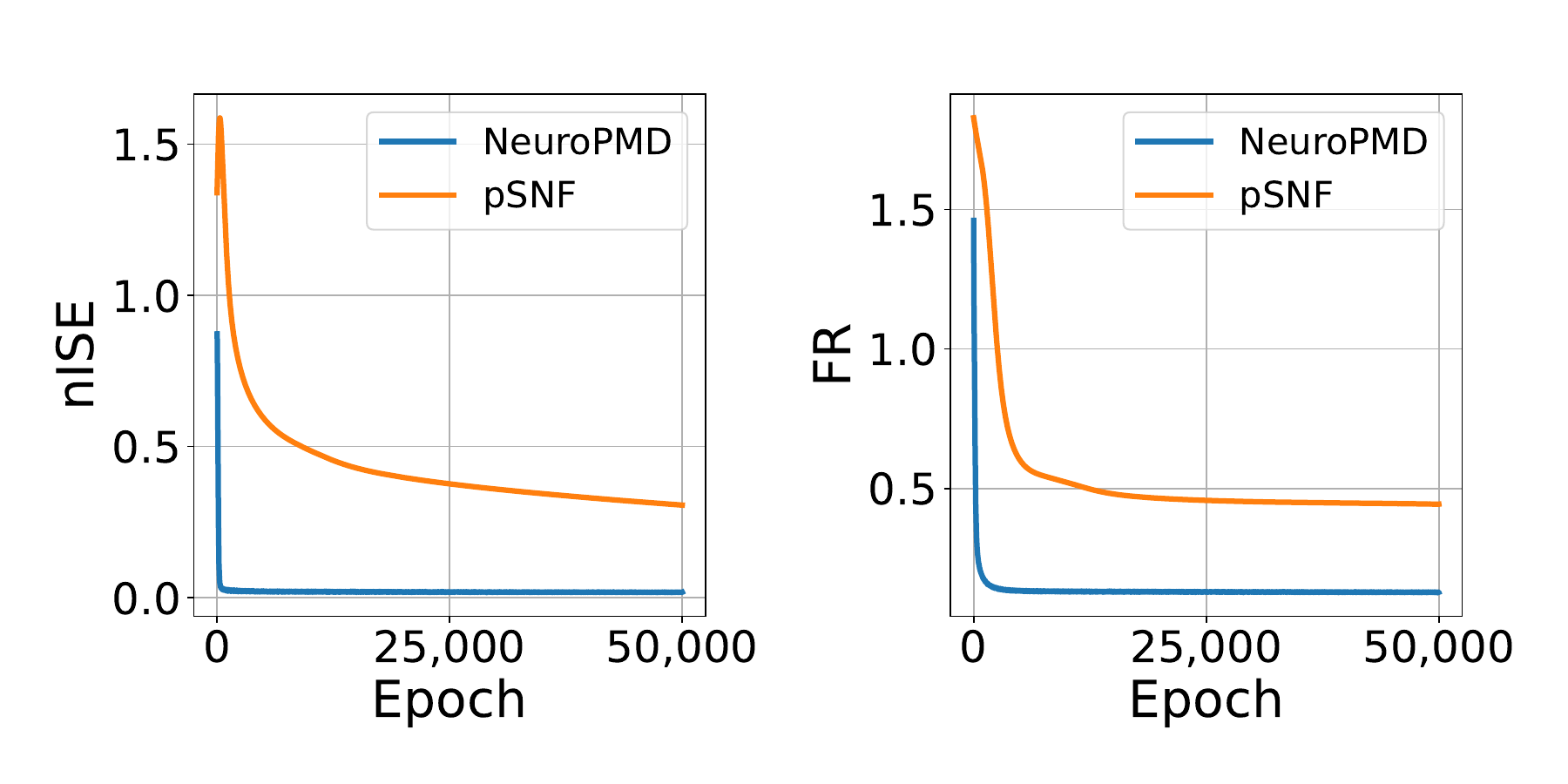}
    \caption{\revised{nISE (left) and FR (right) of NeuroPMD and pSNF estimates as a function of training epoch.}}
    \label{fig:spec_bias_learning_curves}
\end{figure}
\revised{In this section, we perform an empirical comparison of the spectral bias of the two neural network density estimators evaluated in Section~\ref{ssec:simulation_studies}, NeuroPMD and pSNF. We consider the $\mathbb{T}^{2}$ synthetic data and use the same architectures, batch sizes and initializations as outlined in the main text. For NeuroPMD, we again select the penalty hyperparameter $\tau$ using the scheme from Section~\ref{ssec:hyper_param_selection}. We now run the training for $50,000$ epochs of stochastic gradient descent for a fixed common learning rate of $10^{-5}$, for all $20$ replications.}
\par 
\revised{Figure~\ref{fig:spec_bias_learning_curves} shows both error metrics as a function of training epochs. We observe that NeuroPMD achieves significantly faster (algorithmic) convergence compared to pSNF. Figure~\ref{fig:ensity_estims_50k} shows the (log) density  estimates for both methods after $50,000$ epochs. Compared to the corresponding results at $10,000$ epochs in Figure~\ref{fig:sim_T2}, we see that while the pSNF estimates do improve, they still exhibit substantial oversmoothing, leading to severe bias in low density regions. In contrast, the NeuroPMD estimates at $50,000$ are nearly visually identical to those at 10,000 epochs, which is consistent with the convergence behavior shown in Figure~\ref{fig:spec_bias_learning_curves}. 
} 
\begin{figure}[ht!]
    \centering
    \includegraphics[scale=0.75]{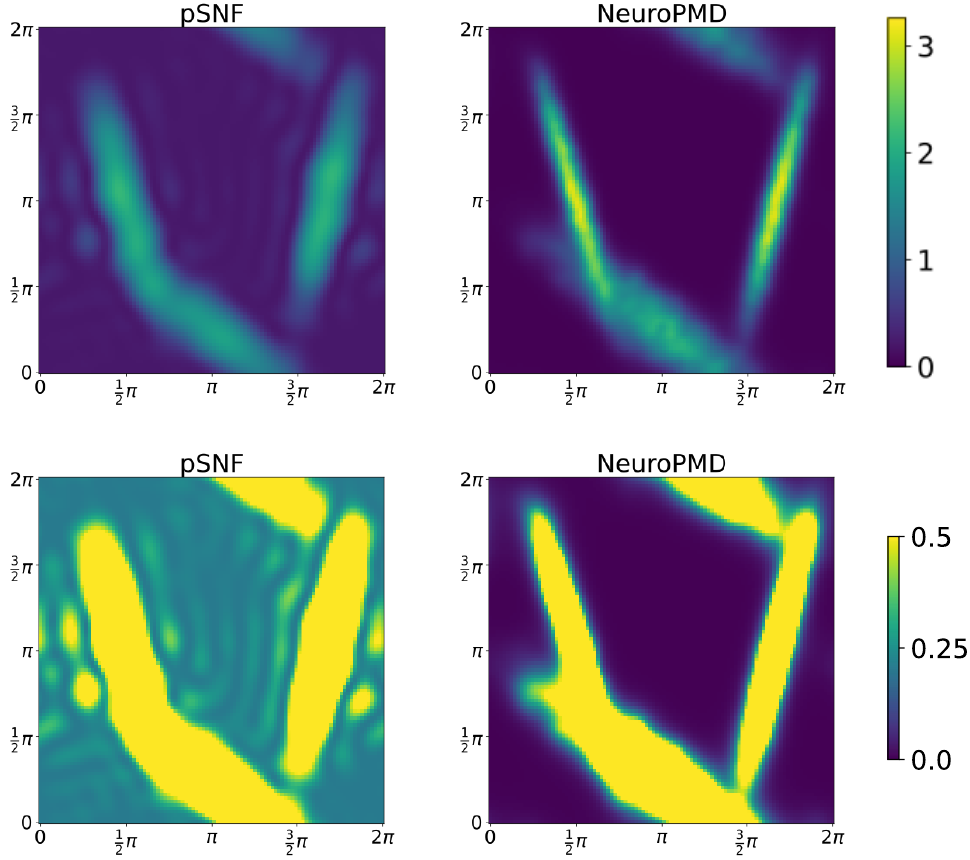}
    \caption{\revised{Log density estimates of NeuroPMD and pSNF after $50,000$ training epochs. Both rows present the same function visualized with different colorbars for enhanced comparison.}}
    \label{fig:ensity_estims_50k}
\end{figure}
\par 
\revised{Figure~\ref{fig:spectral_convergence} compares the convergence behavior of NeuroPMD and pSNF in the spectral domain. The top left panel shows the $2$-d Fourier transform of the true density function over the frequency domain, while the top right hand panel shows a zoomed-in view to highlight the dominant frequency content. The subsequent rows display the corresponding spectral representations for NeuroPMD and pSNF estimates, shown both over the full frequency domain and the zoomed region, for several training epochs. When comparing the zoomed ground truth spectral content to that of NeuroPMD, we see that our method has already captured most of the relevant frequency structure after only $500$ training epochs. In contrast, the spectral content of the pSNF estimates remains  quite different from ground truth even after $50,000$ epochs. Specifically, we notice that after an initial transient phase, the pSNF spectral content is highly concentrated near the origin and expands outward to higher frequencies very slowly (as measured by iteration number), a clear manifestation of the spectral bias. This behavior is not present in the NeuroPMD estimates, which rapidly recover the high frequency content of the target density.}
\par
\revised{Finally, we repeated the experiments with a larger pSNF architecture, increasing the width of both layers to $H_1=H_2=222$ ($50,173$ trainable parameters). We found virtually no difference in the results, indicating that the spectral bias of pSNF is not alleviated by increasing network size.} 
\begin{figure}
    \centering
    \includegraphics[scale=0.43]{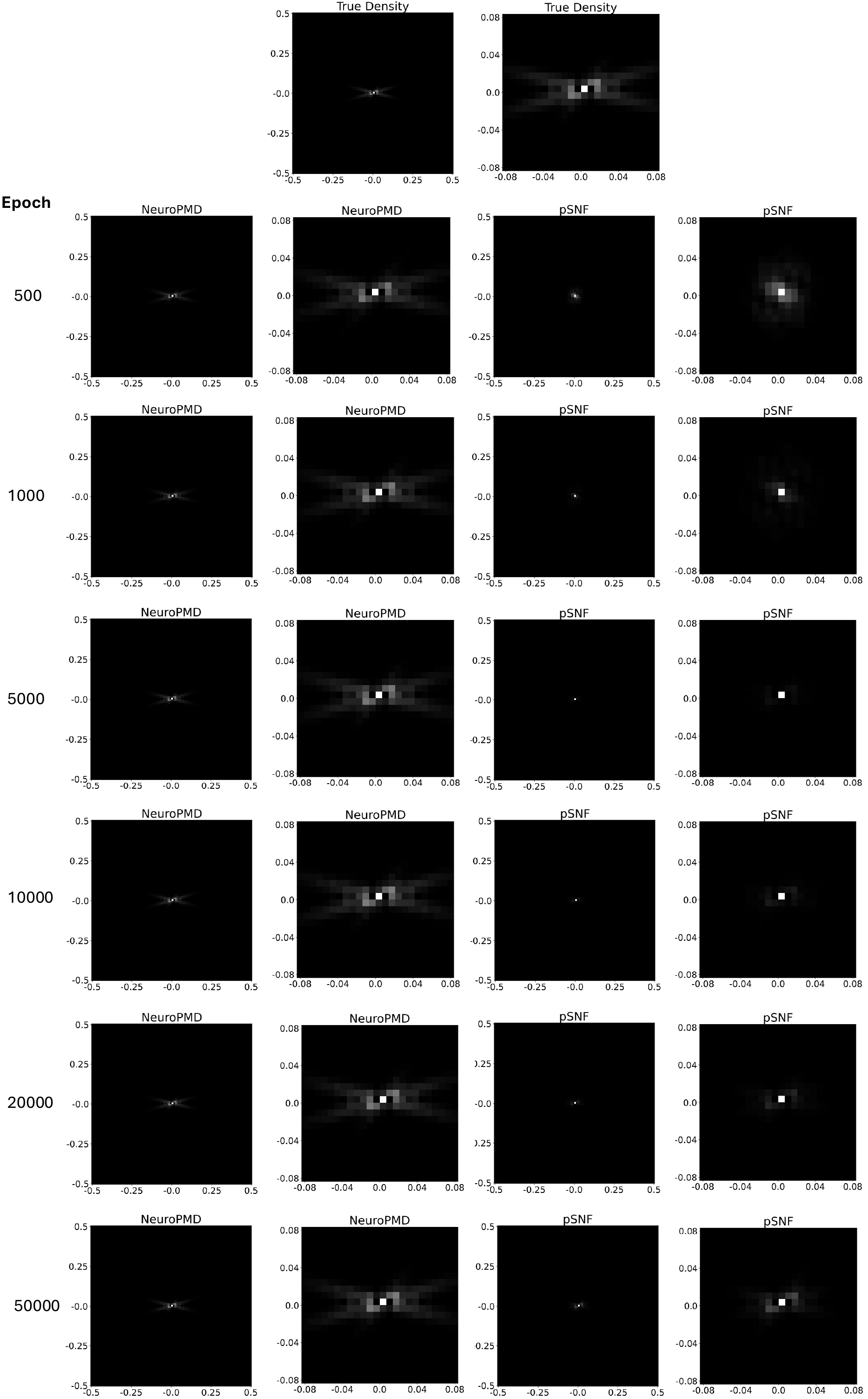}
    \caption{\revised{Spectral representations of the true density (top row) and the corresponding estimates from NeuroPMD and pSNF at various training epochs. Each representation is shown over the frequency domain (cycles/radian), and a zoomed-in view that highlights the dominant frequency content.}}
    \label{fig:spectral_convergence}
\end{figure}

\subsection{Selecting $\tau$ Parameter}\label{ssec:hyperparam}
\begin{figure}
    \centering
    \includegraphics[width=\textwidth]{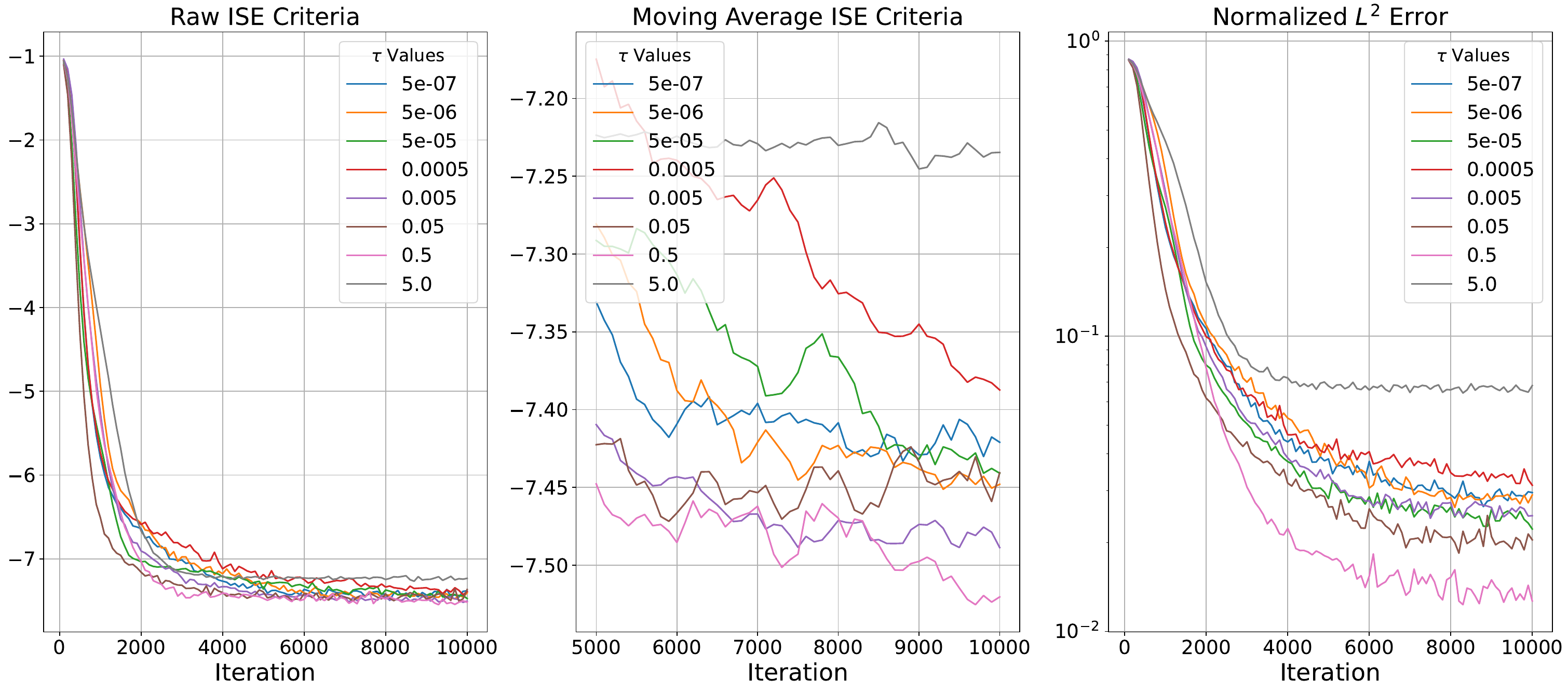}
    \caption{(Left) ISE criteria evaluated every 100 iterations of Algorithm~\ref{alg:sgd} on the validation set. (Middle) Smoothed moving average of the ISE criteria over the final 5,000 iterations. (Right) Normalized $L^2$ error (nISE) sampled along the optimization trajectory.}
    \label{fig:tau_select}
\end{figure}
Figure~\ref{fig:tau_select} displays the optimization trajectory for a randomly selected replication of the $\mathbb{T}^2$ synthetic data experiments described in  Section~\ref{ssec:simulation_studies}. The left plot shows the ISE criteria from Equation~\ref{eqn:l2_selection_criteria}, evaluated on the validation set using density estimates obtained every 100 iterations of Algorithm~\ref{alg:sgd}. The middle plot shows a moving average of the same ISE criteria, calculated with a sliding window of 50 iterations to smooth out stochastic variations, over the final $5,000$ iterations of the optimization trajectory. The right plot displays the normalized ISE criteria (nISE) sampled at the same points along the optimization trajectory. 
\par 
We observe a strong correspondence in the ordering of the $\tau$ curves with respect to the ISE criteria calculated on the validation set (left and middle plots) and the true normalized $L^2$-error with the ground truth density function (right plot). Combined with the strong aggregate simulation results reported in Section~\ref{ssec:simulation_studies}, this indicates strong performance for our hyperparameter selection scheme.

\subsection{Convergence Diagnostics}\label{ssec:convergence_diagnostics}
 
\begin{figure}[ht!]
    \centering
    \includegraphics[width=\textwidth]{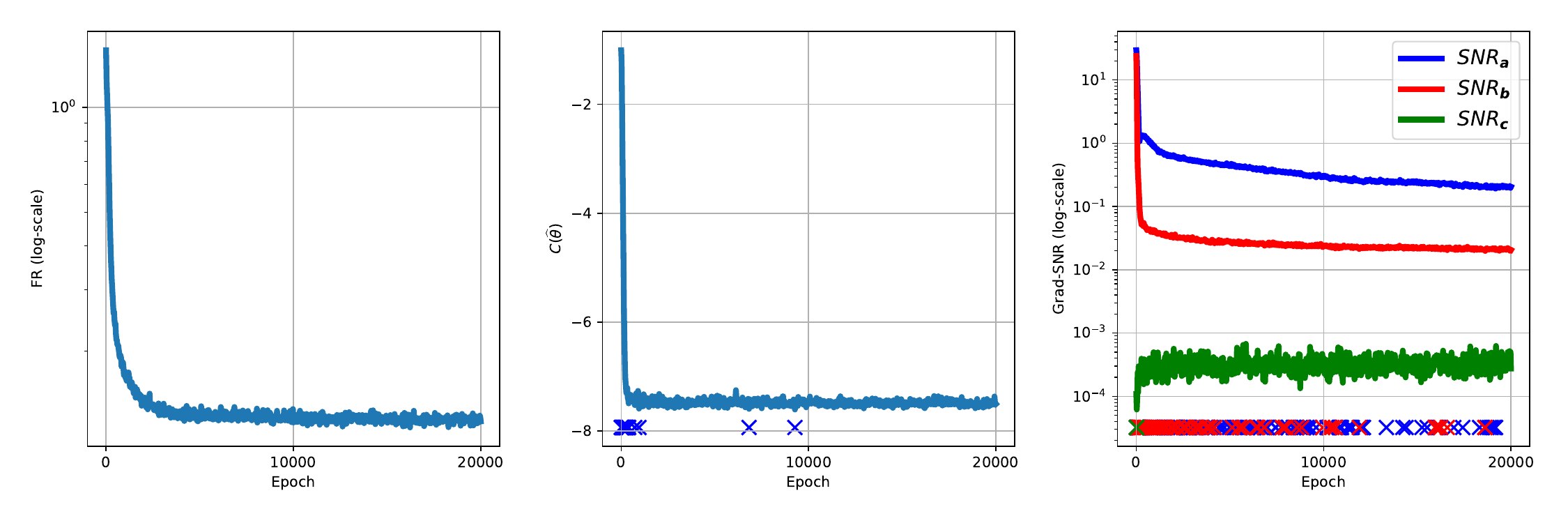}
    \caption{\revised{Convergence diagnostics for a representative NeuroPMD fit on a $\mathbb{T}^2$ replication. (left) Ground truth FR metric (log-scale) versus training epoch. (middle/right) Candidate convergence criteria as a function of training epochs. The colored markers on the x-axis indicate the location of running minima, e.g., for use in early stopping.}}
    \label{fig:convergence_diagnostics}
\end{figure}

\revised{In the experiments in Section~\ref{ssec:simulation_studies}, NeuroPMD was trained using a fixed number of epochs. This was done both because it is common practice in the neural field literature and to avoid any variability in performance between methods that may arise from early stopping.
That said, certain situations may require some from of early stopping, that is, monitoring a convergence criterion during training and halting if the measure has not improved after a prescribed number of epochs (called the patience). To explore this in our setting, we propose two convergence measures as candidate early-stopping rules, and conclude the section with a brief evaluation of their performance. The proposed measures are described below:}
\begin{enumerate}
    \item \revised{The first approach tracks the ISE criterion \eqref{eqn:l2_selection_criteria} on a small held-out validation set.}
    \item \revised{The second approach avoids the need for a validation set and instead leverages the so called gradient signal-to-noise ratio (SNR), defined as
    $$
    \text{SNR}_p := \left[\frac{\partial}{\partial\boldsymbol{\theta}}\mathcal{L}(o,\boldsymbol{\theta})\right]_{p}^{2}/\left[\text{Var}\left(\frac{\partial}{\partial\boldsymbol{\theta}}\mathcal{L}(o,\boldsymbol{\theta}\right)\right]_{pp}
    $$
    for parameter $\boldsymbol{\theta}_p$. This statistic has been used for both early stopping \citeSupp{mahsereci2017early} and network architecture selection \citeSupp{siems2021dynamic}. We adapt this metric to our case as follows: Recall from Prop.~\ref{prop:unbiased_grads}, we have the unbiased gradient estimator 
    $$
    \boldsymbol{g}^{b,q_{1},q_{2},\tau}(\boldsymbol{\theta}) = \boldsymbol{a}^{b}(\boldsymbol{\theta}) - \boldsymbol{b}^{q_{1}}(\boldsymbol{\theta}) - \boldsymbol{c}_{\tau}^{q_{2}}(\boldsymbol{\theta}),
    $$
    where 
    $$
    \begin{aligned}
        \boldsymbol{a}^{b}(\boldsymbol{\theta}) &= \frac{1}{b}\sum_{i=1}^b\frac{\partial}{\partial\boldsymbol{\theta}}v_{\boldsymbol{\theta}}(x_{1i}, \ldots,x_{Di}) := \frac{1}{b}\sum_{i=1}^b \boldsymbol{a}_i(\boldsymbol{\theta})\\
        \boldsymbol{b}^{q_{1}}(\boldsymbol{\theta}) &=  \frac{\text{Vol}(\Omega)}{q_{1}}\sum_{j=1}^{q_{1}}\frac{\partial}{\partial\boldsymbol{\theta}}\exp(v_{\boldsymbol{\theta}}(x_{1j}, \ldots, x_{Dj})) := \frac{\text{Vol}(\Omega)}{q_{1}}\sum_{j=1}^{q_{1}} \boldsymbol{b}_j(\boldsymbol{\theta)}\\
        \boldsymbol{c}^{q_{2}}_{\tau}(\boldsymbol{\theta}) &=  \tau\frac{\text{Vol}(\Omega)}{q_{2}}\sum_{l=1}^{q_{2}}\frac{\partial}{\partial\boldsymbol{\theta}}\left[\Delta_{\Omega}v_{\boldsymbol{\theta}}(x_{1l}, \ldots, x_{Dl})\right]^2 := \tau\frac{\text{Vol}(\Omega)}{q_{2}}\sum_{l=1}^{q_{2}}\boldsymbol{c}_l(\boldsymbol{\theta}).
    \end{aligned}    
    $$
    We estimate the variance 
    $$
    \begin{aligned}
        \text{Var}(\boldsymbol{g}^{b,q_{1},q_{2},\tau}(\boldsymbol{\theta})) &=  \text{Var}(\boldsymbol{a}^{b}(\boldsymbol{\theta}) - \boldsymbol{b}^{q_{1}}(\boldsymbol{\theta}) - \boldsymbol{c}_{\tau}^{q_{2}}(\boldsymbol{\theta}))\\
        &=\text{Var}(\boldsymbol{a}^{b}(\boldsymbol{\theta})) + \text{Var}(\boldsymbol{b}^{q_{1}}(\boldsymbol{\theta})) + \text{Var}(\boldsymbol{c}_{\tau}^{q_{2}}(\boldsymbol{\theta})), \\
    \end{aligned}
    $$
    by using a diagonal approximation formed from the empirical covariances of each component variance in the above sum, i.e.
    $$
    \begin{aligned}
        \widehat{\text{Var}}(\boldsymbol{a}^{b}(\boldsymbol{\theta}))  &= \frac{1}{b-1}\sum_{b=1}^{b}(\boldsymbol{a}_i(\boldsymbol{\theta}) - \bar{\boldsymbol{a}}(\boldsymbol{\theta}))^{\odot 2}\\ 
        \widehat{\text{Var}}(\boldsymbol{b}^{q_{1}}(\boldsymbol{\theta}))  &= \frac{\left[\text{Vol}(\Omega)\right]^2}{q_{1}-1}\sum_{j=1}^{q_{1}}(\boldsymbol{b}_j(\boldsymbol{\theta}) - \bar{\boldsymbol{b}}(\boldsymbol{\theta}))^{\odot 2}  \\
       \widehat{\text{Var}}(\boldsymbol{c}_{\tau}^{q_{2}}(\boldsymbol{\theta})) &=\tau^{2}\frac{\left[\text{Vol}(\Omega)\right]^2}{q_{2}-1}\sum_{l=1}^{q_{2}}(\boldsymbol{c}_l(\boldsymbol{\theta}) - \bar{\boldsymbol{c}}(\boldsymbol{\theta}))^{\odot 2}. 
    \end{aligned}
    $$
    The diagonal approximation is used to avoid storage and computation of the full $\text{dim}(\Theta)^2$-dimensional empirical covariance, which is intractable for deep networks. Finally, we compute the average gradient signal to noise ratio for each component of $\boldsymbol{g}^{b,q_{1},q_{2},\tau}(\boldsymbol{\theta})$ via 
\begin{equation}\label{eqn:grad_snr_per_compnent}
    \begin{aligned}
        \text{SNR}_{\boldsymbol{a}} &= \frac{1}{\dim(\Theta)}\sum_{p=1}^{\dim(\Theta)}\left[\boldsymbol{a}^{b}(\boldsymbol{\theta})\right]_{p}^2/\left[\widehat{\text{Var}}(\boldsymbol{a}^{b}(\boldsymbol{\theta}))\right]_{p}\\
        \text{SNR}_{\boldsymbol{b}} &= \frac{1}{\dim(\Theta)}\sum_{p=1}^{\dim(\Theta)}\left[\boldsymbol{b}^{q_{1}}(\boldsymbol{\theta})\right]_{p}^2/\left[\widehat{\text{Var}}(\boldsymbol{b}^{q_{1}}(\boldsymbol{\theta}))\right]_{p}\\
        \text{SNR}_{\boldsymbol{c}} &= \frac{1}{\dim(\Theta)}\sum_{p=1}^{\dim(\Theta)}\left[\boldsymbol{c}_{\tau}^{q_{2}}(\boldsymbol{\theta})\right]_{p}^2/\left[\widehat{\text{Var}}(\boldsymbol{c}^{q_{2}}_{\tau}(\boldsymbol{\theta}))\right]_{p}\\
    \end{aligned}
    \end{equation}}
\end{enumerate}
\revised{Figure~\ref{fig:convergence_diagnostics} (middle/right panels) displays both proposed criteria as a function of training epoch (evaluated every 25 epochs) for a representative NeuroPMD fit on a randomly selected $\mathbb{T}^2$ experimental replication. The colored markers on the x-axis indicate running minima of each criteria. The left hand plot shows the FR metric (log-scale) vs. epoch to assess ground truth convergence. We see that Criteria 1 attains almost all running minimia within the first $1,000$ epochs and then plateaus. In contrast, both  $\text{SNR}_{\boldsymbol{a}}$ and $\text{SNR}_{\boldsymbol{b}}$ continue to decline much later into the training, resulting in a dense sequence of running mimima until $\approx 10,000$ epochs. Given that the left hand plot shows that NeuroPMD does not fully reach its ground truth FR error floor until at least $7,500-10,000$ epochs, an early stopping rule based on Criteria 1 (with any reasonable patience) would almost certainly halt training prematurely. On the other hand, Criterion 2 is better able to measure the marginal improvements achieved later in training, and thus less likely to terminate too early. Notably, we see that $\text{SNR}_{\boldsymbol{c}}$ remains smaller than either $\text{SNR}_{\boldsymbol{a}}$ or $\text{SNR}_{\boldsymbol{b}}$ and stays relatively flat during training. Given that this measures the gradient SNR of the roughness penalty, this may indicate that the smoothness of the function stabilizes very early during training and does not contribute much to the learning after. This is supported by the frequency analysis in Figure~\ref{fig:spectral_convergence}, which shows the spectral content of the NeuroPMD is relatively stable over most of the optimization trajectory.}

\section{Additional Neural Connectivity Data Analysis}\label{ssec:additional_rda}

Figure~\ref{fig:abcd_3subj_s2} shows the marginal density function estimates, $\tilde{f}_{E}$, for connections originating from the medial orbitofrontal cortex (MOFC) for three randomly selected ABCD subjects. Figure~\ref{fig:abcd_3subj_surf} maps these marginal density functions onto the cortical surface for biological interpretability. While similar overall structures are apparent in all subjects, there is significant between-subject variability in the high-resolution details.

\begin{figure}[!ht]
    \centering
    \includegraphics[width=\textwidth]{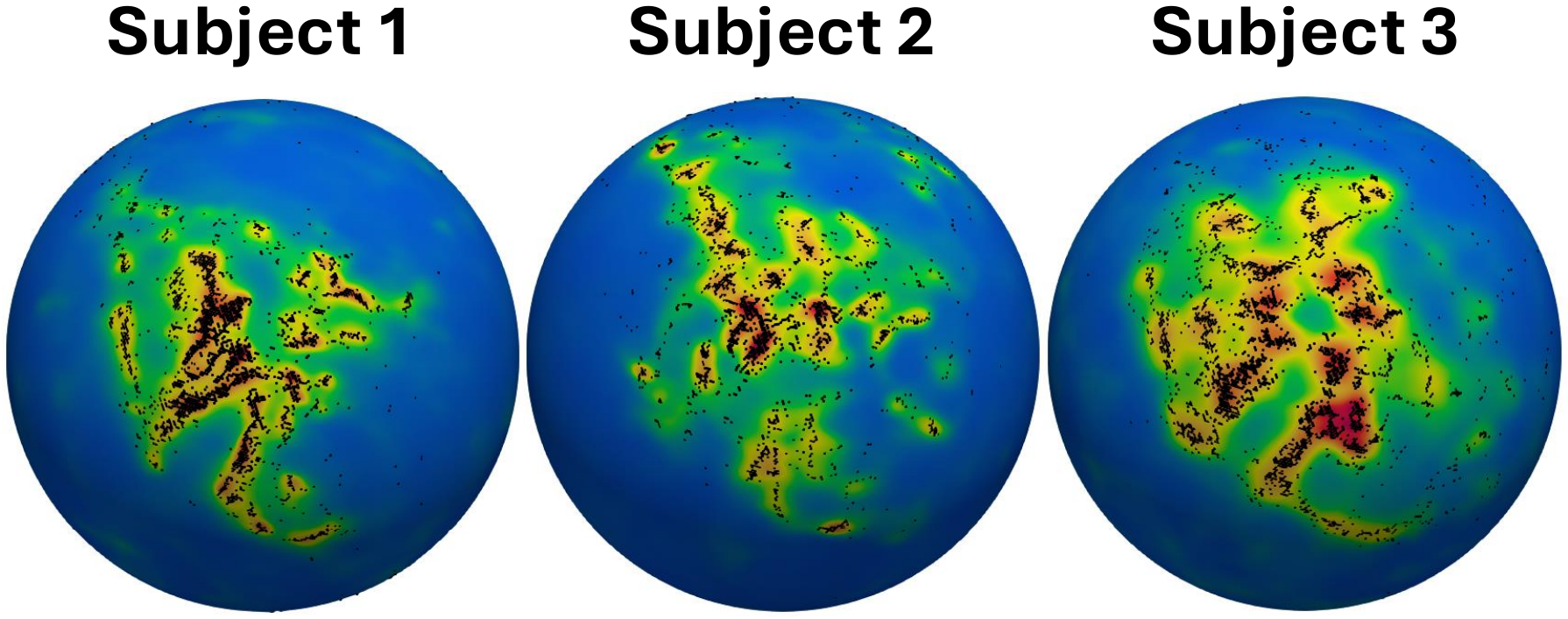}
    \caption{Marginal density function estimates for connections from the medial orbitofrontal cortex (MOFC), generated using our method (NeuroPMD) for three randomly selected ABCD subjects (left, middle and right). Black dots represent the endpoints connected to the MOFC. Color scales are normalized within each image to emphasize differences in the shape of the functions.}
    \label{fig:abcd_3subj_s2}
\end{figure}

\begin{figure}[!ht]
    \centering
    \includegraphics[width=\textwidth]{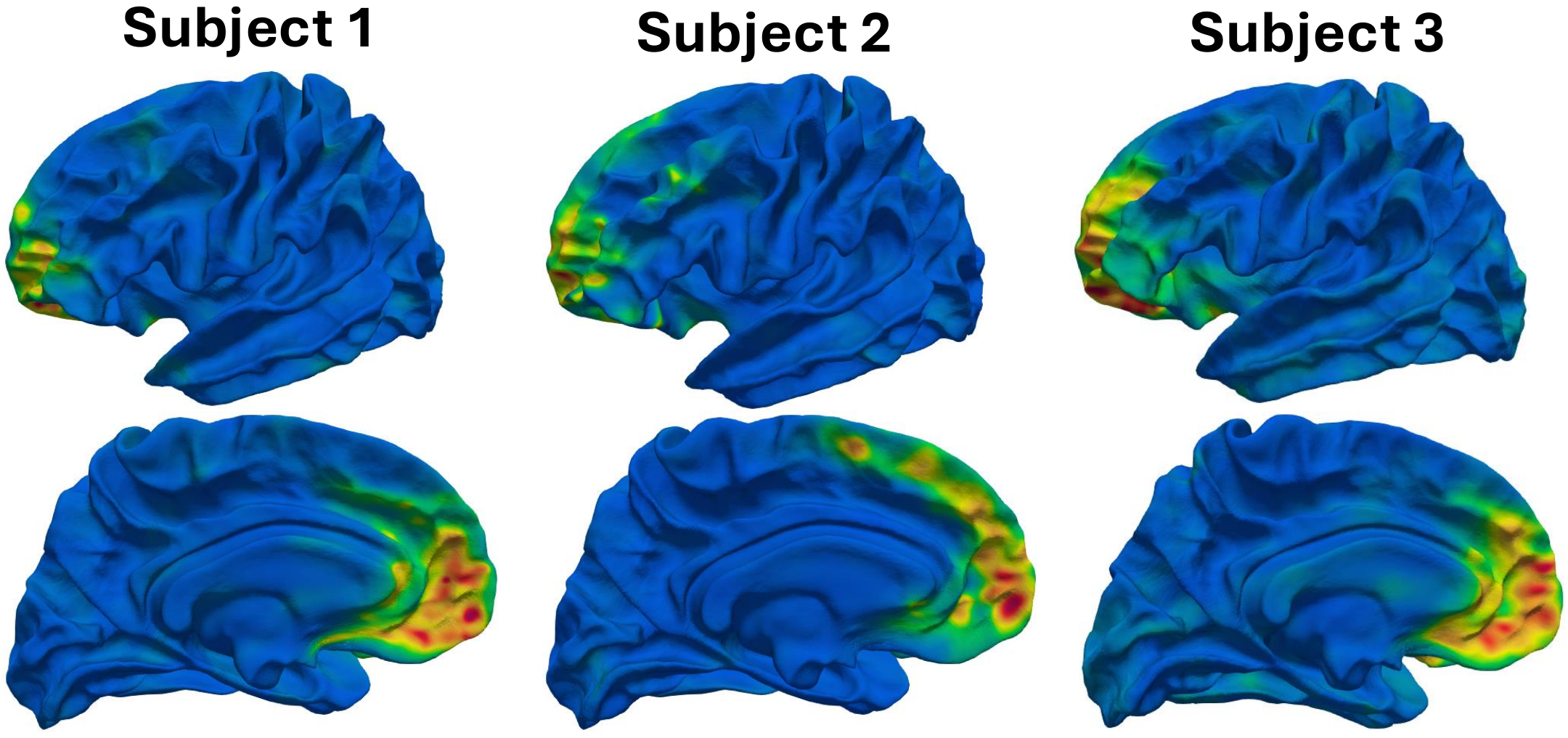}
    \caption{Same as Figure~\ref{fig:abcd_3subj_s2}, with marginal density functions mapped to the cortical surface.}
    \label{fig:abcd_3subj_surf}
\end{figure}

\newpage
\bibliographystyleSupp{apalike}
\bibliographySupp{refs}

\end{document}